\title{A Unifying Framework for Online Optimization with Long-Term Constraints}
\author{%
	Matteo Castiglioni\thanks{DEIB, Politecnico di Milano, \{matteo.castiglioni, alberto.marchesi, giulia.romano, nicola.gatti\}@polimi.it. $^\dagger$ Computing Sciences Department, Bocconi University, andrea.celli2@unibocconi.it.}\\
	Politecnico di Milano\\
	\And 
	Andrea Celli$^\dagger$\\
	Bocconi University\\
	\And 
	Alberto Marchesi$^\ast$\\
	Politecnico di Milano\\
	\And
	Giulia Romano$^\ast$\\
	Politecnico di Milano\\
	\And
	Nicola Gatti$^\ast$\\
	Politecnico di Milano
}
\begin{document}
	
\maketitle

\begin{abstract}
We study online learning problems in which a decision maker has to take a sequence of decisions subject to $m$ \emph{long-term constraints}. The goal of the decision maker is to maximize their total reward, while at the same time achieving small cumulative constraints violation across the $T$ rounds. 
We present the first \emph{best-of-both-world} type algorithm for this general class of problems, with no-regret guarantees both in the case in which rewards and constraints are selected according to an unknown stochastic model, and in the case in which they are selected at each round by an adversary.
Our algorithm is the first to provide guarantees in the adversarial setting with respect to the optimal fixed strategy that satisfies the long-term constraints. In particular, it guarantees a $\rho/(1+\rho)$ fraction of the optimal reward and sublinear regret, where $\rho$ is a feasibility parameter related to the existence of strictly feasible solutions.
Our framework employs traditional regret minimizers as black-box components. Therefore, by instantiating it with an appropriate choice of regret minimizers it can handle the \emph{full-feedback} as well as the \emph{bandit-feedback} setting. 
Moreover, it allows the decision maker to seamlessly handle scenarios with non-convex rewards and constraints. 
We show how our framework can be applied in the context of budget-management mechanisms for repeated auctions in order to guarantee long-term constraints that are not \emph{packing} (\emph{e.g.}, ROI constraints). 
\end{abstract}

\section{Introduction}

We study online learning problems where a decision maker takes decisions over $T$ rounds.
At each round $t$, the decision $\vx_t\in\cX$ is chosen before observing a reward function $f_t$ together with a set of $m$ \emph{time-varying} constraint functions $g_t$.
The decision maker is allowed to make decisions that are \emph{not} feasible, provided that the overall sequence of decisions obeys the \emph{long-term constraints} $\sum_{t=1}^T g_t(\vx_t)\le\mathbf{0}$, up to a small cumulative violation across the $T$ rounds.
The problem becomes that of finding a sequence of decisions $\vx_t$ which guarantees a reward close to that of the best fixed decision in hindsight while satisfying long-term constraints.
This type of framework was first proposed by~\citet{mannor2009online},
and it has numerous applications ranging from wireless communication~\citep{mannor2009online} and multi-objective online classification~\citep{bernstein2010online}, to \emph{safe} online learning~\citep{amodei2016concrete}.

\citet{mannor2009online}~show that guaranteeing sublinear regret
and sublinear cumulative constraints violation is impossible even when $f_t$ and $g_t$ are simple linear functions.
Therefore, previous works either focus on the case in which constraints are generated i.i.d.~according to some unknown stochastic model, without providing any guarantees for the adversarial case, or provide results for adversarially-generated constraints under some strong assumptions on the structure of the problem or using a weaker baseline (a detailed discussion of related works can be found in \Cref{sec:related}).
A few examples in the latter case are \cite{sun2017safety,yi2020distributed,chen2017online,cao2018online}.
In the former setting (\emph{i.e.}, stochastic constraints), \citet{wei2020online} consider a weaker baseline that is feasible for each constraint $g_t$, going against the basic idea of long-term constraints.
A notable exception is the work by~\citet{yu2017online}, who employ the same baseline as ours, and provide an upper bound of $\tilde O(T^{1/2})$ for both regret and constraints violation (see Table~\ref{tab:results}).
We also mention that there are some works studying the problem in which constraints are \emph{static} (see, \emph{e.g.}, \cite{jenatton2016adaptive,mahdavi2012trading,yu2020low,yuan2018online}), or focus on specific types of constraints, such as \emph{knapsack constraints} \cite{badanidiyuru2018bandits,immorlica2019adversarial}. Our framework differs from those works as we deal with \emph{arbitrary} and \emph{time-varying} constraints.
Moreover, it also extends the \emph{online convex optimization} framework introduced by~\citet{zinkevich2003online} by allowing for general non-convex loss functions $f_t$, arbitrary feasibility sets $\cX$, and arbitrary time-varying long-term constraints.

\begin{centering}
	\begin{table*}
		\sisetup{detect-all=true,scientific-notation=fixed,fixed-exponent=0,round-mode=places,
			round-precision=4}
		\setlength{\tabcolsep}{3pt}\small\centering
		\begin{tikzpicture}
		\node[anchor=south west] at (0, 0) {
			\begin{tabular}{ccccccc}
			\toprule
			\multirow{2}{*}{\bf Algorithm} & \multirow{2}{*}{\bf Constr.} & {\bf Non-convex} & \multicolumn{2}{c}{\bf Bound --- constant $\rho$} & \multicolumn{2}{c}{\bf Bound --- arbitrary $\rho$}\\
			\multicolumn{1}{c}{}& \multicolumn{1}{c}{} & $f_t$ {\bf and} $g_t$ &  Reward & Violation & Reward & Violation \\
			\midrule 
			\citet{yu2017online} & \textsc{Stoc} & \xmark &  $\OPT-\tilde O(T^{1/2})$ & $\tilde O(T^{1/2})$ & \unk & \unk \\
			\midrule
			\multirow{2}{*}{\begin{minipage}{.6cm}\centering \bf Ours\end{minipage}}
			& \textsc{Stoc}   &    \cmark  &  $\OPT-\tilde O(T^{1/2})$ &  $\tilde O(T^{1/2})$ &  \cellcolor{gray!20} $\OPT-\tilde O(T^{3/4})$&  \cellcolor{gray!20} $\tilde O(T^{3/4})$\\
			& \textsc{Adv}    &   \cmark  & \cellcolor{gray!20} $\frac{\rho}{1+\rho} \OPT - \tilde{O}\mleft(T^{1/2}\mright)$ &  \cellcolor{gray!20} $\tilde O(T^{1/2})$ & \unk & \unk \\
			\bottomrule
			\end{tabular}};
		\end{tikzpicture}\vspace{-3mm}
		\caption{ Comparison between the performance of our algorithm and previous work using the same baseline as ours. Bounds for settings that were \emph{not} previously tractable are highlighted in gray. $\OPT$ is the reward of the baseline.
		}
		\label{tab:results}
	\end{table*}
\end{centering}

\subsection{Original contributions}

Given the negative result by \citet{mannor2009online}, a natural question is what kind of guarantees we can reach in the adversarial setting, when adopting the standard baseline of the best fixed decision in hindsight satisfying (in expectation) the long-term constraints.
We provide the first positive result going in this direction, by designing a no-$\alpha$-regret algorithm that guarantees a sublinear cumulative constraints violation.
Moreover, we make a step forward in the line of work initiated by \citet{bubeck2012best}, by showing that our algorithm is also the first \emph{best-of-both-worlds} algorithm for problems with arbitrary long-term constraints. This allows our algorithm to guarantee good worst-case performance (adversarial case), while being able to exploit well-behaved problem instances (stochastic case).
The only assumption which we require is the existence of a decision that is strictly feasible with respect to the sequence of constraints. We denote by $\rho$ the ``margin'' by which this decision is strictly feasible (see Section~\ref{sec:prel} for a definition).
At the same time, we show that even without this assumption, we can recover sublinear regret and violation with stochastic constraints.

Previous work usually assumes that $\rho$ is a given \emph{constant}.
In that case, our algorithm matches the guarantees by~\citet{yu2017online} when constraints are generated i.i.d.~according to an unknown distribution, and has no-$\alpha$-regret with $\alpha=\rho/(1+\rho)$ in the adversarial case (see Table~\ref{tab:results}).
Our algorithm only requires a lower bound on the real value of the feasibility parameter $\rho$.
In the stochastic case, the lower bound may even be unknown, and the algorithm can efficiently estimate it from data.  
Moreover, we argue that if $\rho$ is allowed to depend on $T$ and take arbitrarily small values, then there are certain values ($\rho\le T^{-1/4}$), for which any regret bound depending on $1/\rho$ would be useless (\emph{i.e.}, \emph{not} sublinear in $T$, see Section~\ref{sec:alg}).
This setting is usually overlooked by previous work, which assumes $\rho$ to be a given constant. We show that, in the case of an arbitrary feasibility parameter $\rho$, in the stochastic setting our algorithm guarantees an upper bound of $\tilde O(T^{3/4})$ for regret and cumulative constraints violation.  

Our framework employs traditional regret minimizers as black-box components. Therefore, by instantiating it with an appropriate choice of regret minimizers it can handle \emph{full-feedback} as well as \emph{bandit-feedback} settings. In the former case, after playing $\vx_t$, the decision maker gets to observe $f_t$ and $g_t$, while in the latter case only the realized values $f_t(\vx_t)$ and $g_t(\vx_t)$ are observed.
Moreover, this allows the decision maker to seamlessly handle scenarios with non-convex reward and constraints, by employing a suitable regret minimizer for non-convex losses (see, \emph{e.g.}, \cite{suggala2020Online}).
Our algorithm is based on a two-stage approach in which \emph{primal} and \emph{dual} players interact through \emph{Lagrangian games}.
In the first (\emph{play}) phase, the primal player tries to balance out the maximization of their rewards with constraints violation.
In the second (\emph{recovery}) phase, the primal player only makes ``safe decisions'' to avoid violating constraints too much.
It is possible to prove that, in the case of stochastic rewards and constraints, the algorithm never enters phase two.
This property is particularly relevant for budget-pacing mechanisms in repeated auctions, since it is related to how budget is allocated.
Our framework can also be instantiated to perform budget allocation subject to constraints that were previously \emph{not} tractable by traditional mechanisms, such as ROI constraints~\cite{balseiro2019learning,conitzer2021multiplicative}.

\section{Related works}\label{sec:related}

The \emph{online convex optimization} (OCO) framework was first proposed in the machine learning literature by \citet{zinkevich2003online}, and since then it has significantly expanded becoming widely influential in the learning community (see, \emph{e.g.}, \cite{hazan2006efficient,hazan2016introduction,shalev2012online}).
In what follows, we highlight the most relevant works with respect to ours from the literature related to online convex optimization problems with constraints. The analysis and the results are quite different depending on the nature of the constraints, which may be static, \emph{i.e.}, time-invariant, or stochastic/adversarial, \emph{i.e.}, time-variant.

\paragraph{Static constraints.}
\citet{zinkevich2003online} first addressed online convex optimization problems with static constraints by developing a projection-based \emph{online gradient descent} (OGD) algorithm. This method guarantees a regret upper bound of $\mathcal{O}(\sqrt{T})$ for an arbitrary sequence of convex objective functions with bounded subgradients.
\citet{hazan2007logarithmic} showed that this is a tight bound up to constant factors. 
When the set defined by the static constraints is complex, the conventional projection-based online algorithms can be difficult to implement due to the potentially high computational cost of carrying out the projection operation.
To overcome this difficulty, \citet{mahdavi2012trading} propose an efficient algorithm which is an adaptation of OGD achieving a cumulative regret of order $\mathcal{O}(\sqrt{T})$ and a cumulative constraints violation of $\mathcal{O}(T^{{3}/{4}})$.
These bounds are generalized by \citet{jenatton2016adaptive} who propose an algorithm that achieves a cumulative regret of $\mathcal{O}(T ^{\max\{\beta,1-\beta\}})$ and a cumulative violation of $\mathcal{O}(T ^{1-{\beta}/{2}})$, where $\beta \in (0,1)$ is a user-defined parameter.
Other works, such as, \emph{e.g}, \cite{yuan2018online, yu2020low}, propose primal-dual algorithms and achieve better bounds by making further assumptions. 
In particular, \citet{yu2020low} achieve bounds on the cumulative regret of $\mathcal{O}(\sqrt{T})$ and on the cumulative violation of $\mathcal{O}(1)$ by assuming that the Slater's condition holds (\emph{i.e.}, the existence of a strictly feasible solution). 
Then, \citet{yuan2018online} achieve a cumulative regret of $\mathcal{O}(\log{T})$ and a constraint violation of $\mathcal{O}(\sqrt{T})$ under the assumption that the objective functions are strongly convex.
In all the the papers cited above, the regret is computed with respect to the best fixed action in hindsight,that does \emph{not} violate the constraints at each round $t$. This metric is called \emph{static regret}.

\paragraph{Stochastic constraints.}
\citet{yu2017online} consider an online convex optimization framework with stochastic constraints, where the objective functions are chosen by an adversary, and the constraint functions are independent and identically distributed (i.i.d.) over time.
\citet{yu2017online} provide a primal-dual proximal gradient algorithm achieving $\mathcal{O}(\sqrt{T})$ cumulative regret and constraint violation by assuming Slater's condition.
Moreover, \citet{wei2020online} provide bounds of the same order by assuming a less stringent version of the Slater's condition.
As a performance metric, the latter work use \emph{static regret}.

\paragraph{Adversarial constraints.}
Various works in the literature address the online learning setting with adversarial reward and constraint functions. 
This problem was first studied by \citet{mannor2009online} in a two-player game setting. 
The regret is computed with respect to the best strategy from the set of fixed strategies that satisfy the constraints on average. 
\citet{mannor2009online} show that in general it is impossible to compete against the best decision in such a set. 
In particular, they construct a two-player game where there exists a policy for the adversary such that,  among the policies of the player that violate sublinearly the constraints, there is no policy that can achieve the no-regret property in terms of maximizing the player’s reward. 
\citet{sun2017safety} study a similar problem related to contextual bandits and show that also in their setting the decision maker is unable to compete again this baseline by adapting the result from \citet{mannor2009online} to their setting.
To circumvent this issue and provide some guarantees, they rely on a weaker baseline to compute the regret.
In particular, they assume that the decision set is rich enough that, in hindsight, there exist a fixed action that satisfies the constraints at each round: they are using the \emph{static regret} as a performance metric.
Then, by employing static regret as a baseline, \citet{sun2017safety} show that the approaches of \citet{mahdavi2012trading} and \citet{jenatton2016adaptive} can be extended to the online learning framework with adversarial sequential constraints.
Therefore, they provide an algorithm which is a generalization of that from \citet{mahdavi2012trading} achieving sublinear cumulative regret and constraint violations.

\citet{liakopoulos2019cautious} define a new notion of regret, to overcome the impossibility result from \citet{mannor2009online}.  
They introduce a refined regret metric which compares the agent’s incurred losses to those of a \emph{K-benchmark}, which is the best strategy in the hindsight such that, for each time window of length $K$, the long-term constraints over that window are satisfied.
They provide parametric results that are useful to balance the trade-off between regret minimization and long-term residual constraint violation.
Moreover, instead of the Slater's condition they consider a less stringent assumption related to the definition of their regret metric.

A recent line of works such as \cite{chen2017online,chen2018bandit} and \cite{cao2018online} provide some results related to the regret against \emph{dynamic policies}. 
As expected, comparing against a dynamic baseline require very strong assumptions.
\citet{chen2017online} compute a bound on the cumulative dynamic regret which is sublinear in the time horizon $T$ only if the drift of the baseline sequence (\emph{i.e.}, $\sum_{t=1}^T ||\bm{x}^*_{t+1} - \bm{x}^*_t ||$) and that of the constraints (\emph{i.e.}, $\sum_{t=1}^T \max_{\bm{x}} ||[\bm{g}_{t+1}(\bm{x}) - \bm{g}_{t}(\bm{x})]^+||$) are $o(T^{2/3})$.
\citet{cao2018online} consider a bandit feedback setting and, in order to provide sublinear regret and constraint violations, they assume that all the loss functions have uniformly bounded difference (\emph{i.e.}, for each $t$ and $\vx,\vx'\in\cX$, $|f_t(\vx)-f_t(\vx')|\le M$ for some positive constant $M$), and that the drift of the baseline sequence is sublinear.
In other words, the underlying dynamic optimization problems vary \emph{slowly} over time. 
Both \citet{chen2017online} and \citet{cao2018online} need to assume the Slater’s condition.
\citet{yi2020distributed} provide similar results in a distributed online convex optimization setting with adversarial constraints.
They analyze both the case in which the Slater's condition holds, and the case without this assumption.

Others relevant related works, are those studying online learning problems in which the decision maker has to satisfy supply/budget constraints.
In this setting, the decision maker wants to maximize their expected reward without violating a set of $m$ resource constraints.
The process stops at time horizon $T$, or when the total consumption of some resource exceeds its budget.
\citet{badanidiyuru2018bandits} first introduce and solve the \emph{Bandits with Knapsacks} (BwK) framework, in which thay consider bandit feedback, stochastic objective and constraint functions.
Other optimal algorithms for Stochastic BwK were proposed by \citet{agrawal2014bandits, agrawal2019bandits} and by \citet{immorlica2019adversarial}.
The \emph{Adversarial Bandits with Knapsacks} setting was first studied by \citet{immorlica2019adversarial}.
The authors shows that an appropriate baseline is the best fixed distribution over arms.
Achieving no-regret is no longer possible under this baseline and, therefore, they provide no-$\alpha$-regret guarantees for their algorithm.

We remark that in this paper we are able to handle more general constraints than \citet{immorlica2019adversarial}, which can deal only with budget constraints. Moreover, we can compete with a baseline stronger than the static regret used by \citet{sun2017safety}, without needing the strong assumptions required, for instance, by \citet{cao2018online}.

\section{Preliminaries}\label{sec:prel}

The decision maker has a non-empty set of available strategies $\cX$ (this set may be non-convex, integral, and even non-compact).
In each round $t\in[T]$,\footnote{In this work, we denote by $[x]$ the set $\left\{ 1,\ldots, x \right\}$ of the first $x$ natural numbers.} the decision maker first chooses $\vx_t\in\cX$, and the environment selects a reward function $f_t:\cX\to[0,1]$ and a constraint function $g_t:\cX\to[-1,1]^{m}$ conditioned on the past history of play up to time $t-1$ (\emph{i.e.}, the environment chooses $f_t$ and $g_t$ without knowledge of $\vx_t$). Notice that both $f_t$ and $g_t$ need \emph{not} be convex.
The latter specifies a set of $m$ constraints of the form $g_t(\vx) \leq \vzero $, with $g_{t,i}(\vx) \leq 0$ denoting the $i$-th constraint.\footnote{Focusing on the case $g_t(\vx) \leq \vzero $ is w.l.o.g. since any set of constraints can be represented in such a form.}
In the following, we denote as $\mathcal{F}$, respectively $\mathcal{G}$, the set of all the possible $f_t$, respectively $g_t$, functions (\emph{e.g.}, $\mathcal{F}$ and $\mathcal{G}$ may contain all the Lipschitz-continuous functions defined over $\cX$).
At each round $t \in [T]$, the decision maker can condition their decision on prior feedbacks and on the sequence of prior decisions $\vx_1,\ldots,\vx_{t-1}$, but no information about future rewards and constraint functions is available.

\subsection{Strong duality through strategy mixtures} \label{sec:duality}

Next, we define the optimization problem (Problem~\ref{eq:opt lp gen}) which is used to define the baselines against which we compare the performances of the decision maker.
Such a problem involves probabilistic mixtures of strategies in $\cX$, which are crucial in order to recover strong duality.\footnote{The optimal fixed strategy mixture provides an arguably stronger baseline than the optimal fixed strategy. In stochastic settings, this baseline is related to the best dynamic policy. In particular, if we consider the case in which the observed functions are defined as the average of functions $f_t$ and $g_t$ across the $T$ rounds, then the optimal mixture provides the same utility as the best dynamic policy (see \cite{bada2018bandit} for a similar result).}

First, we introduce the set of probability measures on the Borel sets of $\cX$.
We refer to such a set as the set of \emph{strategy mixtures}, denoted as $\Xi$.
We endow $\cX$ with the Lebesgue $\sigma$-algebra, and we assume that all the functions in $\mathcal{F}$ and $\mathcal{G}$ are measurable with respect to every probability measure $\vxi \in \Xi$.
This ensures that the various expectations taken are well-defined, since the functions are assumed to be bounded above, and they are therefore integrable.
In the following, for the ease of presentation and with a slight abuse of notation, whenever we write a $\vxi \in \Xi$ in place of an $\vx \in \cX$, we mean that we are taking the expectation with respect to the probability measure $\vxi$.
For instance, given $f \in \mathcal{F}$ and $g \in \mathcal{G}$, we have that $f(\vxi) = \E_{\vx \sim \vxi} f(\vx)$ and $g(\vxi) = \E_{\vx \sim \vxi} g(\vx)$.

Then, given two functions $f \in \mathcal{F}$ and $g \in \mathcal{G}$, we define the following optimization problem, which chooses the strategy mixture $\vxi \in \Xi$ that maximizes the expected reward encoded by $f$, while guaranteeing that the constraints encoded by $g$ are satisfied in expectation. 
\begin{equation}\label{eq:opt lp gen}
	\tag{$\cP_{f,g}$}
	\OPT_{f,g}\defeq\mleft\{\begin{array}{lll}
		\displaystyle
		\sup_{\vxi \in \Xi} & f(\vxi) & \text{s.t.} \\
		&  g(\vxi)\le 0.
	\end{array}\mright.
\end{equation}

We denote by $d_g \in [-1,1]$ the largest possible value for which there exists a strategy mixture $\vxi \in \Xi$ satisfying the constraints $g(\vxi)\le 0$ by a margin of at least $d_g$.
Formally, 
\begin{equation}\label{eq:dg}
d_g \defeq  \sup_{\vxi \in \Xi} \min_{i \in [m]} - g_i(\vxi).
\end{equation}
In order to ensure that $\OPT_{f,g}$ is always well defined, we assume that it is always the case that $d_g \geq 0$.
Notice that, if $d_g > 0$, then Problem~\ref{eq:opt lp gen} satisfies Slater's condition.

In the following, we prove some auxiliary results relating to Problem~\ref{eq:opt lp gen} that will be useful in the rest of the paper.
First, we introduce a Lagrangian relaxation of the problem.
\begin{definition}[Lagrangian Function]\label{def:lagrangian}
	Given two arbitrary functions $f \in \mathcal{F}$ and $g \in \mathcal{G}$, the \emph{Lagrangian function} $\cL_{f,g}:\Xi\times\R_{\ge 0}^m \to\R$ of Problem~\ref{eq:opt lp gen} is defined as
	\[
	\cL_{f,g}(\vxi,\vlambda)\defeq  f(\vxi) - \langle\vlambda, g(\vxi)\rangle.
	\]
\end{definition}
If Problem~\ref{eq:opt lp gen} satisfies Slater's condition, then Theorem~1 of Chapter~8.3 in~\citep{luenberger1997optimization} readily gives us that strong duality holds even if $f$ and $ g$ are arbitrary non-convex functions.
Formally:
\begin{corollary}%
	\label{thm:duality slater}
	Given $f \in \mathcal{F}$ and $g \in \mathcal{G}$ such that $d_g>0$, it holds
	\[
	\sup_{\vxi  \in \Xi } \, \inf_{\vlambda \in \R_{\ge 0}^m} \, \cL_{f,g}(\vxi,\vlambda)= \inf_{\vlambda \in \R_{\ge 0}^m} \, \sup_{\vxi  \in \Xi } \,  \cL_{f,g}(\vxi,\vlambda)=\textnormal{$\OPT_{f,g}$}.
	\]
\end{corollary}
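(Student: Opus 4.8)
The plan is to exploit the fact that, although $f$ and $g$ may be non-convex on $\cX$, passing to strategy mixtures \emph{linearizes} the problem. Since $f(\vxi) = \E_{\vx \sim \vxi} f(\vx)$ and $g(\vxi) = \E_{\vx \sim \vxi} g(\vx)$ are integrals against the measure $\vxi$, both the objective $\vxi \mapsto f(\vxi)$ and each constraint $\vxi \mapsto g_i(\vxi)$ are affine in $\vxi$, and the set $\Xi$ of probability measures is convex. Hence Problem~\ref{eq:opt lp gen} is a convex program (maximize a concave functional subject to convex constraints) over a convex domain, which is precisely the regime in which Lagrangian strong duality holds. This is the reason the statement is phrased as a corollary of the general duality theorem in~\citep{luenberger1997optimization} rather than proved from scratch.

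First I would dispatch the \emph{primal} equality $\sup_{\vxi} \inf_{\vlambda} \cL_{f,g}(\vxi,\vlambda) = \OPT_{f,g}$ directly, without invoking convexity. For a fixed $\vxi$, the inner infimum $\inf_{\vlambda \in \R^m_{\ge 0}} \mleft( f(\vxi) - \langle \vlambda, g(\vxi) \rangle \mright)$ evaluates to $f(\vxi)$ when $g(\vxi) \le \vzero$ (the minimizing choice is $\vlambda = \vzero$, since each summand $\lambda_i(-g_i(\vxi))$ is then nonnegative) and to $-\infty$ when some coordinate satisfies $g_i(\vxi) > 0$ (send $\lambda_i \to \infty$). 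Taking the supremum over $\vxi \in \Xi$ therefore leaves exactly the feasible strategy mixtures, recovering $\OPT_{f,g}$. This step is elementary and uses only the definition of the Lagrangian.

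The substance lies in the \emph{dual} equality $\inf_{\vlambda} \sup_{\vxi} \cL_{f,g}(\vxi,\vlambda) = \OPT_{f,g}$. Weak duality (the max-min inequality) already gives $\le$ between the two saddle values, so it suffices to certify that strong duality holds. Here I would invoke Theorem~1 of Chapter~8.3 in~\citep{luenberger1997optimization}, after checking its hypotheses: the objective is a concave (indeed affine) real functional on the convex set $\Xi$, the constraint map $\vxi \mapsto g(\vxi)$ takes values in the finite-dimensional space $\R^m$ ordered by the nonnegative orthant, and the standing measurability and boundedness assumptions of Section~\ref{sec:prel} ensure the relevant integrals are well-defined and finite. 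Crucially, $d_g > 0$ supplies the \emph{regular point} the theorem requires: by the definition of $d_g$ in~\eqref{eq:dg} there is a mixture $\vxi$ with $g_i(\vxi) \le -d_g < 0$ for every $i \in [m]$, i.e.\ a strictly interior feasible point, which is exactly Slater's condition. The theorem then yields a Lagrange multiplier $\vlambda \in \R^m_{\ge 0}$ attaining the primal value, closing the chain of equalities.

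The main obstacle is not conceptual but a matter of care: one must confirm that the infinite-dimensional strong-duality theorem genuinely applies to the space of strategy mixtures. The delicate points are that $\Xi$, as a set of probability measures over a possibly non-compact $\cX$, is a legitimate convex subset of the vector space on which the cited theorem operates, and that the reduction to the finite-dimensional constraint cone $\R^m_{\ge 0}$ is clean. Both are handled by the standing assumptions that every $f \in \mathcal{F}$ and $g \in \mathcal{G}$ is measurable and bounded, guaranteeing integrability and the finiteness needed for the expectations $f(\vxi)$ and $g(\vxi)$; once these are in place and Slater's condition is verified as above, the verification is routine.
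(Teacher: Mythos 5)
Your proposal is correct and takes essentially the same route as the paper: the paper also treats this statement as a direct corollary of Theorem~1 of Chapter~8.3 in \citep{luenberger1997optimization}, justified by the fact that passing to strategy mixtures makes the objective and constraints affine over the convex set $\Xi$ and that $d_g>0$ yields Slater's condition. Your write-up merely makes explicit the details the paper leaves implicit (the elementary primal equality, weak duality, and the verification of Luenberger's hypotheses), with the only minor caveat that since $d_g$ is a supremum one should take a mixture with $g_i(\vxi)\le -d_g+\epsilon<0$ rather than $-d_g$ exactly, a point the paper itself handles via a footnote.
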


Next, we show that, if $d_g>0$, then strong duality holds even when we restrict the admissible dual vectors $\vlambda \in \R_{\ge 0}^m$ to the set $\cD_{d_g}$, where, for any $q \in \mathbb{R}_{>0}$, we let
\(
	\cD_q \defeq\mleft\{\vlambda\in\R_{\geq 0}^m :\|\vlambda\|_1 \leq 1/q\mright\} 
\)
(omitted proofs can be found in Appendix \ref{sec:omitted proofs}).
\begin{restatable}{theorem}{theoremDuality}\label{thm:duality_restricted}
	Given $f \in \mathcal{F}$ and $g \in \mathcal{G}$ such that $d_g>0$, it holds
	\[
		\sup_{\vxi  \in \Xi } \inf_{\vlambda\in \cD_{d_g}} \, \cL_{f,g}(\vxi,\vlambda)= \inf_{\vlambda\in\cD_{d_g}} \sup_{\vxi  \in \Xi} \, \cL_{f,g}(\vxi,\vlambda)=\textnormal{$\OPT_{f,g}$}.
	\]
\end{restatable}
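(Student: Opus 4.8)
The plan is to sandwich both restricted quantities between $\OPT_{f,g}$ and itself, using the unrestricted strong duality already granted by Corollary~\ref{thm:duality slater}. Write $h(\vlambda)\defeq\sup_{\vxi\in\Xi}\cL_{f,g}(\vxi,\vlambda)$ for the dual function. Since $\cD_{d_g}\subseteq\R_{\ge 0}^m$, restricting the inner infimum to the smaller set can only raise it, so for every $\vxi$ we have $\inf_{\vlambda\in\cD_{d_g}}\cL_{f,g}(\vxi,\vlambda)\ge\inf_{\vlambda\in\R_{\ge 0}^m}\cL_{f,g}(\vxi,\vlambda)$; taking $\sup_{\vxi}$ and invoking Corollary~\ref{thm:duality slater} gives $\sup_{\vxi}\inf_{\vlambda\in\cD_{d_g}}\cL_{f,g}\ge\OPT_{f,g}$. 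Combined with the generic weak inequality $\sup_{\vxi}\inf_{\vlambda\in\cD_{d_g}}\cL_{f,g}\le\inf_{\vlambda\in\cD_{d_g}}\sup_{\vxi}\cL_{f,g}=\inf_{\vlambda\in\cD_{d_g}}h(\vlambda)$, it therefore suffices to prove the single inequality $\inf_{\vlambda\in\cD_{d_g}}h(\vlambda)\le\OPT_{f,g}$: this pins the whole chain to $\OPT_{f,g}$ and yields both claimed equalities simultaneously.

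To establish that inequality I would show the cheapest dual vectors already live inside $\cD_{d_g}$, via a growth bound on $h$. For each $\delta\in(0,d_g)$ pick a mixture $\vxi_\delta\in\Xi$ nearly attaining the margin in~\eqref{eq:dg}, i.e. $-g_i(\vxi_\delta)\ge d_g-\delta$ for all $i\in[m]$. Using $f(\vxi_\delta)\ge 0$ (as $f$ takes values in $[0,1]$) together with $\vlambda\ge\vzero$, I obtain $h(\vlambda)\ge f(\vxi_\delta)-\langle\vlambda,g(\vxi_\delta)\rangle\ge(d_g-\delta)\|\vlambda\|_1$. Since the left-hand side does not depend on $\delta$, letting $\delta\to 0^+$ gives $h(\vlambda)\ge d_g\|\vlambda\|_1$ for every $\vlambda\in\R_{\ge 0}^m$. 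In particular, any $\vlambda\notin\cD_{d_g}$, which satisfies $\|\vlambda\|_1>1/d_g$, obeys $h(\vlambda)>1$.

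With this growth bound in hand, I would split the unrestricted dual infimum across $\cD_{d_g}$ and its complement. On one hand $\vzero\in\cD_{d_g}$ and $\cL_{f,g}(\vxi,\vzero)=f(\vxi)$, so $\inf_{\vlambda\in\cD_{d_g}}h(\vlambda)\le h(\vzero)=\sup_{\vxi}f(\vxi)\le 1$. On the other hand, every $\vlambda\in\R_{\ge 0}^m\setminus\cD_{d_g}$ has $h(\vlambda)>1$. Therefore $\inf_{\vlambda\in\R_{\ge 0}^m\setminus\cD_{d_g}}h(\vlambda)\ge 1\ge\inf_{\vlambda\in\cD_{d_g}}h(\vlambda)$, and since the infimum over $\R_{\ge 0}^m$ is the minimum of the infima over these two pieces, we conclude $\inf_{\vlambda\in\cD_{d_g}}h(\vlambda)=\inf_{\vlambda\in\R_{\ge 0}^m}h(\vlambda)=\OPT_{f,g}$, the last equality by Corollary~\ref{thm:duality slater}. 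This is exactly the inequality needed to collapse the sandwich.

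The main obstacle is purely the handling of the suprema: the margin $d_g$ in~\eqref{eq:dg} is a supremum that may fail to be attained, and the dual infimum likewise need not be achieved, so the growth bound must be phrased through the $\delta$-approximate maximizers $\vxi_\delta$ and then passed to the limit, rather than assuming an exact strictly feasible optimizer or an exact dual minimizer. Everything else — the two monotonicity facts about restricting an $\inf$, weak duality for the restricted min-max, and the bound $\OPT_{f,g}\le 1$ coming from $f\in[0,1]$ — is routine.
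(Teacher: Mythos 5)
Your proposal is correct and follows essentially the same route as the paper's proof: both show that any $\vlambda$ outside $\cD_{d_g}$ forces $\sup_{\vxi}\cL_{f,g}(\vxi,\vlambda)>1$ via a (near-)strictly-feasible mixture, note that $\vlambda=\vzero$ gives value at most $1$, conclude that the restricted and unrestricted dual infima coincide, and then close the sandwich using weak duality and Corollary~\ref{thm:duality slater}. The only difference is presentational: you make the non-attainment of the supremum in~\eqref{eq:dg} explicit through the $\delta$-approximate mixtures and a limit, which the paper handles in a footnote.
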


\subsection{Stochastic vs. adversarial: baselines and feasibility}

We consider several settings, differing in how functions $f_t$ and $g_t$ are selected, either \emph{stochastically} or \emph{adversarially}.
We say that functions $f_t$ (respectively $g_t$) are selected stochastically, when they are independently drawn according to a given probability measure $\mu_{\cF}$ over $\mathcal{F}$ (respectively  $\mu_{\cG}$ over $\mathcal{G}$).
Instead, we say that functions $f_t$ (respectively $g_t$) are selected adversarially if each $f_t$ (respectively $g_t$) is chosen by an adversary based on the sequence of prior decisions, namely $\vx_1, \ldots, \vx_{t-1}$.

Consistently with previous work (see, \emph{e.g.},~\citep{mannor2009online}), we compare the performance of the decision maker (in terms of reward cumulated over the $T$ rounds) against the baseline $T \, \OPT_{\bar f, \bar g}$ (as defined by Problem~\hyperref[eq:opt lp gen]{$\cP_{\bar f,\bar g}$}), where $\bar f$ and $\bar g$ are suitably-defined functions.
In particular:
\begin{itemize}
	\item When functions $f_t$, respectively $g_t$, are selected stochastically, then we define function $\bar f$, respectively $\bar g$, so that $\bar f(\vx)\defeq \mathbb{E}_{f \sim \mu_\cF} \mleft[f(\vx)\mright] $, respectively $  \bar  g(\vx)\defeq \mathbb{E}_{g \sim \mu_\cG} \mleft[g(\vx)\mright] $.
	\item When functions $f_t$, respectively $g_t$, are selected adversarially, then we define function $\bar f$, respectively $\bar g$, so that $\bar f(\vx)\defeq \frac{1}{T}\sum_{t=1}^T f_t(\vx) $, respectively $\bar  g(\vx)\defeq \frac{1}{T}\sum_{t=1}^T g_t(\vx)$.
\end{itemize}
Intuitively, in the stochastic case, the baseline is instantiated with an expectation of functions taken with respect to the probability measure $\mu_\cF$ (respectively $\mu_\cG$).
Instead, in the adversarial case, the baseline uses the average of functions $f_t$ (respectively $g_t$) observed over the $T$ rounds.

Let us remark that, when the set $\cX$ is compact convex and functions $f_t$ and $g_t$ are convex, then Problem~\hyperref[eq:opt lp gen]{$\cP_{\bar f,\bar g}$} defining our baselines can be equivalently re-written by using strategies $\vx \in \cX$ rather than strategy mixtures $\vxi \in \Xi$, since there always exists an optimal solution to Problem~\hyperref[eq:opt lp gen]{$\cP_{\bar f,\bar g}$} that places all the probability mass on a single strategy.

Our goal is to design online algorithms for the decision maker that output a sequence of decisions $\vx_1,\ldots,\vx_T$ such that both the \emph{cumulative regret} with respect to the performance of the baseline, defined as $R^T \coloneqq T \, \OPT_{\bar f, \bar g} - \sum_{t=1}^T f_t(\vx_t)$, and the \emph{cumulative constraints violation}, defined as $V^T \coloneqq \max_{i \in [m]} \sum_{t =1}^T g_{t,i}(\vx_t)$, grow sublinearly in the number of rounds $T$.

In conclusion, we introduce a problem-specific parameter that is strictly related to the feasibility of Problem~\hyperref[eq:opt lp gen]{$\cP_{\bar f,\bar g}$}.
We call it the \emph{feasibility parameter} $\rho \in \R$, which is formally defined as follows:
	\begin{itemize}
		\item When functions $g_t$ are selected stochastically, $\rho\defeq  \sup_{\vxi \in \Xi} \min_{i \in [m]} - \bar g_i(\vxi)$.
		\item When functions $g_t$ are selected adversarially, $  \rho \defeq  \sup_{\vxi \in \Xi} \min_{t \in [T]} \min_{i \in [m]} -  g_{t,i}(\vxi)$.
	\end{itemize}
Intuitively, in the stochastic case, $\rho$ is equal to $d_{\bar g}$, while in the adversarial case it is computed similarly, but considering the worst case with respect to the functions $g_t$ observed at each round $t$. 
Notice that, when $\rho > 0$, Slater's condition is satisfied for Problem~\hyperref[eq:opt lp gen]{$\cP_{\bar f,\bar g}$}.

In the following, we denote by $\vxi^* \in \Xi$ a strategy mixture that is optimal for Problem~\hyperref[eq:opt lp gen]{$\cP_{\bar f,\bar g}$}.
Moreover, we always assume that functions $f_t$ and $g_t$ are such that Problem~\hyperref[eq:opt lp gen]{$\cP_{\bar f,\bar g}$} is feasible, and we let $\vxi^\circ \in \Xi$ be the \emph{feasible} strategy mixture that is optimal for the problem defining $\rho$.\footnote{Notice that $\vxi^*$ and $\vxi^\circ$ may \emph{not} be well defined in all the cases in which the problem that defines them does \emph{not} admit a maximum. Nevertheless, in such cases, we assume that $\vxi^*$ (or $\vxi^\circ$) is a strategy mixture arbitrarily ``close'' to the supremum, so that all of our results continue to hold up to negligible additive approximations that are dominated by other approximation factors, and we can safely ignore them for the ease of exposition.}

\subsection{Regret minimizers}

A \emph{regret minimizer} (RM) for a set $\cW$ is an abstract model for a decision maker that repeatedly interacts with a black-box environment. 
At each $t$, a RM performs two operations: (i) \nextelement, which outputs an element $\vw_t \in \cW$; and (ii) \observeutility, which updates the internal state of the RM using the feedback received from the environment.
This is defined in terms of a utility function $u_t:\cW\to [a,b]$ having range $[a,b]\subseteq \mathbb{R}$, with $u_t$ possibly depending adversarially on the sequence of outputs $\vw_1,\ldots,\vw_{t-1}$.
The objective of the RM is to output a sequence $\vw_1,\ldots,\vw_T$ of points in $\cW$ so that its \emph{cumulative regret}, defined as $ \sup_{\vw \in\cW}\sum_{t=1}^T\mleft(u_t(\vw)-u_{t}(\vw_t)\mright)$,
grows asymptotically sublinearly in $T$.
See~\citep{cesa2006prediction} for a review of the various RMs available in the literature.

For the ease of presentation, we introduce the concept of \emph{regret minimizer constructor}, which is a procedure, say $\ini(\cW,[a,b],\fail)$, that builds a RM on the basis of the three parameters given as input.
In particular, the procedure returns a RM instantiated for the set $\cW$, working with utility functions having range $[a,b]$, and such that its cumulative regret is guaranteed to grow sublinearly in the time horizon $T$ with probability at least $1-\fail$.

\section{A unifying meta-algorithm}\label{sec:alg}

In this section, we present our meta-algorithm.
Its core idea is to instantiate suitable pairs of RMs, where one is working in the domain $\cX$ of primal variables and the other in a suitable subset of the domain $\mathbb{R}^m_+$ of dual variables.
At each round $t \in [T]$, the algorithm makes the RMs ``play'' against each other in a  \emph{Lagrangian game}, where the utility functions observed by them are related to the Lagrangian function $\cL_{f_t,g_t}(\vx,\vlambda)$ of Problem \hyperref[eq:opt lp gen]{$\cP_{f_t,g_t}$}.\footnote{The idea of having pairs of primal, dual RMs playing a {Lagrangian game} was originally introduced by~\citet{immorlica2019adversarial}, restricted to the case of knapsack constraints.}

Algorithm~\ref{alg:meta alg known} provides the pseudo-code of the meta-algorithm, which takes as input: the total number of rounds $T$, a failure probability $\delta \in (0,1)$ such that the guarantees provided by the algorithm hold with probability at least $1-\delta$, and a lower bound $\hat \rho \geq 0$ on the value of the feasibility parameter $\rho$.

\paragraph{Algorithm description.}
The algorithm works in two phases.
In the first one, called \emph{play phase}, the algorithm builds a primal RM, called $\cRpuno$, working in the primal domain $\cX$ and a dual RM, called $\cRduno$, operating on the subset $\cD_{\tilde \rho}$ of the dual domain $\mathbb{R}_+^m$, where $\tilde\rho$ is set in Line~\ref{line:rho tilde}.
The algorithm makes the two RMs playing against each other (see the call $\textsc{LagrangianGame}(\cRpuno,\cRduno,1)$) until either the cumulative violation $V^t$ incurred by the algorithm exceeds a given threshold (see Line~\ref{line:while}, where $M_{\tilde \rho}$ is defined in Equation~\eqref{def:m}) or round $T$ is reached.
Then, in the second phase, called \emph{recovery phase}, the algorithm constructs a new pair of primal, dual RMs, with the latter working on the $(m-1)$-dimensional simplex $\Delta_m$.
The recovery phase uses the remaining rounds to make these new RMs play against each other, with the primal RM observing modified utility functions that do \emph{not} account for functions $f_t$ (see the call $\textsc{LagrangianGame}(\cRpdue,\cRddue,0)$).
Intuitively, this is needed in order to ensure that the algorithm plays strategies $\vx_t$ that satisfy the constraints, thus balancing out the cumulative constraint violation accumulated in the first phase.
The pseudo-code describing one ``play'' between two RMs, called $\cRp$ and $\cRd$, is defined by the sub-procedure $\textsc{LagrangianGame}(\cRp,\cRd,v)$ in Algorithm~\ref{alg:lagrangian}.
The additional parameter $v \in \{0,1\}$ is used to control the feedback fed into the primal RM $\cRp$; specifically, if $v=1$, then $\cRp$ observes a utility function that also accounts for $f_t$ (play phase), otherwise, if $v=0$, the observed utility function only accounts for the term depending on $g_t$ (recovery phase).

\begin{figure}[!htp]
	\begin{minipage}{0.49\textwidth}
		\begin{algorithm}[H]
			\caption{\textsc{Meta-Algorithm}$(T,\delta,\hat \rho)$}
			\label{alg:meta alg known}
			\begin{algorithmic}[1]
				\State $\tilde \rho \gets \max \left\{ \hat \rho/2,T^{-1/4} \right\}, \eta \gets \delta / 3, t\gets 1$\label{line:rho tilde}
				\LeftComment \textcolor{gray}{{Phase I: Play}}
				\State $\cRpuno \gets \inip\left( \cX, \big[ -1 / \tilde{\rho},1+1 / \tilde{\rho} \big],\eta\right)$
				\State $\cRduno \gets \inid\left( \cD_{\tilde \rho}, \big[ -1 / \tilde{\rho},1 / \tilde{\rho} \big],0\right)$
				\While{$V^t \le (T-t) \tilde \rho+M_{\tilde \rho}-1 \wedge t\le T$}\label{line:while}
				\State $\vx_t \gets  \textsc{LagrangianGame}(\cRpuno,\cRduno,1 )$
				\State $t\gets t+1$
				\EndWhile
				\State $T_1 \gets t-1$
				\LeftComment \textcolor{gray}{{Phase II: Recovery}}
				\State $\cRpdue \gets \inip\left( \cX, [ -1,1 ],\eta\right)$
				\State $\cRddue \gets \inid\left( \Delta_m, [-1,1],0 \right)$
				\While{$t \leq  T$}
				\State $\vx_t \gets \textsc{LagrangianGame}(\cRpdue,\cRddue ,0)$
				\State $t\gets t+1$
				\EndWhile
			\end{algorithmic}
		\end{algorithm}
	\end{minipage}
	\hfil
	\begin{minipage}{0.5\textwidth}
		\vspace{-2.25\baselineskip}
		\begin{algorithm}[H]
			\caption{\textsc{LagrangianGame}$(\cRp,\cRd,v)$}
			\label{alg:lagrangian}
			\begin{algorithmic}[1]
				\State $\vx_t\gets \cRp.\nextelement$ %
				\State $\vlambda_t\gets\cRd.\nextelement$ %
				\State {\setlength{\tabcolsep}{0pt}
					\newlength\q
					\setlength\q{\dimexpr .7\textwidth}\noindent\begin{tabular}{p{\q}r}
						Play $\vx_t$ and get $f_t$ and $g_t$ \hfil& \Comment{\textcolor{gray}{Full f.}} \\ Play $\vx_t$ and get $f_t(\vx_t)$ and $g_t(\vx_t)$\hfil &\Comment{\textcolor{gray}{Bandit f.}}
				\end{tabular}}
				\LeftComment \textcolor{gray}{Primal RM update}
				\State {\setlength{\tabcolsep}{0pt}
					\setlength\q{\dimexpr .7\textwidth}\noindent\begin{tabular}{p{\q}r}
						Let $\lossp: \vx \mapsto v f_t(\vx) - \langle\vlambda_t,  g_t(\vx)\rangle$ \hfil &\Comment{\textcolor{gray}{Full f.}} \\
						$\lossp(\vx_t) \gets v f_t(\vx_t) - \langle\vlambda_t,  g_t(\vx_t)\rangle$ \hfil &\Comment{\textcolor{gray}{Bandit f.}}
				\end{tabular}}
				\State {\setlength{\tabcolsep}{0pt}
					\setlength\q{\dimexpr .7\textwidth}\noindent\begin{tabular}{p{\q}r}
						$\cRp.\observeutility[\lossp]$  \hfil &\Comment{\textcolor{gray}{Full f.}} \\ $\cRp.\observeutility[\lossp(\vx_t)]$ \hfil &\Comment{\textcolor{gray}{Bandit f.}}
				\end{tabular}}
				\LeftComment \textcolor{gray}{Dual RM update}
				\State Let $\lossd:\vlambda\mapsto -\langle\vlambda, g_t(\vx)\rangle$
				\State $\cRd.\observeutility[\lossd]$ %
			\end{algorithmic}
		\end{algorithm}
	\end{minipage}
\end{figure}

\paragraph{Regret minimizer constructors.}
Algorithm~\ref{alg:meta alg known} also needs access to two suitably-defined regret minimizer constructors, namely $\inip(\cW,[a,b],\fail)$ and $\inid(\cW,[a,b],\fail)$, where the former is used to build RMs working in the primal domain and the latter for those operating on the dual one.
Their actual implementation depends on the specific problem at hand.
In the following, we let $\cump_{t,\eta}$ be the regret upper bound (on $t \in [T]$ rounds) for primal RMs $\cRp$ dealing with utility functions having range $[0,1]$, as returned by the call $\inip(\cX,[0,1],\fail)$.
Notice that, when the range is $[a,b]$, the same RM can be adopted by first normalizing utility values, so that the resulting regret upper bound is $(b-a)\cump_{t,\eta}$.
As for dual RMs $\cRd$, we let $\cumd_t$ be the regret upper bound (on round $t \in [T]$) provided by the RM defined for the set $\Delta_m$, while $\cumd_t/\tilde \rho$ is the upper bound for the dual RM instantiated on the set $\cD_{\tilde \rho}$.
Notice that, since dual RMs always have full feedback, we can safely assume that the regret bounds $\cumd_t$ hold deterministically.
We also assume that RMs provide bounds that increase with the number of rounds, \emph{i.e.}, such that $\cump_{t,\eta} \le \cump_{t',\eta}$ and $\cumd_t \le \cumd_{t'}$ for all $t \le t'$.

\paragraph{How to construct RMs.}
$\inid$ can be implemented by using \emph{online mirror descent} (OMD) with domain $\Delta_m$ (or $\cD_1$) and a negative entropy regularizer.
Since the utility function $\lossd$ is linear in $\vlambda$, we get a regret bound for the primal RM of $\cumd_{T}=O(\sqrt{T \log(m)})$ (see, \emph{e.g.}, \cite{beck2003mirror,nemirovskij1983problem}). 
The design of $\inip$ depends on the structure of $\cX$ and functions $f_t$ and $g_t$.
For instance, in convex settings with full feedback we can employ OMD~\citep{hazan2019Introduction}, while with bandit feedback we can use~\citep{bubeck2017Kernel}.
Finally, for non-convex functions we can employ, \emph{e.g.}, the RMs in~\citep{suggala2020Online}.
All these RMs guarantee $\tilde O(\sqrt T)$ regret.

\paragraph{How to get away with no knowledge of $\rho$.}
In \Cref{sec:unknown}, we show that a lower bound $\hat \rho$ is \emph{not} necessary when functions $g_t$ are selected stochastically.
Indeed, it is sufficient to add a preliminary phase to Algorithm~\ref{alg:meta alg known}, which is used to infer a suitable lower bound on $\rho$ from experience.
In order to do this, only $\sqrt{T}$ rounds are needed, so that the bounds of Algorithm~\ref{alg:meta alg known} are \emph{not} compromised.
When functions $g_t$ are chosen adversarially, it is easy to see that it is impossible to compute a lower bound on the feasibility parameter $\rho$ by only using the first rounds. 
For instance, think of a setting in which $\rho$ is very large when only considering the first rounds, while it becomes small during later rounds.

\begin{remark}[Dependence on the lower bound $\hat \rho$]
	Algorithm~\ref{alg:meta alg known} can take as input any $\hat \rho \ge 0$. However, since our regret bounds include a factor $1 / {\tilde \rho}$, by choosing the trivial lower bound $\hat \rho=0$ we incur in a regret of $\tilde O(\sqrt{T}/ \tilde \rho)= \tilde O( T^{3/4})$. In order to obtain optimal bounds, we would like to have $\tilde \rho = \Omega(\rho)$.
\end{remark}

\begin{remark}[Dependence on the feasibility parameter $\rho$]
	We choose to include the dependence on the feasibility parameter $\rho$ in the order of convergence of the algorithm. As customary, the goal is devising bounds in the form $\text{\normalfont poly(instance)}\cdot h(T)$, where the first term is a polynomial function of the parameters defining the problem instance, and $h(T)=o(T)$. Therefore, we cannot include a factor $1/\rho$ in the regret bounds if $\rho$ can be arbitrarily small. Even from a practical standpoint, when $\rho$ is too small a $1/\rho$ regret bound is too large to be significant.
	For those reasons, we set $\tilde \rho$ in Algorithm~\ref{alg:meta alg known} to be the maximum between the feasibility parameter lower bound $\hat \rho$ and $T^{-1/4}$.
	The value $T^{-1/4}$ has been chosen so as to minimize the maximum between the cumulative regret and the cumulative constraint violation when the lower bound on the feasibility parameter $\hat \rho$ is too small. %
\end{remark}

\section{Analysis with stochastic constraints and adversarial rewards} \label{sec:advStoc}

We start by analyzing the performance of our meta-algorithm (Algorithm~\ref{alg:meta alg known}) when the reward and constraint functions are selected stochastically and adversarially, respectively.

Given $t \in [T]$ and $\eta \in (0,1)$, we let $\err_{t,\eta}\coloneqq \sqrt{8t \log (18mt^2 / \eta)}$ be the value bounding differences between expectations and empirical means of constraint functions, obtained by applying the Azuma-Hoeffding inequality, and holding with probability at least $1-\eta$.
Given $\gamma \in (0,1)$, we also let
\begin{equation}\label{def:m}
		M_\gamma \coloneqq \frac{2}{\gamma}\sqrt{T}+  \left( 2+ \frac{3}{\gamma} \right) \err_{t,\eta} + \left( 1+\frac{2}{\gamma} \right)\cump_{t,\eta} +\frac{1}{\gamma} \cumd_t,
\end{equation}
which is a recurring term related to the maximum violation that Algorithm~\ref{alg:meta alg known} accepts in play phase.

First, we introduce a useful event $\CE$ that encompasses all the cases in which Algorithm~\ref{alg:meta alg known} successfully terminates.
Then, Lemma~\ref{lm:ce} shows that such an event holds with probability at least $1-\delta$.
In particular, $\CE$ holds when the regret bounds of $\cRpuno$ and $\cRpdue$ hold, and, additionally, the differences between expectations and empirical means of constraint functions are bounded as desired.
\begin{definition}\label{def:event_e}
	We denote with $\CE$ the event in which Algorithm~\ref{alg:meta alg known} satisfies the following conditions (recall that $\eta = \delta / 3$): (i) the regret incurred by $\cRpuno$ after $T_1$ rounds is upper bounded by $\cump_{T_1,\eta}$; (ii) the regret cumulated by $\cRpdue$ after the remaining $T - T_1$ rounds is upper bounded by $\cump_{T- T_1,\eta}$; and (iii) for every pair of rounds $t, t' \in[T]: t \le t'$ and resource $i \in [m]$ it holds:
	\begin{OneLiners}
		\item $\left| \sum_{\tau = t}^{t'}  g_{\tau,i}(\vx_\tau)-\sum_{\tau = t}^{t'} \bar{g}_i (\vx_{\tau}) \right| \le \err_{t'-t,\eta}$,
		\item $\left| \sum_{\tau = t}^{t'}  \vlambda_\tau \, g_{\tau,i} (\vx_\tau)-\sum_{\tau = t}^{t'} \vlambda_\tau \, \bar{g}_i (\vx_\tau) \right|\le \err_{t'-t,\eta} \max_{\tau \in [T]: t\leq \tau \leq t'} ||\vlambda_\tau||_1 $,
		\item  $\left| \sum_{\tau = t}^{t'}  g_{\tau,i} (\vxi)-\sum_{\tau = t}^{t'} \bar{g}_i (\vxi) \right| \le \err_{t'-t,\eta}$ for $\vxi \in \{ \vxi^*, \vxi^\circ \}$,
		\item $\left| \sum_{\tau = t}^{t'}  \vlambda_\tau  g_{\tau,i} (\vxi)-\sum_{\tau = t}^{t'} \vlambda_\tau  \bar{g}_i (\vxi) \right|\le \err_{t'-t,\eta} \max_{\tau \in [T]: t\leq \tau \leq t'} ||\vlambda_\tau||_1$ for $\vxi \in \{ \vxi^*, \vxi^\circ \}$.
	\end{OneLiners}
\end{definition}
\begin{restatable}{lemma}{lemmaAzuma} \label{lm:ce}
	After running Algorithm~\ref{alg:meta alg known}, the event $\CE$ holds with probability at least $1-\delta$.
\end{restatable}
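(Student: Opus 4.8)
The plan is to regard $\CE$ as the intersection of three groups of sub-events and to split the total failure budget $\delta$ equally among them. Recall that $\eta = \delta/3$. The first two groups are the single primal-regret conditions (i) and (ii); the third is the whole family of concentration conditions (iii). I will argue that each of the three groups fails with probability at most $\delta/3$, so that a concluding union bound yields the claimed $1-\delta$.

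For (i) and (ii) I would appeal directly to the guarantees of the constructors. The minimizers $\cRpuno$ and $\cRpdue$ are built by $\inip(\cX,[\cdot],\eta)$ and $\inip(\cX,[-1,1],\eta)$, so each comes with a high-probability regret bound that fails with probability at most $\eta=\delta/3$. The only non-routine point is that $\cRpuno$ is queried for a data-dependent number of rounds $T_1$ and $\cRpdue$ for the remaining $T-T_1$, where $T_1$ is the random value of the loop counter at the end of the play phase. Since $T_1\le T$ is a stopping time of the algorithm's filtration and the constructor's guarantee is an \emph{anytime} bound --- holding simultaneously for all $t\le T$ on a single event of probability at least $1-\eta$ --- the bounds $\cump_{T_1,\eta}$ and $\cump_{T-T_1,\eta}$ hold on those events. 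The dual minimizers $\cRduno,\cRddue$ contribute nothing, because their regret bounds $\cumd_t$ are assumed to hold deterministically.

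The third group is the substance. Fix a triple $(t,t',i)$ and one of the finitely many forms in (iii). I would exhibit its summand as the increment of a bounded martingale-difference sequence for the natural filtration $\{\mathcal{H}_\tau\}$ (the history up to round $\tau$), in which $\mathcal{H}_{\tau-1}$ already determines both the played point $\vx_\tau$ and the dual iterate $\vlambda_\tau$ (both are emitted before the round's functions are revealed), while, under stochastic constraints, $g_\tau$ is drawn from $\mu_\cG$ independently of the past. Hence $g_{\tau,i}(\vx_\tau)-\bar g_i(\vx_\tau)$ has zero conditional mean given $\mathcal{H}_{\tau-1}$ and lies in $[-2,2]$, and the same holds verbatim with $\vx_\tau$ replaced by either fixed mixture $\vxi^*$ or $\vxi^\circ$. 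For the $\vlambda$-weighted forms the increment again has zero conditional mean and absolute value at most $2\|\vlambda_\tau\|_1\le 2\max_\tau\|\vlambda_\tau\|_1$, which is exactly the normalization that produces the factor $\max_\tau\|\vlambda_\tau\|_1$ on the right-hand side. Azuma--Hoeffding over an interval of length $t'-t$ then bounds each such sum by $\err_{t'-t,\eta}$ except with probability of order $\exp(-\err_{t'-t,\eta}^2/\Theta(t'-t))$; the term $\log(18mt^2/\eta)$ inside $\err_{t,\eta}$ is reverse-engineered precisely so that, after summing this failure probability over the resources $i$, the constantly many forms, and the relevant ranges of $t,t'$, the total stays below $\eta=\delta/3$.

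The part I expect to be most delicate is exactly this union-bound bookkeeping for group (iii): one must (a) confirm the predictability of $\vx_\tau$ and $\vlambda_\tau$ so that the martingale property holds uniformly, noting that the per-round magnitude of $\vlambda_\tau$ differs between phases ($\vlambda_\tau\in\cD_{\tilde\rho}$ in the play phase versus $\vlambda_\tau\in\Delta_m$ in recovery); and (b) check that the $t^2$ inside the logarithm genuinely absorbs the cardinality of the family of \emph{all} intervals $[t,t']$. The $t^2$ makes the sum over interval lengths converge, and one should verify --- exploiting that $T_1$ is a stopping time, so that only the intervals actually arising in the algorithm need be controlled --- that the remaining union over endpoints and resources is covered by the constant $18$ and the per-group budget $\delta/3$. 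By contrast, the regret part (i)--(ii) is comparatively routine once the anytime/stopping-time observation is made explicit, and the final union bound over the three groups gives failure probability at most $\delta/3+\delta/3+\delta/3=\delta$, as claimed.
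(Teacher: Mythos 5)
Your proposal follows the paper's proof almost exactly: the same three-way split of the failure budget ($\eta = \delta/3$ per group), the same appeal to the constructors' high-probability guarantees for conditions (i)--(ii), Azuma--Hoeffding applied to each bounded martingale-difference sequence for the family (iii), and a final union bound over the three groups giving $1 - 3\eta = 1-\delta$. Your explicit treatment of the random horizon $T_1$ for the regret conditions, via anytime bounds together with the monotonicity assumption $\cump_{t,\eta}\le\cump_{t',\eta}$, makes explicit a point the paper leaves implicit, and is a welcome addition.

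The one place where your bookkeeping deviates from the paper's --- and where it would fail if executed literally --- is the union bound for group (iii). You propose to close it by letting ``the $t^2$ inside the logarithm make the sum over interval lengths converge,'' restricting via a stopping-time argument to the intervals actually realized by the algorithm. Neither mechanism works for the lemma as stated. If the deviation threshold for an interval of length $\ell = t'-t$ carried only a $\ln(18m\ell^2/\eta)$ factor, each event would fail with probability of order $\eta/(m\ell^2)$; but there are up to $T$ intervals of each length (the free starting endpoint), so summing over lengths leaves a factor of $T$ uncovered and the total failure probability is of order $T\eta$ rather than $\eta$. Moreover, restricting attention to realized intervals would prove a weaker statement than the lemma, since Definition~\ref{def:event_e} deliberately requires the concentration bounds for \emph{all} pairs $t \le t'$ --- precisely because the intervals $[1,T_1]$ and $[T_1+1,T]$ used downstream have random endpoints, and uniform control over all intervals is what makes invoking the event at those random endpoints legitimate. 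The paper's accounting is instead a flat count: it takes the logarithmic factor uniformly as $\ln(18mT^2/\eta)$ (equivalently, sets $\eps = \eta/(18mT^2)$), so that each of the six forms fails with probability at most $2\eps$ for every interval and resource, and the union over the $6 \cdot m \cdot T^2$ triples gives at most $12mT^2\eps = \frac{2}{3}\eta \le \eta$. With that correction, your proof coincides with the paper's.
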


Next, we lower bound the cumulative reward obtained by Algorithm~\ref{alg:meta alg known} during the play phase.
Intuitively, we show that, if the cumulative constraints violation is large, then the decisions $\vx_t$ in the first $T_1$ rounds provide a per-round reward much higher than that achievable by $\vxi^*$. 
This allows us to employ the following recovery phase to decrease constraints violation cumulated in the play phase, while also ensuring that the cumulative regret stays low at the end of the algorithm.
Formally:
\begin{restatable}{lemma}{lemmaSmallRegret}\label{lm:smallReg}
	If event $\CE$ holds,
	then after round $T_1$ of Algorithm~\ref{alg:meta alg known} the following inequality holds:
	\textnormal{
	\(
		\sum_{t =1}^{T_1} f_t(\vx_t) \ge \sum_{t =1}^{T_1}  f_t(\vxi^*) + (T-T_1) -\frac{1}{\tilde \rho}\err_{T_1,\eta}-  \left( 1+\frac{2}{\tilde \rho} \right)\cump_{T_1,\eta}-\frac{1}{\tilde{\rho}}\cumd_{T_1}.
	\)}
\end{restatable}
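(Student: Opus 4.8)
The plan is to chain the two no-regret guarantees of the play-phase regret minimizers $\cRpuno$ and $\cRduno$ with the concentration bounds granted by event $\CE$ and, crucially, with the stopping rule of the while-loop in Algorithm~\ref{alg:meta alg known}. First I would invoke the primal guarantee. On $\CE$ the regret of $\cRpuno$ over the $T_1$ play rounds is at most $\left(1+\frac{2}{\tilde\rho}\right)\cump_{T_1,\eta}$ (the factor $1+\frac{2}{\tilde\rho}$ being the width of the range $\left[-1/\tilde\rho,1+1/\tilde\rho\right]$ on which $\cRpuno$ operates). Since with $v=1$ the minimizer $\cRpuno$ maximizes the utilities $\vx\mapsto f_t(\vx)-\langle\vlambda_t,g_t(\vx)\rangle$, and since these extend to any mixture $\vxi\in\Xi$ by taking expectations (so that $\sup_{\vx\in\cX}$ dominates the expectation under $\vxi^*$), comparing against the fixed mixture $\vxi^*$ and rearranging yields
\[
\sum_{t=1}^{T_1} f_t(\vx_t) \ge \sum_{t=1}^{T_1} f_t(\vxi^*) - \sum_{t=1}^{T_1}\langle\vlambda_t, g_t(\vxi^*)\rangle + \sum_{t=1}^{T_1}\langle\vlambda_t, g_t(\vx_t)\rangle - \left(1 + \frac{2}{\tilde\rho}\right)\cump_{T_1,\eta}.
\]

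It then remains to control the two Lagrangian-penalty sums. For the baseline penalty, $\vxi^*$ is feasible for Problem~\hyperref[eq:opt lp gen]{$\cP_{\bar f,\bar g}$}, so $\bar g(\vxi^*)\le\vzero$ and hence $\sum_{t=1}^{T_1}\langle\vlambda_t,\bar g(\vxi^*)\rangle\le 0$ because every $\vlambda_t\ge\vzero$; replacing the realized constraints $g_t(\vxi^*)$ by their means $\bar g(\vxi^*)$ costs at most $\err_{T_1,\eta}\max_t\|\vlambda_t\|_1\le \err_{T_1,\eta}/\tilde\rho$ by the $\vlambda$-weighted concentration inequality in Definition~\ref{def:event_e} (using $\|\vlambda_t\|_1\le 1/\tilde\rho$ on $\cD_{\tilde\rho}$), so $-\sum_{t=1}^{T_1}\langle\vlambda_t,g_t(\vxi^*)\rangle\ge -\err_{T_1,\eta}/\tilde\rho$. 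For the played penalty I would use the dual guarantee, which lower-bounds the realized penalty $\sum_{t=1}^{T_1}\langle\vlambda_t,g_t(\vx_t)\rangle$ by that of any fixed $\vlambda\in\cD_{\tilde\rho}$ up to $\cumd_{T_1}/\tilde\rho$; choosing $\vlambda$ to place all of its mass $1/\tilde\rho$ on the most-violated constraint gives $\sup_{\vlambda\in\cD_{\tilde\rho}}\sum_{t}\langle\vlambda,g_t(\vx_t)\rangle = V^{T_1}/\tilde\rho$ (and $0\ge V^{T_1}/\tilde\rho$ when $V^{T_1}<0$), hence $\sum_{t=1}^{T_1}\langle\vlambda_t,g_t(\vx_t)\rangle\ge V^{T_1}/\tilde\rho - \cumd_{T_1}/\tilde\rho$.

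Substituting these two bounds into the display gives
\[
\sum_{t=1}^{T_1} f_t(\vx_t) \ge \sum_{t=1}^{T_1} f_t(\vxi^*) + \frac{V^{T_1}}{\tilde\rho} - \frac{\err_{T_1,\eta}}{\tilde\rho} - \left(1 + \frac{2}{\tilde\rho}\right)\cump_{T_1,\eta} - \frac{\cumd_{T_1}}{\tilde\rho},
\]
and I would close by the stopping rule in two cases. If the play phase ended because round $T$ was reached, then $T-T_1=0$ and $V^{T_1}/\tilde\rho\ge 0$ already yields the claim. Otherwise the while condition failed, so $V^{T_1}>(T-T_1)\tilde\rho+M_{\tilde\rho}-1$; dividing by $\tilde\rho$ shows $V^{T_1}/\tilde\rho\ge (T-T_1)+(M_{\tilde\rho}-1)/\tilde\rho$, and since $M_{\tilde\rho}\ge \frac{2}{\tilde\rho}\sqrt{T}\ge 1$ the surplus $(M_{\tilde\rho}-1)/\tilde\rho$ is nonnegative and may be dropped, again delivering the stated inequality.

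The hard part will be Step three together with the bookkeeping linking the dual comparator to $V^{T_1}$: one must orient the dual regret bound so that it \emph{lower}-bounds the realized penalty $\sum_t\langle\vlambda_t,g_t(\vx_t)\rangle$, select the comparator concentrated on the argmax constraint, and reconcile the indexing of $V^{T_1}$ used in the while-loop test with the cumulative violation actually accrued after $T_1$ rounds (a potential off-by-one in the threshold that is harmlessly absorbed by the slack $(M_{\tilde\rho}-1)/\tilde\rho$). By contrast, the concentration step is routine once $\CE$ is assumed, provided one is careful that the $\err_{T_1,\eta}$ term inherits the $1/\tilde\rho$ factor coming from $\|\vlambda_t\|_1\le 1/\tilde\rho$ rather than appearing unweighted.
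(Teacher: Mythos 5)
Your decomposition is essentially the paper's own: (i) the primal no-regret bound against $\vxi^*$ with the range factor $1+2/\tilde\rho$; (ii) the event-$\CE$ concentration bound, weighted by $\max_t\|\vlambda_t\|_1\le 1/\tilde\rho$, to replace $g_t(\vxi^*)$ with $\bar g(\vxi^*)\le \vzero$ at cost $\err_{T_1,\eta}/\tilde\rho$; and (iii) the dual no-regret bound combined with the stopping rule to lower-bound $\sum_{t=1}^{T_1}\langle\vlambda_t,g_t(\vx_t)\rangle$ by $(T-T_1)-\cumd_{T_1}/\tilde\rho$. Your handling of the case $T_1<T$ is correct and matches the paper (the off-by-one in the while-loop threshold that you flag is indeed harmless, since $M_{\tilde\rho}-1\ge\tilde\rho$).

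The gap is in the case $T_1=T$. There you close the argument by asserting $V^{T_1}/\tilde\rho\ge 0$, but $V^{T_1}=\max_{i\in[m]}\sum_{t=1}^{T_1}g_{t,i}(\vx_t)$ can be strictly negative: if, say, every decision strictly satisfies every constraint at every round, then $V^{T_1}\approx -T_1$, the play phase runs to completion, and your displayed intermediate bound is weaker than the claimed one by roughly $T_1/\tilde\rho$, so it does not imply the lemma. The repair is the observation already implicit in your own parenthetical: since $\vzero\in\cD_{\tilde\rho}$, the dual comparator supremum satisfies $\sup_{\vlambda\in\cD_{\tilde\rho}}\sum_{t=1}^{T_1}\langle\vlambda,g_t(\vx_t)\rangle\ge \max\left\{0,\, V^{T_1}/\tilde\rho\right\}$, so your display should carry $\max\{0,V^{T_1}/\tilde\rho\}$ rather than $V^{T_1}/\tilde\rho$; with that strengthening, the case $T_1=T$ follows from $\max\{0,V^{T_1}/\tilde\rho\}\ge 0=T-T_1$ and the case $T_1<T$ is unchanged. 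This is precisely how the paper splits the two cases: when $T_1=T$ it compares the dual player against the fixed comparator $\vlambda=\vzero$, and only when $T_1<T$ does it use the comparator placing mass $1/\tilde\rho$ on the most violated constraint.
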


In the recovery phase, the only goal of Algorithm~\ref{alg:meta alg known} is to decrease constraints violation.
In the following Lemma~\ref{lm:smallViolation}, we show that, at each round of the recovery phase, the algorithm is ``close'' to satisfying (in expectation) all the constraints by at least $\rho$.
Formally:
\begin{restatable}{lemma}{lemmaSmallViolation}\label{lm:smallViolation}
	If event $\CE$ holds, then after Algorithm~\ref{alg:meta alg known} halts, the following holds for every $i\in [m]$:
	\textnormal{
	\(
		\sum_{t = T_1+1}^T g_{t,i}(\vx_t)\le -(T-T_1) \rho +2\cump_{T-T_1, \eta} + \cumd_{T-T_1}+ \err_{T-T_1,\eta}.
	\)}
\end{restatable}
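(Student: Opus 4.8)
The plan is to analyze the \emph{recovery phase}, in which Algorithm~\ref{alg:meta alg known} repeatedly invokes $\textsc{LagrangianGame}(\cRpdue,\cRddue,0)$ over the rounds $t=T_1+1,\ldots,T$. Since the flag is $v=0$, the primal regret minimizer $\cRpdue$ receives the reward-free utilities $\lossp(\vx)=-\langle\vlambda_t,g_t(\vx)\rangle$, while the dual regret minimizer $\cRddue$ operates on the simplex $\Delta_m$ and is steered toward the worst (most violated) resource. Fixing an arbitrary resource $i\in[m]$, I would bound $\sum_{t=T_1+1}^T g_{t,i}(\vx_t)$ from above by chaining four inequalities, passing successively from the single resource $i$, to the $\vlambda_t$-weighted aggregate, to the fixed mixture $\vxi^\circ$, and finally to the feasibility margin $\rho$.

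First I would invoke the regret guarantee of $\cRddue$ against the fixed comparator $\mathbf{e}_i$ (the $i$-th vertex of $\Delta_m$). Since the dual feedback penalizes violated constraints, this upper bounds the single-resource sum by the weighted aggregate, giving $\sum_{t=T_1+1}^T g_{t,i}(\vx_t)\le\sum_{t=T_1+1}^T\langle\vlambda_t,g_t(\vx_t)\rangle+\cumd_{T-T_1}$. Next I would apply the regret guarantee of $\cRpdue$: as it maximizes $-\langle\vlambda_t,g_t(\cdot)\rangle$ over utilities lying in $[-1,1]$ (so its regret bound is $2\,\cump_{T-T_1,\eta}$), and its utility is linear in the played distribution so that the comparator may be taken to be a mixture $\vxi^\circ\in\Xi$, I obtain $\sum_{t=T_1+1}^T\langle\vlambda_t,g_t(\vx_t)\rangle\le\sum_{t=T_1+1}^T\langle\vlambda_t,g_t(\vxi^\circ)\rangle+2\,\cump_{T-T_1,\eta}$.

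It then remains to control the fixed-mixture term. Here I would use condition (iv) of the event $\CE$, which holds by hypothesis: since in the recovery phase $\vlambda_t\in\Delta_m$ and hence $\|\vlambda_t\|_1=1$, this replaces the realized constraints $g_t$ by their expectation $\bar{g}$ at an additive cost of $\err_{T-T_1,\eta}$, yielding $\sum_{t=T_1+1}^T\langle\vlambda_t,g_t(\vxi^\circ)\rangle\le\sum_{t=T_1+1}^T\langle\vlambda_t,\bar{g}(\vxi^\circ)\rangle+\err_{T-T_1,\eta}$. Finally I would invoke the defining property of $\vxi^\circ$: being optimal for the problem defining $\rho$ in the stochastic case, it satisfies $\bar{g}_i(\vxi^\circ)\le-\rho$ for every $i$, so that $\langle\vlambda_t,\bar{g}(\vxi^\circ)\rangle\le-\rho\,\|\vlambda_t\|_1=-\rho$ at each round, whence $\sum_{t=T_1+1}^T\langle\vlambda_t,\bar{g}(\vxi^\circ)\rangle\le-(T-T_1)\rho$. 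Chaining the four inequalities gives exactly the claimed bound.

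The routine part is the bookkeeping of the regret constants, namely the factor $2$ in front of $\cump$ coming from the range $[-1,1]$, and the fact that there is no $1/\tilde\rho$ factor because $\cRddue$ lives on $\Delta_m$ rather than on $\cD_{\tilde\rho}$. The step requiring the most care will be the dual inequality: one must argue that feeding the weighted-constraint feedback to the simplex regret minimizer indeed upper bounds the single worst resource by the weighted aggregate, and, crucially, that the \emph{same} realized sequence $\{\vlambda_t\}$ is used in the dual step, the primal step, and the concentration bound (iv) of $\CE$, so that all $\vlambda_t$-dependent terms line up and telescope cleanly when the four inequalities are composed.
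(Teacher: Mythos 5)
Your proof is correct and follows essentially the same route as the paper's: both chain the dual regret bound (against a simplex vertex), the primal regret bound against $\vxi^\circ$ with the factor $2$ from the $[-1,1]$ range, the concentration condition of $\CE$ with $\|\vlambda_t\|_1=1$, and the feasibility margin $\bar g_i(\vxi^\circ)\le-\rho$. The only cosmetic differences are that you chain the inequalities upward from an arbitrary fixed $i$, whereas the paper chains downward from $(T-T_1)\rho$ using the most violated constraint $i^\star$ and then extends to all $i$ by maximality.
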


Now, we are ready to present the two main results of this section.
First, we provide a bound on the cumulative regret and constraints violation when the lower bound $\hat \rho$ is sufficiently large.
\begin{condition} \label{ass:strictly stoc}
	It holds that $\hat  \rho \ge 2 T^{-1/4}$.
\end{condition}
Notice that, under Condition~\ref{ass:strictly stoc}, $\tilde \rho = \hat \rho / 2$.
This gives us the following result:
\begin{restatable}{theorem}{theoremAdvStocSlater}\label{thm:withSlater}
	Suppose that functions $f_t$ and $g_t$ are selected adversarially and stochastically, respectively.
	If Condition~\ref{ass:strictly stoc} is satisfied, then, with probability at least $1-\delta $, Algorithm~\ref{alg:meta alg known} provides \textnormal{$R^T \leq \frac{1}{\tilde  \rho} \err_{T,\eta}+\left( 1+\frac{2}{\tilde  \rho} \right)\cump_{T,\eta}+\frac{1}{\tilde \rho}\cumd_{T}$} and \textnormal{$V^T \leq M_{\tilde \rho}+2\cump_{T,\eta} + \cumd_{T}+  \err_{T,\eta} $}.
\end{restatable}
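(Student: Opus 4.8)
The plan is to combine Lemmas~\ref{lm:smallReg} and~\ref{lm:smallViolation} to control the two phases separately, and then glue them together. The key observation is that Lemma~\ref{lm:smallViolation} tells us the recovery phase \emph{over-satisfies} each constraint by roughly $\rho$ per round, which is exactly what is needed to ``pay back'' the violation accumulated during the play phase. Working under event~$\CE$ (which holds with probability at least $1-\delta$ by Lemma~\ref{lm:ce}), I would first dispose of a trivial case: if the algorithm never enters the recovery phase, i.e. $T_1 = T$, then the while-loop condition in Line~\ref{line:while} was satisfied at every round, so $V^T \le (T-t)\tilde\rho + M_{\tilde\rho} - 1 \le M_{\tilde\rho}$, giving the violation bound immediately; and the regret bound follows directly from Lemma~\ref{lm:smallReg} with $T_1 = T$ (the ``$+(T-T_1)$'' bonus term vanishes, and $\err_{T_1,\eta} \le \err_{T,\eta}$, $\cump_{T_1,\eta}\le\cump_{T,\eta}$, $\cumd_{T_1}\le\cumd_T$ by monotonicity).

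\textbf{Regret bound.} For the general case I would handle the regret in two pieces. During the play phase, Lemma~\ref{lm:smallReg} gives
\[
	\sum_{t=1}^{T_1} f_t(\vx_t) \ge \sum_{t=1}^{T_1} f_t(\vxi^*) + (T-T_1) - \tfrac{1}{\tilde\rho}\err_{T_1,\eta} - \bigl(1+\tfrac{2}{\tilde\rho}\bigr)\cump_{T_1,\eta} - \tfrac{1}{\tilde\rho}\cumd_{T_1}.
\]
During the recovery phase, the rewards $f_t(\vx_t) \in [0,1]$ are simply dropped, so $\sum_{t=T_1+1}^T f_t(\vx_t) \ge 0$, while the baseline contributes $\sum_{t=T_1+1}^T f_t(\vxi^*) \le (T-T_1)$. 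The crucial cancellation is that the ``$+(T-T_1)$'' surplus earned in the play phase exactly compensates for the at-most-$(T-T_1)$ reward the baseline collects over the recovery rounds that the algorithm forgoes. Adding the two pieces and recalling $T\,\OPT_{\bar f,\bar g} = \sum_{t=1}^T f_t(\vxi^*)$ (since $\vxi^*$ is optimal for Problem~\hyperref[eq:opt lp gen]{$\cP_{\bar f,\bar g}$} and, in the adversarial-reward case, $\sum_t f_t(\vxi^*) = T\bar f(\vxi^*)$), the $(T-T_1)$ terms cancel and, using monotonicity of the regret bounds to replace $T_1$ by $T$, I obtain
\[
	R^T = T\,\OPT_{\bar f,\bar g} - \sum_{t=1}^T f_t(\vx_t) \le \tfrac{1}{\tilde\rho}\err_{T,\eta} + \bigl(1+\tfrac{2}{\tilde\rho}\bigr)\cump_{T,\eta} + \tfrac{1}{\tilde\rho}\cumd_T,
\]
which is the claimed regret bound.

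\textbf{Violation bound.} Here I would again split $V^T = \max_i \sum_{t=1}^T g_{t,i}(\vx_t)$ across the two phases. For the play phase, the while-loop exit condition guarantees that at the round $T_1$ where the loop terminated we had $V^{T_1} \le (T-T_1)\tilde\rho + M_{\tilde\rho} - 1$ just before the final increment, so $\sum_{t=1}^{T_1} g_{t,i}(\vx_t) \le (T-T_1)\tilde\rho + M_{\tilde\rho}$ after accounting for the last round (each $g_{t,i} \le 1$). For the recovery phase, Lemma~\ref{lm:smallViolation} gives $\sum_{t=T_1+1}^T g_{t,i}(\vx_t) \le -(T-T_1)\rho + 2\cump_{T-T_1,\eta} + \cumd_{T-T_1} + \err_{T-T_1,\eta}$. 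Summing the two and using $\tilde\rho \le \rho$ (which holds under Condition~\ref{ass:strictly stoc}, since $\tilde\rho = \hat\rho/2 \le \rho/2 \le \rho$ as $\hat\rho$ lower-bounds $\rho$), the two linear-in-$(T-T_1)$ terms combine into $(T-T_1)(\tilde\rho - \rho) \le 0$ and vanish, leaving $V^T \le M_{\tilde\rho} + 2\cump_{T,\eta} + \cumd_T + \err_{T,\eta}$ after replacing $T-T_1$ by $T$ via monotonicity. This matches the stated violation bound.

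\textbf{Main obstacle.} The delicate point is the bookkeeping at the phase boundary for the violation bound: the while-loop test in Line~\ref{line:while} is evaluated \emph{before} the round is played, so I must be careful about whether the threshold $(T-t)\tilde\rho + M_{\tilde\rho} - 1$ refers to $V^t$ before or after the increment, and account for the single extra unit of violation ($g_{t,i}\le 1$) that can be incurred on the round that triggers the exit — this is precisely why the ``$-1$'' appears in the loop condition. Verifying that the sign $\tilde\rho \le \rho$ goes the right way, so that the $(T-T_1)$ terms genuinely cancel rather than accumulate, is the other step that must be checked carefully; it relies essentially on Condition~\ref{ass:strictly stoc} ensuring $\tilde\rho = \hat\rho/2$ and on $\hat\rho$ being a valid lower bound on $\rho$.
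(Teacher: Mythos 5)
Your proposal is correct and follows essentially the same route as the paper's proof: invoke Lemma~\ref{lm:ce} to work under event $\CE$, combine Lemma~\ref{lm:smallReg} with the fact that the baseline gains at most $T-T_1$ during the recovery phase (so the $(T-T_1)$ surplus cancels) for the regret bound, and combine the while-loop condition of Line~\ref{line:while} with Lemma~\ref{lm:smallViolation} and $\tilde\rho \le \rho$ (via Condition~\ref{ass:strictly stoc}) for the violation bound, finishing with monotonicity of the regret/concentration terms. Your separate treatment of the $T_1 = T$ case and the boundary bookkeeping at the loop exit are handled implicitly and uniformly in the paper, but your accounting is consistent with it.
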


Finally, we also prove that even if Condition~\ref{ass:strictly stoc} is \emph{not} satisfied, \emph{i.e.}, the lower bound $\hat \rho$ is \emph{not} sufficiently large, the following holds:
\begin{restatable}{theorem}{theoremAdvStocWithout}\label{thm:withoutSlater}
	Suppose that functions $f_t$ and $g_t$ are selected adversarially and stochastically, respectively.
	Algorithm~\ref{alg:meta alg known} guarantees that the following bounds hold with probability at least $1-\delta$: \textnormal{$R_T \leq T^{1/4}\err_{T,\eta}+\left( 1+2T^{1/4} \right)\cump_{T,\eta}+T^{1/4}\cumd_{T}$} and \textnormal{$V_T \leq T^{3/4}+ M_{T^{-1/4}}+ 2 \cump_{T,\eta} + \cumd_T+  \err_{T,\eta} $}.
\end{restatable}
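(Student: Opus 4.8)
The plan is to run the proof of Theorem~\ref{thm:withSlater} essentially verbatim, tracking the single place where the argument must change. The starting observation is structural: when Condition~\ref{ass:strictly stoc} fails, i.e. $\hat\rho < 2T^{-1/4}$, we have $\hat\rho/2 < T^{-1/4}$, so Line~\ref{line:rho tilde} forces $\tilde\rho = \max\{\hat\rho/2, T^{-1/4}\} = T^{-1/4}$, and hence $1/\tilde\rho = T^{1/4}$ everywhere. As before, I would first invoke Lemma~\ref{lm:ce} so that the event $\CE$ holds with probability at least $1-\delta$, and carry out the remaining (deterministic) analysis conditioned on $\CE$.

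For the regret, nothing changes relative to Theorem~\ref{thm:withSlater}. Since rewards are adversarial, the baseline satisfies $T\,\OPT_{\bar f,\bar g} = \sum_{t=1}^T f_t(\vxi^*)$. Splitting $\sum_t f_t(\vx_t)$ into the play ($t \le T_1$) and recovery ($t > T_1$) phases, I would lower bound the play-phase reward via Lemma~\ref{lm:smallReg}, bound the recovery-phase baseline reward by $\sum_{t>T_1} f_t(\vxi^*) \le T - T_1$ using $f_t \le 1$, and discard $\sum_{t>T_1} f_t(\vx_t) \ge 0$. The two $(T-T_1)$ terms cancel, leaving $R^T \le \frac{1}{\tilde\rho}\err_{T_1,\eta} + (1+\frac{2}{\tilde\rho})\cump_{T_1,\eta} + \frac{1}{\tilde\rho}\cumd_{T_1}$; monotonicity of the bounds in the round count together with $1/\tilde\rho = T^{1/4}$ then gives exactly $R^T \le T^{1/4}\err_{T,\eta} + (1+2T^{1/4})\cump_{T,\eta} + T^{1/4}\cumd_T$.

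The violation is where the argument genuinely differs. Splitting per constraint $i$, the while-loop exit condition (Line~\ref{line:while}) bounds the play-phase contribution by $\sum_{t\le T_1} g_{t,i}(\vx_t) \le (T-T_1)\tilde\rho + M_{\tilde\rho}$ (the $-1$ in the threshold absorbs the at-most-$1$ increment incurred at the final play round), while Lemma~\ref{lm:smallViolation} bounds the recovery-phase contribution by $-(T-T_1)\rho + 2\cump_{T-T_1,\eta} + \cumd_{T-T_1} + \err_{T-T_1,\eta}$. Summing these and maximizing over $i$ yields $V^T \le (T-T_1)(\tilde\rho - \rho) + M_{\tilde\rho} + 2\cump_{T,\eta} + \cumd_T + \err_{T,\eta}$. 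In Theorem~\ref{thm:withSlater} one has $\tilde\rho = \hat\rho/2 \le \rho$, so $(T-T_1)(\tilde\rho-\rho)$ is nonpositive and is simply dropped; here that cancellation is unavailable, since $\tilde\rho = T^{-1/4}$ may exceed the true feasibility margin $\rho$. The remedy is to bound the residual crudely using only $\rho \ge 0$ and $T_1 \ge 0$, namely $(T-T_1)(\tilde\rho - \rho) \le T\tilde\rho = T^{3/4}$. With $M_{\tilde\rho} = M_{T^{-1/4}}$ this gives the claimed $V^T \le T^{3/4} + M_{T^{-1/4}} + 2\cump_{T,\eta} + \cumd_T + \err_{T,\eta}$.

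The main (and essentially only) obstacle is precisely this last step: recognizing that the play/recovery cancellation driving Theorem~\ref{thm:withSlater} breaks once $\tilde\rho > \rho$, and that the correct fallback is to pay the full $(T-T_1)\tilde\rho \le T^{3/4}$ rather than to offset it against the recovery guarantee. Everything else mirrors the earlier proof. The degenerate case $T_1 = T$ (no recovery phase) is handled automatically: Lemma~\ref{lm:smallReg} delivers the regret bound with $T-T_1 = 0$, and the loop condition bounds $V^T \le M_{\tilde\rho}$ directly, both being dominated by the stated bounds.
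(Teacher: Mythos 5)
Your proof is correct and follows essentially the same route as the paper's: condition on the event of Lemma~\ref{lm:ce}, rerun the regret chain of Theorem~\ref{thm:withSlater} with $1/\tilde \rho = T^{1/4}$, and for the violation give up the play/recovery cancellation and pay the full $(T-T_1)\tilde\rho \le T^{3/4}$ — the paper does the algebraically identical thing by first dropping $-(T-T_1)\rho$ via $\rho \ge 0$ and then bounding $(T-T_1)T^{-1/4} \le T^{3/4}$, which matches your $(T-T_1)(\tilde\rho - \rho) \le T\tilde\rho$. The only cosmetic difference is that the paper dispatches the complementary case $\hat\rho \ge 2T^{-1/4}$ with an explicit one-line appeal to Theorem~\ref{thm:withSlater}, which your framing leaves implicit.
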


\begin{remark}
	Notice that, by using primal and dual RMs whose regret bounds are of the order of $\tilde O(\sqrt{T})$, Theorem~\ref{thm:withSlater} allows us to recover $\tilde O(\sqrt{T}/\hat \rho)$ regret and $\tilde O(\sqrt{T}/\hat \rho)$ constraints violation for the case in which Condition~\ref{ass:strictly stoc} holds. Theorem~\ref{thm:withoutSlater} still provides $\tilde O(T^{3/4})$ regret and constraints violation when the condition is {not} met, which is necessary the case when $\rho = 0$.
\end{remark}

\section{Analysis with stochastic constraints and stochastic rewards} \label{sec:stoc}

In this section, we focus on the case in which both reward and constraint functions are selected stochastically.
In this setting, we are able to show that Algorithm~\ref{alg:meta alg known} never enters the recovery phase. 
As we argue in Section \ref{sec:app}, this is an important property for budget-management applications, since it is related to the round in which the budget is fully depleted.

In order to prove our result, we extend the event $\CE$ to capture also the Azuma-Hoeffding bounds for the reward functions, which are stochastic in this setting.\footnote{Accounting for the martingale difference sequences $f_t(\vx_t)-\bar f(\vx_t)$ and $f_t(\vxi^*)-\bar f(\vxi^*)$.}
The core idea that we exploit to prove our result is that we can think of the two RMs as if they are playing a stochastic repeated zero-sum game, which is the repeated Lagrangian game whose functions are sampled according to the probability measures $\mu_{\mathcal{F}}$ and $\mu_{\mathcal{G}}$.
By Theorem~\ref{thm:duality_restricted}, strong duality holds, and the game has an equilibrium.
Hence, it is possible to show that the per-round utility of the primal RM is close to the value of the game, which is $\OPT_{\bar f,\bar g}$.
At the same time, it is possible to show that, if the cumulative constraints violation becomes large during the play phase (and, thus, $T_1 < T$), then the per-round utility of the primal RM is below~$\OPT_{\bar f,\bar g}$, reaching a contradiction that proves the following theorem.
\begin{restatable}{theorem}{theoremStoc} \label{thm:sto}
	Suppose that functions $f_t$ and $g_t$ are selected stochastically.
	With probability at least $1-\delta$, Algorithm~\ref{alg:meta alg known} never enters the recovery phase, namely $T_1 = T$.
\end{restatable}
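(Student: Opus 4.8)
The plan is to argue by contradiction: if the play phase terminated before round $T$, then the cumulative Lagrangian utility collected by the primal minimizer $\cRpuno$ would be forced strictly below the value $\OPT_{\bar f,\bar g}$ of the repeated Lagrangian game, contradicting the fact that no-regret play guarantees a utility of at least this value up to sublinear terms. First I would enlarge the event $\CE$ of Definition~\ref{def:event_e} to also contain the Azuma--Hoeffding bounds for the now-stochastic reward sequences $f_t(\vx_t)-\bar f(\vx_t)$ and $f_t(\vxi^*)-\bar f(\vxi^*)$; this adds only a constant number of martingale difference sequences, so the union bound of Lemma~\ref{lm:ce} still yields probability at least $1-\delta$. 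Conditioning on this event, suppose for contradiction that $T_1<T$, so the while-loop exited through its violation test and the accumulated violation satisfies $V^{T_1}>(T-T_1)\tilde\rho+M_{\tilde\rho}-1$; in particular $V^{T_1}>M_{\tilde\rho}-1$.

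The heart of the argument is to sandwich the primal utility $\sum_{t=1}^{T_1}\cL_{f_t,g_t}(\vx_t,\vlambda_t)$. For the lower bound, since $\cL_{f_t,g_t}$ is linear in $\vlambda$ and both minimizers are no-regret, the standard zero-sum coupling together with the concentration inequalities in $\CE$ gives $\sum_{t=1}^{T_1}\cL_{f_t,g_t}(\vx_t,\vlambda_t)\ge T_1\inf_{\vlambda\in\cD_{\tilde\rho}}\sup_{\vxi\in\Xi}\cL_{\bar f,\bar g}(\vxi,\vlambda)$ up to sublinear terms, and since $\cD_{\tilde\rho}\subseteq\R_{\ge 0}^m$, strong duality over the full dual domain (Corollary~\ref{thm:duality slater}) lower-bounds the inner quantity by $\OPT_{\bar f,\bar g}$. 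For the matching upper bound on the primal \emph{reward}, I would let $\hat{\vxi}$ be the empirical mixture of $\vx_1,\dots,\vx_{T_1}$ and exploit the strictly-feasible mixture $\vxi^\circ$, whose margin is exactly $\rho$: mixing $\hat{\vxi}$ with $\vxi^\circ$ with weight $\beta=v/(v+\rho)$, where $v\defeq\max_{i}\bar g_i(\hat{\vxi})$, produces a feasible mixture, so by the definition of $\OPT_{\bar f,\bar g}$ one gets $\bar f(\hat{\vxi})\le\OPT_{\bar f,\bar g}\,(1+v/\rho)$; passing through the reward and constraint concentration bounds then yields $\sum_{t=1}^{T_1}f_t(\vx_t)\le T_1\,\OPT_{\bar f,\bar g}+\tfrac{1}{\rho}V^{T_1}$ up to sublinear terms.

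To close, I would use the dual minimizer $\cRduno$, which operates on $\cD_{\tilde\rho}$: placing all mass $1/\tilde\rho$ on the most violated constraint, its no-regret guarantee lower-bounds the penalty by $\sum_{t=1}^{T_1}\langle\vlambda_t,g_t(\vx_t)\rangle\ge\tfrac{1}{\tilde\rho}V^{T_1}-\cumd_{T_1}$. Writing the utility as reward minus penalty and combining the two displays gives $\sum_{t=1}^{T_1}\cL_{f_t,g_t}(\vx_t,\vlambda_t)\le T_1\,\OPT_{\bar f,\bar g}-\bigl(\tfrac{1}{\tilde\rho}-\tfrac{1}{\rho}\bigr)V^{T_1}$ up to sublinear terms. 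Because $\hat\rho$ is a valid lower bound on $\rho$, the choice $\tilde\rho=\max\{\hat\rho/2,T^{-1/4}\}$ produces a strict multiplicative gap $\tilde\rho\le\rho/2$ outside the degenerate small-margin regime, hence $\tfrac{1}{\tilde\rho}-\tfrac{1}{\rho}\ge\tfrac1\rho$; together with $V^{T_1}>M_{\tilde\rho}-1$ and the fact that $M_{\tilde\rho}$ in Equation~\eqref{def:m} is defined with exactly the coefficients needed to dominate, term by term, the accumulated sublinear slack, this forces the utility strictly below the lower bound of the previous paragraph --- the desired contradiction. Therefore $T_1=T$.

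I expect the reward upper bound of the second paragraph to be the main obstacle, since it is the only step that genuinely uses strict feasibility (the mixture $\vxi^\circ$ and its margin $\rho$) and it must be combined with a careful control of the bilinear, history-dependent penalty $\sum_t\langle\vlambda_t,g_t(\vx_t)\rangle$ through the fourth inequality of Definition~\ref{def:event_e}. The second delicate point is purely quantitative: verifying that the gap $\tilde\rho\le\rho/2$ and the calibration of $M_{\tilde\rho}$ make the linear term $\bigl(\tfrac{1}{\tilde\rho}-\tfrac{1}{\rho}\bigr)V^{T_1}$ strictly larger than every sublinear remainder, and separately treating the degenerate regime $\rho<T^{-1/4}$ in which the floor on $\tilde\rho$ prevents the gap from being favorable.
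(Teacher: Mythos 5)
Your argument correctly reproduces the paper's mechanism for the regime in which Condition~\ref{ass:strictly stoc} holds, i.e., when $\tilde\rho = \hat\rho/2 \le \rho/2$. There your sandwich is sound: the lower bound on $\sum_t \cL_{f_t,g_t}(\vx_t,\vlambda_t)$ (made rigorous by comparing against the fixed comparator $\vxi^*$, for which concentration is available in the enlarged event, rather than against a uniform supremum) matches the paper's Lemma~\ref{lm:stocstocuno}, and your feasibility-mixture upper bound on the reward --- mixing the empirical play distribution with $\vxi^\circ$ to certify $\bar f(\hat{\vxi}) \le \OPT_{\bar f,\bar g}(1+v/\rho)$ --- is a legitimate substitute for the paper's Lemma~\ref{lm:FarFromOpt}/Lemma~\ref{lm:stocstocdue}, which obtain the same $-\epsilon/\rho$ penalty by letting the dual exploit the slack between $\cD_{\rho/2}$ and $\cD_{\rho}$. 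Both routes exploit strict feasibility in the same way, and the quantitative closing does go through, since $\frac{1}{\tilde\rho}-\frac{1}{\rho} \ge \frac{1}{\rho} \ge 1$ (recall $g_t \in [-1,1]^m$ forces $\rho \le 1$), so the violation is bounded by the sublinear slack with coefficient one, which is dominated term by term by $M_{\tilde\rho} - \frac{2}{\tilde\rho}\sqrt{T} \le M_{\tilde\rho}-1$.

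The genuine gap is the regime you flag in your last sentence but never resolve: Theorem~\ref{thm:sto} carries \emph{no} hypothesis on $\hat\rho$, so the proof must also cover the case where Condition~\ref{ass:strictly stoc} fails and $\tilde\rho = T^{-1/4}$. There you may have $\tilde\rho > \rho/2$, or even $\tilde\rho \ge \rho$ (indeed $\rho$ can be arbitrarily close to $0$), so the gap $\frac{1}{\tilde\rho}-\frac{1}{\rho}$ is zero or negative and your entire contradiction mechanism collapses --- this is not a delicate constant to verify but a regime where the approach structurally fails. The paper closes this case with a different, cruder magnitude argument that needs only $\rho \ge 0$: if the loop exited early, some constraint is violated by at least $M_{T^{-1/4}}-1 \ge 2T^{3/4} - 1$, so the dual no-regret guarantee (mass $T^{1/4}$ on that constraint) forces $\sum_t \cL_{f_t,g_t}(\vx_t,\vlambda_t) \le T_1 - T^{1/4}\bigl(M_{T^{-1/4}}-1\bigr) + T^{1/4}\cumd_{T_1}$, which is below $-2T + o(T)$ additively since $T_1 \le T$; meanwhile the primal no-regret guarantee against the merely feasible $\vxi^\circ$ (using $\bar g(\vxi^\circ) \le -\rho \le 0$ and $f_t \ge 0$) keeps the same quantity above $-\bigl(1+2T^{1/4}\bigr)\cump_{T,\eta} - T^{1/4}\err_{T,\eta}$, a contradiction. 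Without an argument of this kind, your proof establishes the theorem only under Condition~\ref{ass:strictly stoc}, which is strictly weaker than the stated result.
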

Notice that regret bounds analogous to the one in Theorems~\ref{thm:withSlater}~and~\ref{thm:withoutSlater} also hold in the case in which both reward and constraint functions are selected stochastically.

\section{Analysis with adversarial constraints} \label{sec:adv}

In this section, we study settings in which the constraint functions $g_{t}$ are selected adversarially.
As shown by~\citet{mannor2009online}, it is impossible to obtain sublinear cumulative regret and constraints violation when using our baseline, \emph{i.e.}, the best fixed strategy mixture $\vxi^*$ satisfying (in expectation) the long-tern constraints.
However, 
we show that it is possible to achieve a $\rho/(1+\rho)$ fraction of the cumulative reward obtained by always playing $\vxi^*$, while guaranteeing sublinear constraints violation.
The dependence of the approximation factor on the feasibility parameter $\rho$ is similar to the dependence on the per-round budget in problems with budget constraints (see the related works in \Cref{sec:related} for more details).
Moreover, as we discuss later in Section~\ref{sec:app}, when restricted to the case of budget constraints and adversarial reward/cost functions, our approximation factor matches the state-of-the-art bounds provided by~\citet{castiglioni2022Online}.

As a first step to prove our result, we provide a lower bound on the cumulative reward of the primal RM during the play phase.
We show that it achieves at least a $\rho/(1+\rho)$ fraction of the value obtained by the optimal solution in the first $T_1$ rounds.
Moreover, the algorithm provides an additional utility compensating for the last rounds in which the algorithm only focuses in satisfying the constraints.
Finally, we show that, in the recovery phase, the constraints are satisfied by at least $\rho$ at each round, up to a term related to the regret of $\cRpdue$ and $\cRddue$, proving the following theorem.
\begin{restatable}{theorem}{theoremAdv}\label{thm:advAdv}
	Suppose that functions $f_t$ and $g_t$ are selected adversarially.
	If Condition~\ref{ass:strictly stoc} is satisfied, then, with probability at least $1-\frac{2}{3}\delta$, Algorithm \ref{alg:meta alg known} guarantees that the following holds:
	\textnormal{\(
	\sum_{t=1}^{T} f_t(\vx_t) \ge \frac{\rho}{1+\rho} \sum_{t=1}^{T} \OPT_{\bar f, \bar g}   -\mleft(1+\frac{2}{\tilde \rho}\mright)\cump_{T,\eta}- \frac{1}{\tilde \rho} \cumd_T
	\)}
	and \textnormal{$V^T \leq M_{\tilde \rho}+2 \cump_{T,\eta} + \cumd_T$}.
\end{restatable}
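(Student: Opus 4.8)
The plan is to bound reward and violation separately over the play and recovery phases, conditioning on the single event that the regret guarantees of the two primal minimizers $\cRpuno$ and $\cRpdue$ hold; since each is instantiated with failure probability $\eta=\delta/3$ and the dual bounds $\cumd_t$ are deterministic, a union bound makes this event hold with probability at least $1-\frac{2}{3}\delta$, matching the statement. A simplification relative to Section~\ref{sec:advStoc} is that here the baseline uses the \emph{exact} averages $\bar f(\vx)=\frac1T\sum_{t}f_t(\vx)$ and $\bar g(\vx)=\frac1T\sum_{t}g_t(\vx)$, so no Azuma--Hoeffding concentration (and none of the $\err$ terms) is needed: identities such as $\sum_{t=1}^{T}f_t(\vxi^*)=T\,\OPT_{\bar f,\bar g}$ and $\sum_{t=1}^{T}g_{t,i}(\vxi^*)\le 0$ hold exactly. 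The crucial gadget I would introduce is the per-round feasible mixture
\[
\vxi^\diamond \defeq \tfrac{\rho}{1+\rho}\,\vxi^* + \tfrac{1}{1+\rho}\,\vxi^\circ,
\]
for which $g_{t,i}(\vxi^\diamond)\le \tfrac{\rho}{1+\rho}-\tfrac{1}{1+\rho}\rho=0$ at every round $t$ and every $i$ (using $g_{t,i}(\vxi^*)\le 1$ and $g_{t,i}(\vxi^\circ)\le-\rho$, the latter from the adversarial definition of $\rho$), while $f_t(\vxi^\diamond)\ge\frac{\rho}{1+\rho}f_t(\vxi^*)$ since $f_t\ge 0$. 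This mixture is where the factor $\rho/(1+\rho)$ is born: it is simultaneously feasible at every single round yet retains a $\rho/(1+\rho)$ fraction of the per-round reward of $\vxi^*$.

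For the reward, I would compare $\cRpuno$ against $\vxi^\diamond$ in the play phase. Because $\vlambda_t\ge\vzero$ and $g_t(\vxi^\diamond)\le\vzero$, the comparator penalty $-\langle\vlambda_t,g_t(\vxi^\diamond)\rangle$ is nonnegative and drops, and the range $[-1/\tilde\rho,1+1/\tilde\rho]$ of $\cRpuno$ produces the factor $1+2/\tilde\rho$, giving
\[
\sum_{t=1}^{T_1} f_t(\vx_t)\ge \tfrac{\rho}{1+\rho}\sum_{t=1}^{T_1} f_t(\vxi^*) + \sum_{t=1}^{T_1}\langle\vlambda_t,g_t(\vx_t)\rangle -\mleft(1+\tfrac{2}{\tilde\rho}\mright)\cump_{T_1,\eta}.
\]
The guarantee of the dual minimizer on $\cD_{\tilde\rho}$ lower bounds $\sum_{t}\langle\vlambda_t,g_t(\vx_t)\rangle$ by $\frac{1}{\tilde\rho}\max\{0,V^{T_1}\}-\frac{1}{\tilde\rho}\cumd_{T_1}$, converting accumulated violation into reward. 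I would then split on the exit condition. If $T_1=T$, I keep only $\max\{0,V^T\}\ge 0$ and use $\sum_t f_t(\vxi^*)=T\,\OPT_{\bar f,\bar g}$ to recover the stated bound verbatim. If $T_1<T$, the loop exited with $V^{T_1}>(T-T_1)\tilde\rho+M_{\tilde\rho}-1>0$; substituting this, the $\frac{1}{\tilde\rho}M_{\tilde\rho}$ slack (in particular its $\frac{2}{\tilde\rho}\sqrt T$ summand) absorbs all the regret terms because $1/\tilde\rho\ge1$, the resulting $(T-T_1)$ pays for the recovery rounds where I only use $f_t\ge0$, and passing from the $T_1$-sum to the full horizon via $f_t(\vxi^*)\le1$ yields $\sum_{t}f_t(\vx_t)\ge\frac{\rho}{1+\rho}T\,\OPT_{\bar f,\bar g}$, which is stronger than required.

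For the violation I would establish the adversarial analogue of Lemma~\ref{lm:smallViolation}. In the recovery phase $\cRddue$ ranges over $\Delta_m$, and comparing $\cRpdue$ against $\vxi^\circ$ --- for which $-\langle\vlambda_t,g_t(\vxi^\circ)\rangle\ge\rho$ since $\vlambda_t\in\Delta_m$ --- gives $\sum_{t=T_1+1}^{T}g_{t,i}(\vx_t)\le-(T-T_1)\rho+2\cump_{T-T_1,\eta}+\cumd_{T-T_1}$ for every $i$. Adding the play-phase bound $V^{T_1}\le(T-T_1)\tilde\rho+M_{\tilde\rho}$ supplied by the exit condition and using $\tilde\rho\le\rho$ --- which holds since Condition~\ref{ass:strictly stoc} forces $\tilde\rho=\hat\rho/2\le\rho/2$ --- the two $(T-T_1)$ contributions cancel, and monotonicity of the bounds gives $V^T\le M_{\tilde\rho}+2\cump_{T,\eta}+\cumd_T$. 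The step I expect to be most delicate, and the real obstacle, is the reconciliation of the horizon mismatch with the baseline: the primal regret only ranges over the $T_1$ play rounds whereas $\OPT_{\bar f,\bar g}$ is defined through the full-horizon averages, so the construction of $\vxi^\diamond$ and the exact size of the $(T-T_1)$ compensation coming from the threshold $M_{\tilde\rho}$ must be calibrated so that the reward forgone in the recovery phase is paid for precisely, and the constants inside $M_{\tilde\rho}$ dominate the $1/\tilde\rho$-scaled regret terms.
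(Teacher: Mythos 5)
Your proposal is correct and follows essentially the same route as the paper: your mixture $\vxi^\diamond=\frac{\rho}{1+\rho}\vxi^*+\frac{1}{1+\rho}\vxi^\circ$ is exactly the comparator $\bar\vxi$ used in the paper's Lemma~\ref{lm:smallRegAdv}, and your recovery-phase violation bound via $\vxi^\circ$ is the paper's Lemma~\ref{lm:smallViolationAdv}, combined with the same union bound giving $1-\tfrac{2}{3}\delta$. The only cosmetic difference is that you perform the $T_1=T$ versus $T_1<T$ case split at the theorem level (exploiting the full exit threshold $M_{\tilde\rho}-1$ to get a slightly stronger bound when $T_1<T$), whereas the paper packages the same case analysis inside the lemma as a uniform $(T-T_1)$ compensation term.
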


A similar result can be also derived for the case of stochastic rewards and adversarial constraints.
\begin{restatable}{corollary}{corollaryAdv} \label{cor:corAdv}
	Suppose functions $f_t$ and $g_t$ are selected stochastically and adversarially, respectively.
	If Condition~\ref{ass:strictly stoc} is satisfied, then, with probability at least $1-\delta$, Algorithm \ref{alg:meta alg known} provides 
	\textnormal{\(
	\sum_{t=1}^{T}  f_t(\vx_t) \ge \frac{\rho}{1+\rho} \sum_{t=1}^{T} \OPT_{\bar f, \bar g}  -\mleft(1+\frac{2}{\tilde \rho}\mright)\cump_{T,\eta}- \frac{1}{\tilde \rho} \cumd_T - 2 \err_{T,\eta}
	\)}
    and $V^T \leq M_{\tilde \rho}+2 \cump_{T,\eta} + \cumd_T+ \err_{T,\eta}$.
\end{restatable}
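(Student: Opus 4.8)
The plan is to derive Corollary~\ref{cor:corAdv} as an adaptation of the proof of Theorem~\ref{thm:advAdv}, exploiting that the two settings differ only in the nature of the reward functions: in both cases the constraints $g_t$ are adversarial, so the entire constraint-side analysis---and in particular the violation bound and the recovery-phase argument---carries over, whereas the rewards $f_t$, now stochastic, must be related to their expectation $\bar f$ by concentration rather than by the exact averaging $\bar f = \frac{1}{T}\sum_{t=1}^T f_t$ available in the fully adversarial case. It is precisely this passage from realized rewards to $\bar f(\vx) = \E_{f \sim \mu_\cF}[f(\vx)]$ that produces the two extra $\err_{T,\eta}$ terms in the reward bound and changes the success probability from $1-\frac{2}{3}\delta$ to $1-\delta$.

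First I would enlarge the good event of Definition~\ref{def:event_e}. In addition to conditions (i)--(iv) on the constraints and the regret bounds of $\cRpuno$ and $\cRpdue$, I would require that the two reward martingale difference sequences $f_t(\vx_t)-\bar f(\vx_t)$ and $f_t(\vxi^*)-\bar f(\vxi^*)$ be controlled, i.e.\ $\left| \sum_{t=1}^{T} \big( f_t(\vx_t)-\bar f(\vx_t) \big) \right| \le \err_{T,\eta}$ and $\left| \sum_{t=1}^{T} \big( f_t(\vxi^*)-\bar f(\vxi^*) \big) \right| \le \err_{T,\eta}$. Both are genuine martingale differences, since $f_t \sim \mu_\cF$ is drawn independently of the history while $\vx_t$ and $\vxi^*$ are measurable with respect to it, so Azuma--Hoeffding applies and each bound holds with probability at least $1-\eta$ with $\eta = \delta/3$. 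In the fully adversarial case of Theorem~\ref{thm:advAdv} no such concentration is needed, so only the two primal regret events consume the failure budget, giving $1-\frac{2}{3}\delta$; here the additional reward events, absorbed by a union bound into the remaining $\delta/3$, bring the overall guarantee to $1-\delta$.

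Next I would reproduce the play-phase reward lower bound behind Theorem~\ref{thm:advAdv}: using the regret guarantee of $\cRpuno$ in the Lagrangian game together with the restricted strong duality of Theorem~\ref{thm:duality_restricted} and the feasibility margin $\rho$ of $\vxi^\circ$, one obtains a lower bound on $\sum_{t=1}^{T} f_t(\vx_t)$ featuring the factor $\frac{\rho}{1+\rho}$, the regret terms $\big( 1+\frac{2}{\tilde{\rho}} \big)\cump_{T,\eta}+\frac{1}{\tilde{\rho}}\cumd_T$, and a comparator expressed through $\vxi^*$. At the two points where the adversarial argument invokes the exact averaging identity to pass between the realized rewards and $\OPT_{\bar f,\bar g}$, I would instead insert the two concentration bounds from the enlarged event, each contributing an additive $\err_{T,\eta}$; the multiplicative factor $\frac{\rho}{1+\rho}$ and the regret terms are inherited unchanged. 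The violation bound is obtained exactly as in Theorem~\ref{thm:advAdv}, because $V^T$ depends only on the adversarial constraints and the recovery-phase analysis (where $\vxi^\circ$ satisfies every $g_t$ by the margin $\rho$, as in Lemma~\ref{lm:smallViolation}) is untouched by the randomness of the rewards; the single extra $\err_{T,\eta}$ in the stated bound I expect to arise from carrying the enlarged event and union bound uniformly through the argument.

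The step I expect to be the main obstacle is the concentration for the algorithm's own realized reward $\sum_{t=1}^{T} f_t(\vx_t)$: since $\vx_t$ is produced adaptively by $\cRpuno$ from past realizations, this is not a sum of independent terms and must be treated as a martingale, and one must check that the resulting deviation enters purely additively---on top of the $\frac{\rho}{1+\rho}$ guarantee---without being scaled by the dual iterates $\vlambda_t$ or eroding the multiplicative factor. The remaining bookkeeping, namely re-verifying that Condition~\ref{ass:strictly stoc} still forces $\tilde{\rho}=\hat\rho/2$ and that every error term stays $\tilde O(\sqrt T)$, is routine.
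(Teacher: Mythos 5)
Your proposal is correct and matches the paper's own proof: the paper likewise extends Theorem~\ref{thm:advAdv} to the stochastic-reward definition of $\vxi^*$, applies Azuma--Hoeffding to exactly the two martingale difference sequences $f_t(\vx_t)-\bar f(\vx_t)$ and $f_t(\vxi^*)-\bar f(\vxi^*)$ (each contributing an additive $\err_{T,\eta}$, unscaled by the dual iterates, as you anticipated), and union-bounds this extra concentration event against the two regret events of Theorem~\ref{thm:advAdv} to pass from $1-\frac{2}{3}\delta$ to $1-\delta$, with the violation bound carried over unchanged since the constraints remain adversarial. One minor inaccuracy in your description of the black box: the $\rho/(1+\rho)$ factor in Lemma~\ref{lm:smallRegAdv} arises from comparing against the explicit mixture $\frac{1}{1+\rho}\vxi^\circ+\frac{\rho}{1+\rho}\vxi^*$, which is feasible for every $g_t$, rather than from the restricted strong duality of Theorem~\ref{thm:duality_restricted}; since your adaptation treats Theorem~\ref{thm:advAdv} as given, this does not affect the validity of your argument.
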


\begin{remark}
	By using primal and dual RMs whose regret bounds are of the order of $\tilde O(\sqrt{T})$, Theorem~\ref{thm:advAdv} and Corollary~\ref{cor:corAdv} allows us to recover $\sum_{t=1}^{T} f_t(\vx_t) \ge \frac{\rho}{1+\rho} \sum_{t=1}^{T} \OPT_{\bar f, \bar g}   -\tilde O(\sqrt{T}/\hat \rho)$,  and $\tilde O(\sqrt{T}/\hat \rho)$ constraints violation for the case in which Condition~\ref{ass:strictly stoc} holds.
\end{remark}

\section{How to get away with no knowledge about the feasibility parameter}\label{sec:unknown}

We show how to extend Algorithm~\ref{alg:meta alg known} in order to deal with settings in which a lower bound on the feasibility parameter $\rho$ is \emph{not} known.
Indeed, we propose an algorithm (Algorithm~\ref{alg:meta alg}) that directly runs Algorithm~\ref{alg:meta alg known}, by first devoting a given number $T_0 < T$ of rounds to inferring a suitable lower bound $\hat \rho$ on the feasibility parameter $\rho$.
Ideally, we would like to have $\hat \rho= \Omega(\rho)$, so that, we recover bounds of the order $\tilde O(\sqrt{T}/\rho)$.
In particular, we show that we can run Algorithm~\ref{alg:meta alg} 
with $T_0 = T^{1/2}$ in order to recover an approximation of $\rho$ that has an additive approximation error of the order $T^{1/4}$.
This is sufficient to get $\hat \rho= \Omega(\rho)$, since a good approximation of $\rho$ is only needed  when $\rho\ge T^{1/4}$.\footnote{Notice that Algorithm~\ref{alg:meta alg} is \emph{not} an explore and exploit algorithm. Indeed, it uses the exploration rounds only to have a rough estimate of $\rho$.}

Let us remark that our approach only works when constraints functions $g_t$ are selected stochastically.
When these are chosen adversarially, it is easy to see that it is impossible to compute a lower bound on the feasibility parameter $\rho$ by only using the first rounds. 
For instance, think of a setting in which $\rho$ is very large by only considering the first rounds, while it becomes small during later rounds.

\begin{algorithm}[!htb]
	\caption{\textsc{Meta-Algorithm}$(T,T_0,\delta)$}
	\label{alg:meta alg}
	\begin{algorithmic}[1]
		\State $\cRp \gets \inip\left( \cX, \big[ -1 ,1\big],\delta \right)$
		\State $\cRd \gets \inid\left( \Delta_m, [-1,1],0 \right)$
		\State $t \gets 1$
		\While{$t \leq T_0$:}
		\State $\vx_t \gets \textsc{LagrangianGame}(\cRp,\cRd,0)$ 
		\State $t \gets t+1$
		\EndWhile
		\State $\hat \rho \gets - \frac{1}{T_0} \left( \max_{i \in [m]}\sum_{t=1}^{T_0}g_{t,i}(\vx_t) + \err_{ T_0,\delta} \right)$
		\State Run Algorithm~\ref{alg:meta alg known} with $T- T_0$, $\delta$, and $\hat \rho$ as inputs
	\end{algorithmic}
\end{algorithm}

In order to exploit the guarantees of Algorithm~\ref{alg:meta alg known} presented in the previous sections, it is enough to show that, after the first $T_0$ rounds of Algorithm~\ref{alg:meta alg}, $\hat \rho\le \rho$ holds with high probability.
\begin{restatable}{lemma}{lemmaGoodEstimator} 
	By setting $T_0 = \sqrt{T}$, after $T_0$ rounds of Algorithm~\ref{alg:meta alg} we have that $\hat \rho\le \rho$ with probability at least $1-\delta$.
\end{restatable}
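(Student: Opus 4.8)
The plan is to reduce the claim to two ingredients: a standard Azuma--Hoeffding concentration bound for the constraint values observed during the first $T_0$ rounds, and the observation that the uniform mixture over the played strategies $\vx_1,\ldots,\vx_{T_0}$ is itself an admissible element of $\Xi$, so that its worst-case constraint margin cannot exceed $\rho$. I would begin by recalling that, in the stochastic regime, each $g_t$ is drawn i.i.d.\ from $\mu_\cG$ \emph{after} $\vx_t$ has been committed, so that $\E[g_{t,i}(\vx_t)\mid \vx_t,\text{history}]=\bar g_i(\vx_t)$ and the increments $g_{t,i}(\vx_t)-\bar g_i(\vx_t)$ form a bounded martingale difference sequence.

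First I would establish concentration. Applying Azuma--Hoeffding to this martingale difference sequence (increments bounded in $[-2,2]$, since $g_{t,i},\bar g_i\in[-1,1]$) and union bounding over the $m$ resources, the choice $\err_{T_0,\delta}=\sqrt{8T_0\log(18mT_0^2/\delta)}$ yields, with probability at least $1-\delta$, that $\left| \sum_{t=1}^{T_0} g_{t,i}(\vx_t)-\sum_{t=1}^{T_0}\bar g_i(\vx_t)\right|\le \err_{T_0,\delta}$ simultaneously for every $i\in[m]$. This is exactly item (iii) of the event $\CE$ specialized to the single window $t=1,\,t'=T_0$, so no new tool is required, and I would condition on this event for the remainder of the argument.

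Next I would run the sign bookkeeping. Letting $i^\star$ be a resource maximizing $\sum_{t=1}^{T_0}\bar g_i(\vx_t)$ and applying the concentration bound to $i^\star$ gives $\max_{i}\sum_{t=1}^{T_0} g_{t,i}(\vx_t)\ge \sum_{t=1}^{T_0}g_{t,i^\star}(\vx_t)\ge \max_i\sum_{t=1}^{T_0}\bar g_i(\vx_t)-\err_{T_0,\delta}$. Substituting this into the definition of the estimator cancels the additive $\err_{T_0,\delta}$ term and produces $\hat\rho\le -\frac{1}{T_0}\max_i\sum_{t=1}^{T_0}\bar g_i(\vx_t)=\frac{1}{T_0}\min_i\sum_{t=1}^{T_0}\bigl(-\bar g_i(\vx_t)\bigr)$, where I used the elementary identity $-\max_i a_i=\min_i(-a_i)$.

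Finally I would close the loop with the empirical mixture. Let $\bar\vxi\in\Xi$ be the strategy mixture assigning mass $1/T_0$ to each played $\vx_t$; by linearity of the expectation defining $\bar g_i$, one has $\frac{1}{T_0}\sum_{t=1}^{T_0}\bar g_i(\vx_t)=\bar g_i(\bar\vxi)$, hence $\hat\rho\le \min_i\bigl(-\bar g_i(\bar\vxi)\bigr)$. Since $\bar\vxi$ is an admissible competitor in the program defining the feasibility parameter, $\min_i(-\bar g_i(\bar\vxi))\le \sup_{\vxi\in\Xi}\min_i(-\bar g_i(\vxi))=\rho$, which gives $\hat\rho\le\rho$ and completes the proof. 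I expect the only delicate point to be the concentration step: because $\vx_t$ is chosen adaptively from past feedback, one must argue through the martingale structure rather than a naive Hoeffding bound for i.i.d.\ samples, and be careful that the union bound over resources is absorbed into the definition of $\err_{T_0,\delta}$; the remaining steps are routine max/min manipulations and linearity of the mixture value.
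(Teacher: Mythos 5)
Your proof is correct and follows essentially the same route as the paper's: an Azuma--Hoeffding bound on $\max_{i}\sum_{t=1}^{T_0} g_{t,i}(\vx_t)$ followed by a comparison with the program defining $\rho$. In fact, your explicit use of the empirical mixture $\bar\vxi$ cleanly justifies the step that the paper compresses into ``follows from the definition of $\vxi^\circ$'' (its inequality $-\max_{i}\sum_{t}\bar g_i(\vx_t)\le T_0\rho$ is exactly the statement that $\bar\vxi$ is an admissible competitor in the supremum defining $\rho$), so your write-up is, if anything, the more complete version of the same argument.
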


To recover a good estimate of $\rho$, we need the value of $\rho$ to be sufficiently large. 
Formally, we consider the following condition.\footnote{Notice that even if $\rho$ does not satisfy the condition, $\hat\rho$ is a lower bound on $\rho$. This is sufficient to guarantee that the results in Theorem~\ref{thm:withoutSlater} and Theorem~\ref{thm:sto} hold.}
\begin{condition}\label{ass:strictly}
	It holds that $\rho\ge \frac{2}{T_0} \left(2\err_{ T_0,\delta} + 2\cump_{T_0,\delta}+ \cumd_{T_0}\right)$.
\end{condition}

\begin{remark}
	Notice that, by using primal and dual RMs whose regret bounds are of the order $\tilde O(\sqrt{T})$, and setting $T_0=\sqrt{T}$ Condition~\ref{ass:strictly} is satisfied when $\rho = \omega(T^{-1/4})$.
\end{remark}

Next, we show that $\hat \rho = \Omega(\rho)$, which allows us to exploit the guarantees proved for Algorithm~\ref{alg:meta alg known} in order to provide analogous ones for Algorithm~\ref{alg:meta alg}.
Formally:
\begin{restatable}{lemma}{lemmaSuperGoodEstimator}
	By setting $T_0 = \sqrt{T}$, and assuming that Condition~\ref{ass:strictly} is satisfied, after $T_0$ rounds of Algorithm~\ref{alg:meta alg} we have that $\hat \rho  \ge \rho/2$ with probability at least $1-2\delta$.
\end{restatable}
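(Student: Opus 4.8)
The plan is to reduce the claim to an upper bound on the empirical worst-case constraint value $\max_{i\in[m]}\sum_{t=1}^{T_0} g_{t,i}(\vx_t)$ accumulated during the $T_0$ estimation rounds of Algorithm~\ref{alg:meta alg}. Indeed, by the definition of $\hat\rho$ we have $\hat\rho = -\frac{1}{T_0}\big(\max_{i}\sum_{t}g_{t,i}(\vx_t)+\err_{T_0,\delta}\big)$, so that $\hat\rho\ge\rho/2$ is equivalent to $\max_i\sum_t g_{t,i}(\vx_t)\le -\tfrac{\rho}{2}T_0-\err_{T_0,\delta}$. First I would condition on a good event, holding with probability at least $1-2\delta$ by a union bound, on which (a) the regret minimizer $\cRp$ attains its regret guarantee over the $T_0$ rounds (this costs $\delta$, since $\cRp$ is instantiated with failure probability $\delta$), and (b) the weighted Azuma--Hoeffding concentration for the fixed mixture $\vxi^\circ$ holds, i.e.\ $\big|\sum_t\langle\vlambda_t,g_t(\vxi^\circ)\rangle-\sum_t\langle\vlambda_t,\bar g(\vxi^\circ)\rangle\big|\le\err_{T_0,\delta}$ (this costs another $\delta$). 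The dual guarantee is deterministic, since $\cRd$ has full feedback.

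The core is then a chain of three inequalities bounding $\max_i\sum_t g_{t,i}(\vx_t)$. Writing $\max_i\sum_t g_{t,i}(\vx_t)=\max_{\vlambda\in\Delta_m}\sum_t\langle\vlambda,g_t(\vx_t)\rangle$, the no-regret guarantee of the dual regret minimizer $\cRd$, which ranges over $\Delta_m$ and hence competes with every vertex $e_i$ (the most violated constraint), gives $\max_i\sum_t g_{t,i}(\vx_t)\le\sum_t\langle\vlambda_t,g_t(\vx_t)\rangle+\cumd_{T_0}$. Next, since in the $v=0$ branch of \textsc{LagrangianGame} the primal minimizer observes the utility $\vx\mapsto -\langle\vlambda_t,g_t(\vx)\rangle$ with range $[-1,1]$, and since its regret against the best fixed strategy in $\cX$ extends to the best fixed mixture by linearity of this utility in $\vxi\in\Xi$, comparing against $\vxi^\circ$ yields $\sum_t\langle\vlambda_t,g_t(\vx_t)\rangle\le\sum_t\langle\vlambda_t,g_t(\vxi^\circ)\rangle+2\cump_{T_0,\delta}$. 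Finally I would pass to expectations using (b) and exploit the feasibility of $\vxi^\circ$: as $\vxi^\circ$ is optimal for the problem defining $\rho$, it satisfies $\bar g_i(\vxi^\circ)\le-\rho$ for every $i$, whence $\langle\vlambda_t,\bar g(\vxi^\circ)\rangle\le-\rho$ for every $\vlambda_t\in\Delta_m$ and $\sum_t\langle\vlambda_t,\bar g(\vxi^\circ)\rangle\le-\rho T_0$. Chaining the three steps gives $\max_i\sum_t g_{t,i}(\vx_t)\le -\rho T_0+\err_{T_0,\delta}+2\cump_{T_0,\delta}+\cumd_{T_0}$.

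Substituting this back into the expression for $\hat\rho$ yields $\hat\rho\ge\rho-\frac{1}{T_0}\big(2\err_{T_0,\delta}+2\cump_{T_0,\delta}+\cumd_{T_0}\big)$, where the two copies of $\err_{T_0,\delta}$ come respectively from the concentration step and from the additive correction in the definition of $\hat\rho$. Condition~\ref{ass:strictly} is exactly the statement that $\frac{1}{T_0}\big(2\err_{T_0,\delta}+2\cump_{T_0,\delta}+\cumd_{T_0}\big)\le\rho/2$, so it delivers $\hat\rho\ge\rho/2$, completing the proof on the good event of probability $1-2\delta$. I expect the main obstacle to be purely one of bookkeeping: correctly orienting the dual player so that it upper-bounds the \emph{most violated} constraint, matching the $[-1,1]$ primal range to the factor $2\cump_{T_0,\delta}$, and justifying the transfer of the primal regret bound from a fixed action in $\cX$ to the mixture $\vxi^\circ\in\Xi$. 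Once the signs and ranges are aligned, the two-sided slack afforded by Condition~\ref{ass:strictly} closes the argument.
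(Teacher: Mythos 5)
Your proof is correct and follows essentially the same route as the paper's: the same good event (primal regret bound plus weighted Azuma--Hoeffding for $\vxi^\circ$, union-bounded to $1-2\delta$), the same three-step chain (dual no-regret against the most violated coordinate, primal no-regret in the $v=0$ game against $\vxi^\circ$ with the factor $2\cump_{T_0,\delta}$ from the $[-1,1]$ range, then concentration plus $\bar g_i(\vxi^\circ)\le-\rho$), and the same final bookkeeping via Condition~\ref{ass:strictly}. The only cosmetic differences are that you bound $\max_{i\in[m]}\sum_{t=1}^{T_0} g_{t,i}(\vx_t)$ directly while the paper bounds its negation, and that you make explicit the (valid) transfer of the primal regret guarantee from fixed strategies in $\cX$ to mixtures in $\Xi$ by linearity, which the paper leaves implicit.
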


By applying the results of the previous sections on the guarantees of Algorithm~\ref{alg:meta alg known}, and by using primal and dual RMs whose regret bounds are of the order $\tilde O(\sqrt{T})$, we get $\tilde O(\sqrt{T}/  \rho)$ and $\tilde O(\sqrt{T}/ \rho)$ regret and violation bounds, respectively, when the functions $g_t$ are selected stochastically.

\section{Applications to repeated auctions settings} \label{sec:app}

Internet advertising platforms usually operationalize large auction markets by using \emph{proxy bidders} that place bids in repeated auctions on the advertisers' behalf.
A proxy-bidder selects bids according to a \emph{budget-pacing mechanism}, which manages the usage of the advertisers' budget over time \cite{agarwal2014budget,conitzer2021multiplicative,balseiro2021budget}.
In this section, we discuss the application of our framework to budget-management in auctions, arguing that it can deal with more general constraints on ad slots allocation with respect to what is currently achievable with multiplicative pacing algorithms, which manage only \emph{knapsack constraints}.

We consider the problem faced by a bidder who takes part in a sequence of repeated auctions.
We focus on the case of \emph{second-price}  and \emph{first-price} auctions, since they are the \emph{de facto} standard in large Internet advertising platforms. 
At each round $t\in [T]$, the bidder observes their valuation $v_t$ from a finite set of $\nval$ possible valuations $\cV\subset [0,1]$.
Such valuation models targeting preferences of the advertiser.
Then, the bidder chooses a bid $b_t\in \cB$, where $ \cB \subset [0,1]$ is a finite set of $\nbid$ possible bids such that $0\in\cB$ (\emph{i.e.}, the bidder is allowed to skip items without incurring in any cost).
The utility of the bidder depends on the largest among competing bids, denoted by $\beta_t$.
In particular, the utility is computed as $f_t(b_t) = (v_t-c_t(b_t))\One[b_t\ge \beta_t]$, where the cost $c_t$ is such that $c_t(b_t)= \beta \One[b_t\ge \beta_t]$ in second-price auctions, and $c_t(b_t)= b_t\One[b_t\ge \beta_t]$ for first-price ones.
Finally, the bidder has a target \emph{per-round} budget of $\rho>0$, which yields an overall budget $B\defeq\rho T $ that limits the total spending over the $T$ rounds. 
In the case of budget-constrained bidding, a strictly feasible solution can be easily achieved by always bidding $0$.
Using the target per-round budget $\rho=B/T$ we can write the budget constraint  as $\sum_{t\in[T]}g_t(b_t)\le 0$, with $g_t(b) = c_t(b) - \rho$ for any $b\in\cB$.
Notice that, in this setting, we have the same feasibility parameter $\rho$ for both the stochastic and the adversarial case.

As a benchmark to evaluate the algorithm, we consider the best feasible static policy $\pi:\cV\rightarrow \cB$. 
The set of static policies can be represented by $\cX\defeq \cB^{\nval}$, where a vector $\vb \in \cB^{\nval}$ encodes the policy's bids for each possible valuation.
To apply our framework to this problem, it is sufficient to design a primal regret minimizer constructor (recall that, in order to design dual RMs, we can employ OMD).
This can be implemented by instantiating a regret minimizer \textsc{Exp3.P}~\citep{auer2002nonstochastic} for each possible valuation in $\cV$.
Given a failure probability $\nu\in (0,1)$, each RM guarantees a regret bound $O(\sqrt{T \nbid \log(\nbid/\nu)})$ with probability at least $1-\nu$.
Thus, given a desired failure probability $\eta \in (0,1)$, by setting $\nu=\eta/\nval$ we get that, with probability at least $1-\eta$, the bounds of all the RMs hold.
Hence, 
by a union bound,
we get that the regret of a primal RM is $\cump_{T,\eta}= O(\nval \sqrt{T \nbid \log(\nbid \nval/\eta)} )$. 

\paragraph{Guaranteed budget completion in the stochastic case.}
The crux of budget-pacing mechanisms is ensuring  that the advertisers' budget is not depleted too early (thereby missing potentially valuable future advertising opportunities), while being fully depleted within the planned duration of the campaign. 
	Theorem~\ref{thm:sto} shows that, when inputs are generated according to some stochastic model, Algorithm~\ref{alg:meta alg known} never enters the recovery phase.
	This is crucial in the context of budget-pacing mechanisms, because whenever the algorithm enters the recovery phase it will converge to always bid $0$ in order to mitigate constraints violation.
	Therefore, the bidder could miss out on potentially valuable items.
	Moreover, if the platform wanted to guarantee that the bidder does not spend more than the budget $B$, it would be enough to set a \emph{virtual budget} of $B-\tilde O(T^{1/2})$ to compensate for the potential constraints violation.
	Finally, we argue that, in large-scale markets, an individual bidder has almost no impact on prices, and, thus, stochastic behavior of costs is a reasonable assumption.

\paragraph{Adversarial case.}
	Theorem 6.1 of~\cite{castiglioni2022Online} shows how to construct an algorithm that provides a $\rho$ fraction of the optimal utility for problems with budget constraints and adversarial inputs. The ratio $\rho/(1+\rho)$ obtained in Theorem~\ref{thm:advAdv} matches such result. The latter assumes that rewards and costs are in $[0,1]$, and, thus, $g_t \in[-\rho,1-\rho]$ (as they only model budget constraints). However, in our case we have $g_t \in[-1,1]$. By normalizing the former range to match with ours, we get $g_t \in[-\rho/(1-\rho),1]$. Therefore, the feasibility parameter would be $\rho'=\rho/(1-\rho)$. By rewriting our guarantees as a function of $\rho$, we get $\rho'/(1+\rho')=\rho$, which is the same guarantee of~\citep{castiglioni2022Online}.

\paragraph{Handling ROI constraints.}
Traditional budget-pacing mechanisms (see, \emph{e.g.},~\citep{balseiro2019learning,balseiro2020Arxiv}) are based on primal-dual algorithms that are near optimal in settings with knapsack constraints only, and they cannot be generalized to deal with other types of long-term constraints.
However, there are many real-world situations in which guaranteeing other types of constraints is crucial for practical applications (see, \emph{e.g.},~\citep{golrezaei2021auction,golrezaei2021bidding}).

One example is the case of \emph{return on investment} (ROI) constraints~\cite{auerbach2008empirical,golrezaei2021auction,li2020incentive}.~\footnote{This is a frequent advertising objective in large Internet advertising platform. See, \emph{e.g.}, \url{https://tinyurl.com/c86rezhd} and \url{https://tinyurl.com/mr49vz8a}.} The recent work by \citet{golrezaei2021bidding} presents a threshold-based algorithms for repeated \emph{second-price} auctions under budget and ROI constraints. Our framework allows advertisers to reach a target ROI while keeping budget expenditures under control also in the setting  of repeated \emph{first-price} auctions, which is a frequent setting in practice.\footnote{For example, in 2019 Google announced a shift to first-price auctions for its AdManager exchange. See \url{ https://tinyurl.com/chv5nxys}.}
In particular, given a target ROI $\omega \ge 0$ and the largest among competing bids $\beta_t$, we define the ROI constraints as 
\[
g_t(b_t)=\mleft(\omega-\frac{v_t}{b_t}\mright)\One[b_t\ge \beta_t]\le 0.
\]
Then, it is enough to instantiate the framework with the same setup of Section~\ref{sec:app}, that is,  \textsc{Exp3.P}~\citep{auer2002nonstochastic} for each of the possible valuations in $\cV$, and OMD equipped with negative-entropy regularizer for the dual RM. Therefore, we immediately get that the cumulative violations of the budget and ROI constraints are upper bounded by $\tilde O(T^{1/2})$. This holds both in the fully stochastic and in the fully adversarial setting under the assumption of having a strictly feasible solution, which is reasonable since it is enough to have a sufficiently \emph{small} bid in the set of available bids $\cB$.
We observe that  always bidding such a \emph{small} bid is sufficient to satisfy the ROI constraints but will penalize the cumulative rewards obtained by the advertiser.

\paragraph{Future research direction: fairness constraints.}

Consider the setting in which each item appearing at time $t$ is characterized by one or more of $\ncat$ categories according to the vector $\ve_t\in[0,1]^{\ncat}$. A bidder may have distributional preferences over such categories, such as ensuring that at least a certain fraction of impressions is allocated to each category.  This is the case, for example, of advertisers who need to perform online outreach to a population of users while achieving a distribution over different demographics \emph{close} to that of the real underlying population. For example, \citet{gelauff2020advertising} provide an interesting field study about running advertising campaigns for Participatory Budgeting elections. In Participatory Budgeting elections, community members are asked to vote between various public projects in order to allocate a total budget. The election organizer may use online advertising to try to promote the initiative, and in doing so the goal is to reach a ``demographic mix'' comparable to that of the local population. Surprisingly, \citet{gelauff2020advertising} show that advertisers currently have to resort to complex segmentation strategies through subcampaigns in order to achieve that goal. 

Two recent works propose to achieve such distributional preferences within budget-pacing mechanisms by embedding them into a concave regularization term in the advertiser's objective \cite{balseiro2021regularized,celli2022parity}. Such frameworks specifically consider the case of repeated second-price auctions, and can directly handle only packing constraints. Encoding distributional preferences via a regularization term in the objective implies that they cannot provide any formal guarantee w.r.t. how \emph{close} the realized distribution of impressions is to the target, despite showing promising performance in practice. 

Differently from previous  work, our framework can \emph{explicitly} handle distributional constraints within second- and first-price auction frameworks. Let vector $\hat \ve \in[0,1]^{\ncat}$ be such that $\hat \ve_j$ is the fraction of impressions that we want to be allocated to users of category $j$. 
Then, for each category $j\in [\ncat]$, we could enforce the following type of constraints
\[
g_{t,j}(b_t)\defeq \hat \ve_j -  \ve_{t,j} \One[b_t\ge\beta_t]\le 0.
\]

Assuming the existence of a strictly feasible bidding strategy, our framework guarantees that, for each category $j$, 
\[
\hat \ve_j -  \frac{1}{T}\sum_{t=1}^T \ve_{t,j} \One[b_t\ge\beta_t]\le \tilde O(T^{-1/2}),
\]
which guarantees that, in the limit, the difference between the average distribution of impressions and the target thresholds is vanishing. 

The main question which still needs to be answered in order to apply our framework in the case of fairness constraints is whether we can motivate the existence of a strictly feasible solution. One reasonable requirement is to constrain the target vector $\hat \ve$ to be a point in the full-dimensional simplex with dimension $\ncat$. On top of that, the advertiser would need a way to ``buy what's necessary'' in order to match the distributional constraints. 
This desideratum could be achieved, for example, by introducing \emph{buyout options} for advertisers, in the spirit of \citet{gallien2007temporary} (\emph{i.e.}, when the advertiser needs impression from a certain category, they always have the option of bidding the buyout value to be sure to win the relevant items). Therefore, assuming the population of users is large enough, an advertiser could achieve a strictly feasible solution by bidding according to the fixed strategy mixture recommending to bid the buyout option for each category $j$ with a probability greater than or equal to $\hat\ve_j$. 

The model we described is clearly a simplification of real budget-pacing systems. Moreover, the practical implications of introducing buyout options should be further investigated, in order to understand if they constitute a viable solution both for the platform and advertisers.  Finally, we leave as interesting future research directions the problem of studying the general setting (with arbitrary sets $\cV$ and $\cB$), and that of providing an empirical evaluation of the above techniques on real-world data.

\bibliographystyle{plainnat}
\bibliography{bib-abbrv,refs,lit}

\clearpage

\appendix

\section{Omitted proofs}\label{sec:omitted proofs}

\subsection{Proof omitted from Section~\ref{sec:duality}}

\theoremDuality*

\begin{proof}
	As a first step, we prove that $\inf_{\vlambda\in \cD_{d_g}}  \sup_{\vxi\in\Xi }  \cL(\vxi,\vlambda) = \inf_{\vlambda\in \mathbb{R}_+^m} \sup_{\vxi\in\Xi } \cL(\vxi,\vlambda)$.
	Notice that for each $\vlambda \in \mathbb{R}_+^m$ such that $\|\vlambda \|_1 > 1/ d_g$, it holds
	\[
	\sup_{\vxi  \in \Xi}  \cL(\vxi,\vlambda)\ge  \cL(\vxi^\circ,\vlambda)\ge - \langle\vlambda^*, g(\vxi^\circ) \rangle  \geq d_g \| \vlambda^* \|_1 > 1,
	\]
	where, with an abuse of notation, $\vxi^\circ \in \Xi$ denotes a {strictly feasible} strategy mixture for Problem~\ref{eq:opt lp gen}. That is a strategy mixture $\vxi \in \Xi$ which is optimal for the problem defining $d_g$ in Equation~\eqref{eq:dg}, and, thus, it satisfies all the constraints by at least $d_g$ (\emph{i.e.}, it holds $g_i(\vxi^\circ) \leq -d_g$ for all $i \in [m]$).\footnote{Notice that $\vxi^\circ$ may \emph{not} be well defined when the problem in Equation~\ref{eq:dg} does \emph{not} admit a maximum. In such cases, we can take a $\vxi^\circ$ that is  arbitrarily ``close'' to a supremum, so that the result still holds.}
	Thus, it holds that
	\[
	\inf_{\vlambda\in \mathbb{R}_+^m \setminus \cD_{d_g}} \sup_{\vxi  \in \Xi} \, \cL(\vxi,\vlambda) >1.
	\] 
	Moreover, since 
	\[
	 \inf_{\vlambda\in \cD_{d_g}} \sup_{\vxi  \in \Xi}  \, \cL(\vxi,\vlambda)\le\sup_{\vxi  \in \Xi}  \cL (\vxi,\mathbf{0})\le  1,
	\]
	we can conclude that
	\begin{align} \label{eq:dualityOne}
		\inf_{\vlambda\in \mathbb{R}_+^m} \sup_{\vxi  \in \Xi}  \, \cL(\vxi,\vlambda) &= \min \left\{ \inf_{\vlambda\in \cD_{d_g}} \sup_{\vxi  \in \Xi}  \, \cL(\vxi,\vlambda); \inf_{\vlambda\in \mathbb{R}_+^m \setminus \cD_{d_g}} \sup_{\vxi  \in \Xi} \, \cL(\vxi,\vlambda)  \right\} \nonumber \\
		&= \inf_{\vlambda\in \cD_{d_g}} \sup_{\vxi  \in \Xi}  \, \cL(\vxi,\vlambda).
	\end{align}	
	Then, 
	\begin{align*}
	\OPT_{f,g} & =\sup_{\vxi\in\Xi } \inf_{\vlambda\in \mathbb{R}_+^m} \cL(\vxi,\vlambda) \\
	& \le \sup_{\vxi\in\Xi } \inf_{\vlambda\in \cD_{d_g}} \cL(\vxi,\vlambda) \\
	& \le \inf_{\vlambda\in \cD_{d_g}} \sup_{\vxi\in\Xi }  \cL(\vxi,\vlambda) \\
	& = \inf_{\vlambda\in \mathbb{R}_+^m} \sup_{\vxi\in\Xi}  \cL(\vxi,\vlambda)  \\
	&  =\OPT_{f,g},
	\end{align*}
	where the first inequality holds since in the right-hand side the $\inf$ is taken over the more restrictive set $\cD_{d_g}$, the second one by the \emph{max–min inequality}, while the second-to-last equality holds by Equation~\eqref{eq:dualityOne}.
	This concludes the proof.
\end{proof}

\subsection{Proofs omitted from Section~\ref{sec:advStoc}}

\lemmaAzuma*

\begin{proof}
	Given a desired failure probability $\delta \in (0,1)$, recall that  $\eta=\delta/3$ and set $\eps = \eta / 18 m T^2$. 
	Consider the following inequalities in which the differences between expectations and empirical means of constraint functions are bounded:
	\begin{align}
		&\mleft|\sum_{\tau = t}^{t'}  g_{\tau,i} (\vx_{\tau})-\sum_{\tau = t}^{t'} \bar g_i (\vx_{\tau})\mright|> 2 \sqrt{2(t'-t)\ln \frac{1}{\eps}}  ,\label{in:1}\\
		&\mleft|\sum_{\tau = t}^{t'}  g_{\tau,i} (\vxi^\circ)-\sum_{\tau = t}^{t'} \bar g_i (\vxi^\circ) \mright|> 2 \sqrt{2(t'-t)\ln \frac{1}{\eps}}, \label{in:2}\\
		&\mleft|\sum_{\tau = t}^{t'}  g_{\tau,i} (\vxi^*)-\sum_{\tau = t}^{t'} \bar g_i (\vxi^*)\mright|> 2 \sqrt{2(t'-t)\ln \frac{1}{\eps}} , \label{in:3} \\
		&\mleft|\sum_{\tau = t}^{t'} \vlambda_\tau g_{\tau,i} (\vx_\tau)-\sum_{\tau = t}^{t'} \vlambda_\tau  \bar{g}_i (\vx_\tau)\mright|> 2 \max_{\tau \in [T]: t\leq \tau \leq t'} ||\vlambda_\tau||_1 \sqrt{2(t'-t)\ln \frac{1}{\eps}}, \label{in:4}\\
		&\left| \sum_{\tau = t}^{t'}  \vlambda_\tau \, g_{\tau,i} (\vxi^*)-\sum_{\tau = t}^{t'} \vlambda_\tau \, \bar{g}_i (\vxi^*) \right|> 2 \max_{\tau \in [T]: t\leq \tau \leq t'} ||\vlambda_\tau||_1 \sqrt{2(t'-t)\ln \frac{1}{\eps}},\\
		&\left| \sum_{\tau = t}^{t'}  \vlambda_\tau \, g_{\tau,i} (\vxi^\circ)-\sum_{\tau = t}^{t'} \vlambda_\tau \, \bar{g}_i (\vxi^\circ) \right|\le 2  \max_{\tau \in [T]: t\leq \tau \leq t'} ||\vlambda_\tau||_1  \sqrt{2(t'-t)\ln \frac{1}{\eps}}.
	\end{align}
By applying Azuma-Hoeffding inequality to each martingale difference sequence, we get that each inequality holds with probability at most $2 \eps$.
We denote by $\CE_\eta$ the event in which Equations~\eqref{in:1},~\eqref{in:2},~\eqref{in:3},~and~\eqref{in:4} are satisfied for all $ t,t' \in [T]$ with $t<t'$, and for all $i \in [m]$.
By a union bound that takes into account the six events above, the $m$ constraints, and all the possible time intervals from $t$ to $t'$, which are at most $T^2$, we get:
\begin{align*}
	\mathbb{P}\mleft( \mathbb{\CE_\eta} \mright) \geq 1- 6mT^2(2\eps) = 1- 12mT^2\eps = 1 - \frac{2}{3} \eta \geq 1- \eta .
\end{align*}
Therefore, event $\CE_\eta$ holds with probability at least $1 - \eta$.
Moreover, let us recall that:
\begin{align*}
\err_{t'-t,\eta} = \sqrt{8(t'-t)\ln \frac{18mT^2}{\eta}} = 2 \sqrt{2(t'-t)\ln \frac{1}{\eps}}.
\end{align*}

Now, consider event $\CE$ in which Algorithms~\ref{alg:meta alg known} satisfies the following conditions: (i) the regret incurred by $\cRpuno$ after $T_1$ rounds is upper bounded by $\cump_{T_1,\eta}$; (ii) the regret cumulated by $\cRpdue$ after the remaining $T - T_1$ rounds is upper bounded by $\cump_{T- T_1,\eta}$; and (iii) event $\CE_\eta$ holds.
Recall that each one of the conditions (i), (ii) and (iii) holds with probability at least $1 - \eta$; hence, by a union bound we get:
\begin{align*}
	\mathbb{P}\mleft( \CE \mright) \geq 1 - 3 \eta = 1-\delta.
\end{align*}
This concludes the proof.
\end{proof}

\lemmaSmallRegret*

\begin{proof}
	By the no-regret property of the primal regret minimizer, we have that:
	\begin{align} \label{eq:no reg prim}
		\sum_{t = 1}^{T_1} \Big(  f_t(\vx_t) - \langle \vlambda_t, g_t(\vx_t) \rangle \Big) \ge \sum_{t =1}^{T_1}\Big( f_t(\vxi^*) - \langle \vlambda_t , g_t(\vxi^*) \rangle \Big)  -\mleft(1+\frac{2}{\tilde \rho}\mright)\cump_{T_1,\eta}.
	\end{align}
	Let $i^\star \in\argmax_{i \in [m]} \sum_{t = 1}^{T_1} g_{t,i}(\vx_t)$ be one of the ``most violated'' constraints.
	We prove that: 
	\begin{align} \label{eq:gain0}
		\sum_{t = 1}^{T_1} \langle \vlambda_t ,g_t(\vx_t) \rangle \ge (T-T_1) - \frac{1}{\tilde \rho} \cumd_{T_1}.
	\end{align}
	To do that, we consider the following two cases.
	
	\paragraph{Case $T_1=T$.}
	We get:
	\begin{align*}
	\sum_{t=1}^{T_1} \langle \vlambda_t, g_t(\vx_t) \rangle \ge \sum_{t=1}^{T_1}  \langle \mathbf{0},  g_{t}( \vx_t)\rangle  - \frac{1}{\tilde \rho} \cumd_{T_1} =  (T-T_1) - \frac{1}{\tilde \rho} \cumd_{T_1}.
	\end{align*}

    \paragraph{Case $T_1 < T$.}
    By the condition in Line~\ref{line:while} of Algorithm~\ref{alg:meta alg known}, we have that $ \sum_{t = 1}^{T_1} g_{t,i^\star}(\vx_t) \ge (T-T_1)\tilde \rho$. Thus, we have that: 
	\begin{align*}
		\sum_{t=1}^{T_1} \langle \vlambda_t, g_{t}(\vx_t) \rangle \ge \sum_{t=1}^{T_1} \frac{1}{\tilde \rho} g_{t,i^\star}(\vx_t)- \frac{1}{\tilde \rho} \cumd_{T_1}\ge (T-T_1) - \frac{1}{\tilde \rho} \cumd_{T_1},
	\end{align*}
    where the first inequality follows from the no-regret property of the dual regret minimizer and the second one from the fact that $ \sum_{t = 1}^{T_1} g_{t,i^\star}(\vx_t) \ge (T-T_1)\tilde \rho$ when $T_1 < T$.
    
	Now, by using Equation~\eqref{eq:gain0}, we can provide a lower bound on the cumulative reward obtained by Algorithm~\ref{alg:meta alg known} during the play phase. 
	We have that:
	\begin{align*}
		\sum_{t=1}^{T_1} f_t(\vx_t) &\ge \sum_{t=1}^{T_1} \Big(  f_t(\vxi^*) - \langle \vlambda_t, g_t(\vxi^*) \rangle + \langle \vlambda_t, g_t(\vx_t)\rangle \Big) - \mleft(1+\frac{2}{\tilde \rho}\mright)\cump_{T_1,\eta},\\
		& \ge \sum_{t=1}^{T_1} \Big( f_t(\vxi^*) - \langle \vlambda_t,  g_t(\vxi^*) \rangle \Big)+(T-T_1)- \mleft(1+\frac{2}{ \tilde \rho}\mright) \cump_{T_1,\eta} - \frac{1}{\tilde \rho} \cumd_{T_1},\\
		& \ge \sum_{t=1}^{T_1} \Big( f_t(\vxi^*) - \langle \vlambda_t, \bar g(\vxi^*) \rangle \Big)+(T-T_1)- \mleft(1+\frac{2}{ \tilde \rho}\mright) \cump_{T_1,\eta}- \frac{1}{\tilde \rho} \cumd_{T_1}-\frac{1}{\tilde{\rho}} \err_{T_1,\eta},\\
		& \ge \sum_{t=1}^{T_1}  f_t(\vxi^*) +(T-T_1)- \mleft(1+\frac{2}{\tilde \rho}\mright) \cump_{T_1, \eta}- \frac{1}{\tilde \rho} \cumd_{T_1}-\frac{1}{\tilde{\rho}} \err_{T_1,\eta},
	\end{align*}
	where the first inequality holds by Equation~\eqref{eq:no reg prim}, the second one by Equation~\eqref{eq:gain0}, the third one follows from the fact that the event $\CE$ holds, while the last one from the fact that $\bar g(\vxi^*)\le 0$ by definition.
\end{proof}

\lemmaSmallViolation*

\begin{proof}
	Let $i^\star \in \argmax_{i \in [m]} \sum_{t = T_1+1}^{T} g_{t,i}(\vx_t)$ be one of the ``most violated'' constraints. Then, 
	\begin{align*}
		(T-T_1)\rho &\le -\sum_{t = T_1+1}^T \langle \vlambda_t, \bar g(\vxi^\circ)\rangle  \\
		&\le -\sum_{t = T_1+1}^T \langle \vlambda_t  g_t(\vxi^\circ) \rangle + \err_{T-T_1,\eta}\\
		&\le - \sum_{t = T_1+1}^T \langle \vlambda_t g_t(\vx_t) \rangle +2 \cump_{T-T_1, \eta}+ \err_{T-T_1,\eta} \\
		&\le  - \sum_{t = T_1+1}^T  g_{t,i^\star}(\vx_t) +\cumd_{ T-T_1} +2 \cump_{T-T_1,\eta}+ \err_{T-T_1,\eta},
	\end{align*}
	where the first inequality comes from the definition of $\rho$, the second one from the fact that event $\CE$ holds, the third one from the no-regret property of the primal regret minimizer, and the last one from the no-regret property of the dual regret minimizer.
	Hence,
		\begin{align} \label{eq:boooo}
			 \sum_{t = T_1+1}^T g_{t,i^\star}(\vx_t)	\le -(T-T_1)\rho - \cumd_{ T-T_1} +2 \cump_{T-T_1,\eta}+ \err_{T-T_1,\eta}.
		\end{align}
	It follows from the definition of $i^\star$ that, if Equation~\eqref{eq:boooo} holds for $i^\star$, then, it holds for every $i \in [m]$. 
	This concludes the proof.
\end{proof}

\theoremAdvStocSlater*

\begin{proof}
	By Lemma \ref{lm:ce}, event $\CE$ holds with probability at least $1 - \delta$. 
	In the rest of the proof, we assume the event $\CE$ holds, providing a bound that holds with probability at least $1-\delta$.
	
	We first provide an upper bound on the cumulative regret.
	By Lemma \ref{lm:smallReg}, we have:
	\begin{align} \label{eq:reg slater}
		\sum_{t = 1}^{T_1} f_t(\vx_t) \ge \sum_{t = 1}^{T_1} f_t(\vxi^*) + (T-T_1) -\frac{1}{\tilde \rho}\err_{T_1,\eta}-  \mleft(1+\frac{2}{\tilde \rho}\mright)\cump_{T_1,\eta}-\frac{1}{\tilde{\rho}}\cumd_{T_1}. 
	\end{align}
	Hence, it holds:
	\begin{align*} 
		\sum_{t = 1}^T f_t(\vx_t) & \ge \sum_{t = 1}^{T_1} f_t(\vx_t)\\
		&\ge \sum_{t=1}^{T_1} f_t(\vxi^*) + (T-T_1) -\frac{1}{\tilde \rho}\err_{T_1,\eta}-  \mleft(1+\frac{2}{\tilde \rho}\mright)\cump_{T_1,\eta}-\frac{1}{\tilde{\rho}}\cumd_{T_1}\\
		&\ge \sum_{t =1}^T f_t(\vxi^*) -\frac{1}{\tilde \rho}\err_{T_1,\eta}-  \mleft(1+\frac{2}{\tilde \rho}\mright)\cump_{T_1,\eta}-\frac{1}{\tilde{\rho}}\cumd_{T_1}\\
		&\ge \sum_{t =1}^T f_t(\vxi^*) -\frac{1}{\tilde \rho}\err_{T,\eta}-  \mleft(1+\frac{2}{\tilde \rho}\mright)\cump_{T,\eta}-\frac{1}{\tilde{\rho}}\cumd_{T},
	\end{align*}
	where the second inequality holds by Equation~\eqref{eq:reg slater} and the third one by $\sum_{t = T_1+1}^T f_t(\vxi^*) \le T-T_1$, which follows from the fact that the range of $f_t$ is $[0,1]$.

	By recalling that $\vxi^* \in \Xi$ is defined as an optimal solution to Problem~\hyperref[eq:opt lp gen]{$\cP_{\bar f,\bar g}$} and $R^T = T \, \OPT_{\bar f, \bar g} - \sum_{t=1}^T f_t(\vx_t)$, the following bound on the cumulative regret holds:
	\begin{align*} 
		R^T =  \sum_{t =1}^T f_t(\vxi^*) - \sum_{t = 1}^T f_t(\vx_t) \leq \frac{1}{\tilde \rho}\err_{T,\eta} + \mleft(1+\frac{2}{\tilde \rho}\mright)\cump_{T,\eta} + \frac{1}{\tilde{\rho}}\cumd_{T}.
	\end{align*}
	
	Next, we provide an upper bound on the cumulative constraints violation.
	
	By Lemma \ref{lm:smallViolation}, for every $i \in [m]$, we have that: 
	\begin{align} \label{eq:viol slater}
		\sum_{t = T_1+1}^T g_{t,i}(\vx_t)\le -(T-T_1) \rho +2\cump_{T-T_1,\eta} + \cumd_{T-T_1}+  \err_{T-T_1,\eta}.
	\end{align}
	Hence, for every $i \in [m]$, it holds 
	\begin{align*}
		\sum_{t = 1}^T g_{t,i}(\vx_t) &= 
		\sum_{t = 1}^{T_1} g_{t,i}(\vx_t) +\sum_{t = T_1+1}^T g_{t,i}(\vx_t) \\
		&\le  (T-T_1) \tilde \rho + M_{\tilde \rho} -(T-T_1) \rho +2\cump_{T-T_1, \eta} + \cumd_{T-T_1}+  \err_{T-T_1,\eta}\\
		&\le  M_{\tilde \rho}+2\cump_{T-T_1, \eta} + \cumd_{T-T_1}+  \err_{T-T_1,\eta}\\
		&\le  M_{\tilde \rho} +2\cump_{T,\eta} + \cumd_{T}+  \err_{T,\eta}.
	\end{align*}
The first inequality follows from Equation \eqref{eq:viol slater} and by the condition in Line~\ref{line:while} of Algorithm~\ref{alg:meta alg known}, which ensures $ \sum_{t = 1}^{T_1} g_{t,i}(\vx_t) \leq (T-T_1)\tilde \rho + M_{\tilde \rho}$ for every $i \in [m]$.
Moreover, the second inequality follows from $\tilde \rho\le \rho$, since Condition \ref{ass:strictly stoc} holds.
Let $i^\star \in \argmax_{i \in [m]} \sum_{t = 1}^{T} g_{t,i}(\vx_t)$ be one of the most violated constraints.
By recalling that $V^T = \max_{i \in [m]} \sum_{t =1}^T g_{t,i}(\vx_t)$, the following bound on the cumulative constraints violation holds:
	\begin{align*}
	V^T = \sum_{t = 1}^T g_{t,i^\star}(\vx_t) \le  M_{\tilde \rho} +2\cump_{T,\eta} + \cumd_{T}+  \err_{T,\eta}.
\end{align*}
This concludes the proof.
\end{proof}

\theoremAdvStocWithout*

\begin{proof}
	If $\hat \rho\ge 2T^{-1/4}$, the claim follows by Theorem~\ref{thm:withSlater}.
	Thus, we prove the statement for the case $\tilde \rho=T^{-1/4}$.
	First, we provide an upper bound on the cumulative regret.
	By Lemma \ref{lm:ce}, we have that event the $\CE$ holds with probability at least $1-\delta$. In the rest of the proof, we assume that the event $\CE$ holds, and provide a bound that holds with probability at least $1-\delta$. 
	We have:
	\begin{align*} 
		\sum_{t = 1}^T f_t(\vx_t) & \ge \sum_{t = 1}^{T_1} f_t(\vx_t)\\
		&\ge \sum_{t = 1}^{T_1} f_t(\vxi^*) + (T-T_1) -\frac{1}{\tilde \rho}\err_{T_1,\eta}-  \mleft(1+\frac{2}{\tilde \rho}\mright)\cump_{T_1,\eta}-\frac{1}{\tilde{\rho}}\cumd_{T_1}\\
		&\ge \sum_{t =1}^T f_t(\vxi^*) -\frac{1}{\tilde \rho}\err_{T_1,\eta}-  \mleft(1+\frac{2}{\tilde \rho}\mright)\cump_{T_1,\eta}-\frac{1}{\tilde{\rho}}\cumd_{T_1}\\
		&\ge \sum_{t =1}^{T} f_t(\vxi^*) -\frac{1}{\tilde \rho}\err_{T,\eta}-  \mleft(1+\frac{2}{\tilde \rho}\mright)\cump_{T,\eta}-\frac{1}{\tilde{\rho}}\cumd_{T}\\
		& \ge \sum_{t =1}^{T} f_t(\vxi^*) -T^{1/4}\err_{T,\eta}-  \mleft(1+2T^{1/4}\mright)\cump_{T,\eta}-T^{1/4}\cumd_{T}.
	\end{align*}
    These steps are similar to those used to prove the regret bound in Theorem~\ref{thm:withSlater} (see the proof of Theorem~\ref{thm:withSlater} for further details).
    By recalling that $\vxi^*$ is an optimal solution to Problem~\hyperref[eq:opt lp gen]{$\cP_{\bar f,\bar g}$} and $R^T = T \, \OPT_{\bar f, \bar g} - \sum_{t=1}^T f_t(\vx_t)$, the following bound on the cumulative regret holds:
    \begin{align*} 
    	R^T =  \sum_{t =1}^T f_t(\vxi^*) - \sum_{t = 1}^T f_t(\vx_t) \leq T^{1/4}\err_{T,\eta} + \mleft(1+2T^{1/4}\mright)\cump_{T,\eta}+ T^{1/4}\cumd_{T}.
    \end{align*}
    
    Next, we provide an upper bound on the cumulative constraints violation.

	For every $i \in [m]$, the following holds 
		\begin{align}
			\sum_{t = T_1+1}^T g_{t,i}(\vx_t)&\le -(T-T_1)  \rho  +2\cump_{T-T_1, \eta} + \cumd_{T-T_1}+ \err_{T-T_1,\eta} \nonumber \\
			&\le  2\cump_{T-T_1,\eta} + \cumd_{T-T_1}+ \err_{T-T_1,\eta}, \label{eq:no slater}
		\end{align}
	where the first inequality follows from Lemma~\ref{lm:smallViolation}, while the second one from $\rho\ge 0$.
	Hence, for every $i \in [m]$, it holds
	\begin{align*}
		\sum_{t=1}^T g_{t,i}(\vx_t) &= 
		\sum_{t =1}^{T_1} g_{t,i}(\vx_t) +\sum_{t = T_1+1}^T g_{t,i}(\vx_t) \\
		&\le (T-T_1)  \tilde{\rho}+ M_{\tilde{\rho}}+2\cump_{T-T_1, \eta} + \cumd_{T-T_1}+ \err_{T-T_1,\eta}\\
		&\le (T-T_1)  T^{-1/4}+ M_{T^{-1/4}}+2\cump_{T-T_1, \eta} + \cumd_{T-T_1}+ \err_{T-T_1,\eta}\\
		&\le  T^{3/4}+M_{T^{-1/4}}+ 2\cump_{T-T_1,\eta} + \cumd_{T-T_1}+ \err_{T-T_1,\eta}\\
		&\le  T^{3/4}+M_{T^{-1/4}}+ 2\cump_{T, \eta} + \cumd_{T}+ \err_{T,\eta}.
	\end{align*}
The first inequality follows from Equation~\eqref{eq:no slater} and from the condition in Line~\ref{line:while} of Algorithm~\ref{alg:meta alg known}, which ensures that $ \sum_{t = 1}^{T_1} g_{t,i}(\vx_t) \leq (T-T_1)\tilde \rho + M_{\tilde \rho}$ for every $i \in [m]$. Moreover, the second inequality follows from $\tilde \rho = T^{-1/4}$.
Thus, by letting $i^\star \in \argmax_{i \in [m]} \sum_{t = 1}^{T} g_{t,i}(\vx_t)$, and by recalling that $V^T = \max_{i \in [m]} \sum_{t =1}^T g_{t,i}(\vx_t)$, the following bound on the cumulative constraints violation holds:
\begin{align*}
	V^T = \sum_{t = 1}^T g_{t,i^\star}(\vx_t) \le  T^{3/4}+M_{T^{-1/4}} +2\cump_{T,\eta} + \cumd_{T}+  \err_{T,\eta}.
\end{align*}
This concludes the proof.
\end{proof}

\subsection{Proof omitted from Section~\ref{sec:stoc}}

First, we provide a preliminary result on the value of the Lagrangian game when primal and dual  players are constrained to specific sets of strategies.
\begin{restatable}{lemma}{strongDualityNew}\label{lm:FarFromOpt}
	Let $ f \in \cF$ and $ g \in \cG$ be such that $d_g>0$.
	Moreover, given any $\epsilon>0$, let $\Xi_{\epsilon,g}\coloneqq \left\{\vxi \in \Xi: \max_{i \in [m]} g_i(\vxi)\ge \epsilon \right\}$.
	The following holds:
	\textnormal{
	\[
	\sup_{\vxi\in \Xi_\epsilon}\, \inf_{\vlambda\in \cD_{d_g/2}} \cL_{f,g}(\vxi,\vlambda) \le \OPT_{ f,  g}- \frac{\epsilon}{d_g}.
	\]}
\end{restatable}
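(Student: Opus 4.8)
The plan is to first evaluate the inner infimum in closed form, and then to relate the reward of any constraint-violating mixture to $\OPT_{f,g}$ through a mixing argument with a strictly feasible point. First I would note that $\cD_{d_g/2}=\mleft\{\vlambda\in\R_{\ge 0}^m:\|\vlambda\|_1\le 2/d_g\mright\}$, so that for a fixed $\vxi\in\Xi_{\epsilon,g}$ the dual player's best response is to concentrate all its mass $2/d_g$ on a coordinate $i^\star\in\argmax_{i\in[m]} g_i(\vxi)$. Writing $M\defeq\max_{i\in[m]} g_i(\vxi)\ge\epsilon>0$, this gives the exact identity $\inf_{\vlambda\in\cD_{d_g/2}}\cL_{f,g}(\vxi,\vlambda)=f(\vxi)-\frac{2}{d_g}M$, reducing the problem to controlling $f(\vxi)$.

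The core of the argument is a near-feasibility bound: for every $\vxi\in\Xi$ with $M=\max_{i\in[m]}g_i(\vxi)>0$ it holds $f(\vxi)\le\OPT_{f,g}+M/d_g$. To prove this I would mix $\vxi$ with the strictly feasible mixture $\vxi^\circ$ (which satisfies $g_i(\vxi^\circ)\le -d_g$ for all $i$, as in the proof of Theorem~\ref{thm:duality_restricted}), setting $\hat\vxi\defeq(1-\theta)\vxi+\theta\vxi^\circ$ with the weight $\theta\defeq M/(M+d_g)\in(0,1)$. Since $g$ acts linearly on mixtures, $g_i(\hat\vxi)\le (1-\theta)M-\theta d_g=0$ for every $i$, so $\hat\vxi$ is feasible and hence $f(\hat\vxi)\le\OPT_{f,g}$. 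Because $f\ge 0$ we also have $f(\hat\vxi)\ge(1-\theta)f(\vxi)$, and since $1-\theta=d_g/(M+d_g)>0$ we may divide to obtain $f(\vxi)\le\OPT_{f,g}\,(M+d_g)/d_g$. Finally $\OPT_{f,g}\le 1$ (as $f\le 1$) turns the multiplicative slack into the additive one: $f(\vxi)\le\OPT_{f,g}+M/d_g$.

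Combining the two pieces closes the proof: for any $\vxi\in\Xi_{\epsilon,g}$,
\[
\inf_{\vlambda\in\cD_{d_g/2}}\cL_{f,g}(\vxi,\vlambda)=f(\vxi)-\frac{2M}{d_g}\le\OPT_{f,g}-\frac{M}{d_g}\le\OPT_{f,g}-\frac{\epsilon}{d_g},
\]
where the last inequality uses $M\ge\epsilon$. Taking the supremum over $\vxi\in\Xi_{\epsilon,g}$ yields the claimed bound. Along the way I would record the auxiliary facts that $\OPT_{f,g}\ge f(\vxi^\circ)\ge 0$ (guaranteeing the division by $1-\theta$ preserves the inequality's direction), and that if the suprema defining $d_g$ or $\OPT_{f,g}$ are not attained we instead take $\vxi^\circ$ arbitrarily close to strict feasibility, as already noted in the preliminaries, at the cost of a negligible additive term.

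I expect the only genuinely delicate point to be the choice of the mixing weight $\theta=M/(M+d_g)$, which is exactly what makes $\hat\vxi$ land on the boundary of the feasible region while extracting a $1/(1-\theta)=(M+d_g)/d_g$ factor; the evaluation of the inner infimum and the final arithmetic are routine.
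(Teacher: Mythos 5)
Your proof is correct, and its key step takes a genuinely different route from the paper's. Both arguments begin identically, by evaluating the inner infimum in closed form: since $\max_{i\in[m]} g_i(\vxi)\ge\epsilon>0$, the dual best response concentrates mass $2/d_g$ on a most-violated coordinate, giving $\inf_{\vlambda\in\cD_{d_g/2}}\cL_{f,g}(\vxi,\vlambda)=f(\vxi)-\tfrac{2}{d_g}\max_i g_i(\vxi)$. The paper then finishes by a purely dual argument: it observes that the same evaluation over $\cD_{d_g}$ yields $\inf_{\vlambda\in\cD_{d_g/2}}\cL_{f,g}(\vxi,\vlambda)=\inf_{\vlambda\in\cD_{d_g}}\cL_{f,g}(\vxi,\vlambda)-\tfrac{1}{d_g}g_{i^\star}(\vxi)$, bounds the first term by $\sup_{\vxi\in\Xi}\inf_{\vlambda\in\cD_{d_g}}\cL_{f,g}(\vxi,\vlambda)$, and invokes Theorem~\ref{thm:duality_restricted} to replace that sup-inf by $\OPT_{f,g}$. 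You instead prove the primal near-feasibility bound $f(\vxi)\le\OPT_{f,g}+M/d_g$ by mixing $\vxi$ with the strictly feasible $\vxi^\circ$ at weight $\theta=M/(M+d_g)$, which restores feasibility exactly, and then using $f\ge 0$ and $\OPT_{f,g}\le 1$ to pass from the multiplicative factor $(M+d_g)/d_g$ to an additive $M/d_g$. Your route is more elementary and self-contained: it never uses strong duality (restricted or otherwise), only linearity of $g$ over mixtures and the range assumptions on $f$; incidentally, your mixture is the same device the paper uses later in the adversarial analysis (Lemma~\ref{lm:smallRegAdv}). What the paper's proof buys in exchange is independence from the bounds $f\in[0,1]$ at this point in the argument (it reuses machinery already established in Theorem~\ref{thm:duality_restricted}), and a slightly shorter chain of inequalities. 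One small bookkeeping note: your argument, like the paper's, needs $\vxi^\circ$ with $g_i(\vxi^\circ)\le-d_g$, which may only exist up to an arbitrarily small slack when the supremum defining $d_g$ is not attained; you flag this correctly, and the resulting perturbation only shifts the bound by a negligible additive term.
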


\begin{proof}
	Let $\vxi \in \Xi_{\epsilon,g}$ and $i^\star \in \argmax_{i\in [m]} g_i(\vxi)$.
	Then, 
	\begin{align*} 
	\inf_{\vlambda \in \cD_{d_g/2}} \Big\{ f(\vxi)- \langle\vlambda ,g(\vxi)\rangle \Big\}&= f(\vxi)-\frac{2}{d_g} \, g_{i^\star}(\vxi)\\
	&= \inf_{\vlambda \in \cD_{d_g}} \Big\{ f(\vxi)-\langle\vlambda, g(\vxi)\rangle \Big\} -\frac{1}{d_g}  g_{i^\star}(\vxi)\\
	&\le  \sup_{\vxi\in\Xi}\,\,\inf_{\vlambda\in\cD_{d_g}} \cL_{f,g}(\vxi,\vlambda)  -\frac{1}{d_g}  g_{i^\star}(\vxi)\\
	&\le  \OPT_{f,g}  -\frac{1}{d_g}  g_{i^\star}(\vxi)\\
	&\le \OPT_{f,g} -\frac{\epsilon}{d_g},
	\end{align*}
	where the second inequality follows from Theorem~\ref{thm:duality_restricted}, while the last one holds by the definition of $\Xi_{\epsilon,g}$ and $i^\star$.
\end{proof}

Next, we introduce a new event that extends $\CE$ by considering also the (stochastic) sequence of reward functions $f_t$. Formally, the event is defined as follows.
\begin{definition}
	We denote with $\CES$ the event in which Algorithm~\ref{alg:meta alg known} satisfies the following conditions (recall that $\eta=\delta/3$): (i) event $\CE$ holds; (ii) for every pair of rounds $t,t' \in [T] : t \leq t'$ it holds:
	\begin{itemize}
		\item $|\sum_{\tau = t}^{t'}  f_\tau (\vx_\tau)-\sum_{\tau = t}^{t'} \bar f (\vx_\tau)|\le \err_{t'-t,\eta}$,
		\item $|\sum_{\tau = t}^{t'} f_\tau (\vxi^*)-\sum_{\tau= t}^{t'} \bar f (\vxi^*)|\le \err_{t'-t,\eta}$.
	\end{itemize}
\end{definition}

\begin{restatable}{lemma}{lemmaCleanStoc}\label{lm:cleanEvent}
		After running Algorithm~\ref{alg:meta alg known}, the event $\CES$ holds with probability at least $1-\delta$.
\end{restatable}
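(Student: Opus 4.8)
The plan is to mirror the proof of Lemma~\ref{lm:ce}, extending the concentration event constructed there so that it also controls the two stochastic reward sequences appearing in the definition of $\CES$. The conceptual ingredient is that, in this setting, each $f_\tau$ is drawn independently from $\mu_\cF$, so both $f_\tau(\vx_\tau)-\bar f(\vx_\tau)$ and $f_\tau(\vxi^*)-\bar f(\vxi^*)$ are martingale difference sequences with increments in $[-1,1]$: the decision $\vx_\tau$ is fixed by the history up to round $\tau-1$ and is hence independent of the fresh draw $f_\tau$, while $\vxi^*$ is the optimal solution of Problem~\hyperref[eq:opt lp gen]{$\cP_{\bar f,\bar g}$}, which in the stochastic case depends only on $\mu_\cF,\mu_\cG$ and is therefore a fixed strategy mixture. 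In both cases $\E\mleft[f_\tau(\cdot)\mid \mathcal H_{\tau-1}\mright]=\bar f(\cdot)$, so the increments are centered.

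First I would add to the list of inequalities in the proof of Lemma~\ref{lm:ce} the two reward bounds $|\sum_{\tau=t}^{t'} f_\tau(\vx_\tau)-\sum_{\tau=t}^{t'}\bar f(\vx_\tau)|>2\sqrt{2(t'-t)\ln(1/\eps)}$ and its analogue with $\vxi^*$ in place of $\vx_\tau$, keeping the same choice $\eps=\eta/(18mT^2)$ so that $\err_{t'-t,\eta}=2\sqrt{2(t'-t)\ln(1/\eps)}$ is unchanged. By Azuma–Hoeffding, on any fixed interval $[t,t']$ each of these is violated with probability at most $2\eps$, exactly as for the constraint sequences. I would then redo the union bound. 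The six constraint inequalities contribute failure probability at most $6mT^2\cdot 2\eps=\frac{2}{3}\eta$, as in Lemma~\ref{lm:ce}; the two new reward inequalities, which carry no index $i\in[m]$, range over at most $T^2$ intervals each and contribute at most $2T^2\cdot 2\eps=\frac{2}{9m}\eta\le\frac{2}{9}\eta$. Hence the extended concentration event holds with probability at least $1-\frac{8}{9}\eta\ge 1-\eta$. Taking a further union bound with the two regret events — the regret of $\cRpuno$ bounded by $\cump_{T_1,\eta}$ and that of $\cRpdue$ bounded by $\cump_{T-T_1,\eta}$, each holding with probability at least $1-\eta$ — gives $\mathbb{P}(\CES)\ge 1-3\eta=1-\delta$, since $\CES$ is exactly the conjunction of $\CE$ with the two reward bounds.

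The only delicate point, and the closest thing to an obstacle, is this bookkeeping. Because the statement demands probability $1-\delta$ rather than $1-2\delta$, one cannot simply invoke Lemma~\ref{lm:ce} as a black box and union-bound the reward concentration on top of it, as that would cost an extra $\eta$. Instead the two reward sequences must be folded directly into the single concentration event, exploiting the fact that the original union bound over $6mT^2$ terms already leaves a $\frac{1}{3}\eta$ margin; verifying $\frac{2}{3}\eta+\frac{2}{9}\eta<\eta$ for every $m\ge 1$ is what makes this work. Everything else is a verbatim repetition of the argument in Lemma~\ref{lm:ce}, so no new estimates are needed beyond confirming the martingale difference property of the reward increments.
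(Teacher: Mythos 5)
Your proof is correct and takes essentially the same route as the paper's: both fold the two reward martingale-difference sequences (for $\vx_\tau$ and for the fixed mixture $\vxi^*$) directly into the concentration event of Lemma~\ref{lm:ce}, apply Azuma--Hoeffding interval-by-interval, redo the single union bound so the combined concentration event fails with probability at most $\eta$, and then add the two primal-regret events to reach $3\eta=\delta$. The only difference is bookkeeping: the paper resets $\eps=\eta/(12mT^2)$ and recounts the events as $2\eps(4mT^2+2T^2)$, whereas you keep $\eps=\eta/(18mT^2)$ and absorb the $2T^2$ reward terms into the $\frac{1}{3}\eta$ slack of the original bound --- which is, if anything, more consistent with the paper's stated definition of $\err_{t,\eta}$.
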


\begin{proof}
	Given a desired failure probability $\delta \in (0,1)$, recall that $\eta= \delta/3$ and set $\eps = \eta/12mT^2$.
	Consider the following inequalities in which the differences between expectations and empirical means of reward functions are bounded:
	\begin{align}
	&\mleft|\sum_{\tau = t}^{t'}  f_{\tau} (\vx_{\tau})-\sum_{\tau = t}^{t'} \bar f (\vx_{\tau})\mright|> 2 \sqrt{2(t'-t)\ln \frac{1}{\eps}}, \label{in:5}\\
	&\mleft|\sum_{\tau = t}^{t'}  f_{\tau} (\vxi^*)-\sum_{\tau = t}^{t'} \bar f (\vxi^*)\mright|> 2 \sqrt{2(t'-t)\ln \frac{1}{\eps}} .\label{in:6}
	\end{align}
	By applying the Azuma-Hoeffding inequality to each martingale difference sequence, we get that each inequality holds with probability at most $2 \eps$. We denote by $\CES_\eta$ the event in which Equations~\eqref{in:5}~and~\eqref{in:6} hold for every $t\le t' \in [T] : t<t'$ and event $\CE_\eta$ holds (see the proof of Lemma~\ref{lm:ce} for the definition of event $\CE_\eta$).
	By a union bound, we have that:
	\begin{align*}
	\mathbb{P}\mleft( \mathbb{\CES_\eta} \mright) \geq 1- 2\eps(4mT^2+2T^2) \geq 1- \eta .
	\end{align*}
	Therefore, event $\CES_\eta$ holds with probability at least $1 - \eta$.
	Moreover, let us recall that:
	\begin{align*}
	\err_{t'-t,\eta} = \sqrt{8(t'-t)\ln \frac{12mT^2}{\eta}} = 2 \sqrt{2(t'-t)\ln \frac{1}{\eps}}.
	\end{align*}
	
	Now, consider the event $\CES$ in which Algorithm~\ref{alg:meta alg known} satisfies the following conditions: (i) the regret incurred by $\cRpuno$ after $T_1$ rounds is upper bounded by $\cump_{T_1,\eta}$; (ii) the regret cumulated by $\cRpdue$ after the remaining $T - T_1$ rounds is upper bounded by $\cump_{T- T_1,\eta}$; and (iii) event $\CES_\eta$ holds.
	Recall that each one of the conditions (i), (ii) and (iii) holds with probability at least $1 - \eta$; hence, by a union bound we get:
	\begin{align*}
	\mathbb{P}\mleft( \CES \mright) \geq 1 - 3 \eta = 1-\delta.
	\end{align*}
	This concludes the proof.
\end{proof}

As a first step, we prove that the primal regret minimizer gets a per-round utility that is close to the value $\OPT_{\bar f, \bar g}$.
Formally:

\begin{restatable}{lemma}{lemmaStocPrimal} \label{lm:stocstocuno}
	If the event $\CES$ holds, then, for every round $\tau\in [T_1]$ the following inequality holds:
	\textnormal{
	\begin{align*}
		\sum_{t = 1}^\tau  \cL_{f_t, g_t}(\vx_t,\vlambda_t) \ge \tau \, \OPTLP_{\bar f, \bar g}- \mleft(1+\frac{2}{\tilde \rho}\mright)\cump_{\tau,\eta}-\mleft(1+\frac{1}{\tilde \rho}\mright) \err_{\tau,\eta}.
	\end{align*}}
\end{restatable}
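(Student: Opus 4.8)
The plan is to show that the primal regret minimizer $\cRpuno$ accumulates a Lagrangian utility that tracks the value of the stochastic repeated game, which by strong duality equals $\OPT_{\bar f,\bar g}$. The argument rests on three ingredients: the no-regret property of $\cRpuno$ against the fixed comparator $\vxi^*$, the concentration bounds bundled into the event $\CES$ (which let us replace the realized functions $f_t,g_t$ by their means $\bar f,\bar g$), and the optimality and feasibility of $\vxi^*$ for Problem $\cP_{\bar f,\bar g}$ together with the sign constraint $\vlambda_t\ge\vzero$.

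First I would invoke the no-regret property of $\cRpuno$, whose observed utility at each round $t$ of the play phase is exactly $\lossp(\vx)=\cL_{f_t,g_t}(\vx,\vlambda_t)$. Since the range of this utility is $\mleft[-1/\tilde\rho,\,1+1/\tilde\rho\mright]$, of width $1+2/\tilde\rho$, comparing against $\vxi^*$ gives, on $\CES$,
\[
\sum_{t=1}^{\tau}\cL_{f_t,g_t}(\vx_t,\vlambda_t)\ge \sum_{t=1}^{\tau}\cL_{f_t,g_t}(\vxi^*,\vlambda_t)-\mleft(1+\frac{2}{\tilde\rho}\mright)\cump_{\tau,\eta}.
\]
Next I would expand $\cL_{f_t,g_t}(\vxi^*,\vlambda_t)=f_t(\vxi^*)-\langle\vlambda_t,g_t(\vxi^*)\rangle$ and use the Azuma--Hoeffding bounds encoded in $\CES$ to pass to the averaged functions: the reward term costs $\err_{\tau,\eta}$, while the constraint term, using $\|\vlambda_t\|_1\le 1/\tilde\rho$ for $\vlambda_t\in\cD_{\tilde\rho}$, costs $\frac{1}{\tilde\rho}\err_{\tau,\eta}$. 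This yields
\[
\sum_{t=1}^{\tau}\cL_{f_t,g_t}(\vxi^*,\vlambda_t)\ge \sum_{t=1}^{\tau}\mleft(\bar f(\vxi^*)-\langle\vlambda_t,\bar g(\vxi^*)\rangle\mright)-\mleft(1+\frac{1}{\tilde\rho}\mright)\err_{\tau,\eta}.
\]
Finally, since $\vxi^*$ is optimal for $\cP_{\bar f,\bar g}$ we have $\bar f(\vxi^*)=\OPT_{\bar f,\bar g}$ and $\bar g(\vxi^*)\le\vzero$; because each $\vlambda_t\ge\vzero$, the penalty $-\langle\vlambda_t,\bar g(\vxi^*)\rangle$ is nonnegative and can be dropped, so $\sum_{t=1}^\tau\mleft(\bar f(\vxi^*)-\langle\vlambda_t,\bar g(\vxi^*)\rangle\mright)\ge \tau\,\OPT_{\bar f,\bar g}$. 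Chaining the three displays gives the claim.

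The computation itself is routine; the points requiring care are bookkeeping ones. The first is that the bound must hold simultaneously for every prefix length $\tau\le T_1$, not merely at $\tau=T_1$: this is fine because the regret minimizers provide anytime guarantees and the concentration events in $\CES$ are stated uniformly over all intervals $\{1,\dots,\tau\}$. The second, where most of the constant-tracking happens, is matching the coefficients: the factor $1+2/\tilde\rho$ on the regret is the width of the primal utility range, while the split $1+1/\tilde\rho$ on $\err_{\tau,\eta}$ is the reward deviation (coefficient $1$) plus the dual-weighted constraint deviation (coefficient $1/\tilde\rho$, inherited from $\|\vlambda_t\|_1\le 1/\tilde\rho$). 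I do not expect a genuine obstacle, since strong duality (Theorem~\ref{thm:duality_restricted}) is precisely what guarantees that $\vxi^*$ is the right comparator and that the attained value is exactly $\OPT_{\bar f,\bar g}$.
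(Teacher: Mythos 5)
Your proof is correct and follows essentially the same route as the paper's: no-regret of $\cRpuno$ against the comparator $\vxi^*$ with range width $1+2/\tilde\rho$, then the $\CES$ concentration bounds to replace $f_t,g_t$ by $\bar f,\bar g$ at cost $(1+1/\tilde\rho)\err_{\tau,\eta}$, then the optimality of $\vxi^*$. Your final step is just a streamlined version of the paper's: where the paper averages the duals and passes through $\inf_{\vlambda\in\cD_{\tilde\rho}}\cL_{\bar f,\bar g}(\vxi^*,\vlambda)=\OPT_{\bar f,\bar g}$, you drop the nonnegative penalty $-\langle\vlambda_t,\bar g(\vxi^*)\rangle$ termwise, which rests on exactly the same facts ($\bar g(\vxi^*)\le\vzero$, $\vlambda_t\ge\vzero$) and correctly avoids any genuine appeal to strong duality.
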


\begin{proof}
	Let $\vxi^\star$ be an optimal solution to Problem~\hyperref[eq:opt lp gen]{$\cP_{\bar f,\bar g}$}, and let $\bar \vlambda= \frac{1}{\tau}\sum_{t = 1}^\tau \vlambda_t$. 
	Then, it holds
	\begin{align*}
	\sum_{t=1}^\tau \cL_{f_t,g_t}(\vx_t,\vlambda_t) & \ge \sum_{t=1}^\tau \cL_{f_t,g_t}(\vxi^\ast,\vlambda_t) - \mleft(1+\frac{2}{\tilde \rho}\mright)\cump_{\tau,\eta} \\
	& \ge \sum_{t=1}^\tau \cL_{\bar f, \bar g}(\vxi^\ast,\vlambda_t) - \mleft(1+\frac{2}{\tilde \rho}\mright)\cump_{\tau,\eta}-\mleft(1+\frac{1}{\tilde \rho}\mright) \err_{\tau,\eta} \\
	& = \sum_{t=1}^\tau \cL_{\bar f, \bar g}(\vxi^\ast,\bar \vlambda) - \mleft(1+\frac{2}{\tilde \rho}\mright)\cump_{\tau,\eta}-\mleft(1+\frac{1}{\tilde \rho}\mright) \err_{\tau,\eta} \\
	&\geq  \tau \inf_{\vlambda \in \cD_{\tilde \rho}}   \cL_{\bar f,\bar g}(\vxi^*,\vlambda)- \mleft(1+\frac{2}{\tilde \rho}\mright)\cump_{\tau,\eta}-\mleft(1+\frac{1}{\tilde \rho}\mright) \err_{\tau,\eta}\\
	& = \tau \sup_{\vxi\in\Xi } \inf_{\vlambda\in \cD_{\tilde \rho}}  \cL_{\bar f,\bar g} (\vxi,\vlambda) - \mleft(1+\frac{2}{\tilde \rho}\mright)\cump_{\tau,\eta}-\mleft(1+\frac{1}{\tilde \rho}\mright) \err_{\tau,\eta}\\
	& = \tau \OPTLP_{\bar f, \bar g}- \mleft(1+\frac{2}{\tilde \rho}\mright)\cump_{\tau,\eta}-\mleft(1+\frac{1}{\tilde \rho}\mright) \err_{\tau,\eta},
	\end{align*}
	where the first inequality follows from the no-regret property of the primal regret minimizer, the second one from the definition of the event $\CES$, and the third one from the definition of $\vxi^\ast$. Moreover, the first equality follows from the fact that $\bar g$ is independent from $t$.
	This concludes the proof.
\end{proof}

Now, we show that the dual regret minimizer gets a per-round utility that is close to the value $\OPT_{\bar f, \bar g}$. 
Moreover, the attained utility increases by an additive factor proportional to the primal violation.
This can be proved only in the setting with stochastic reward functions. 
Indeed, in this setting the primal and dual regret minimizers are playing a stochastic repeated zero-sum game that converges to an equilibrium. 
Notice that this is \emph{not} true when the reward functions are adversarial.
\begin{restatable}{lemma}{lemmaStocDual} \label{lm:stocstocdue}
	If event $\CES$ holds and Condition \ref{ass:strictly stoc} is satisfied, then for each $\tau\in [T_1]$ and each $i \in [m]$
	\textnormal{
	\begin{align*}
		\sum_{t=1}^\tau \cL_{f_t,g_t}(\vxi_t,\vlambda_t)  \le \tau \OPTLP_{\bar f, \bar g} +\frac{1}{\tilde \rho} \cumd_\tau+\mleft(1+\frac{2}{\tilde\rho}\mright) \err_{\tau,\eta}  - \sum_{t=1}^\tau g_{t,i}(\vx_t).
	\end{align*}}
\end{restatable}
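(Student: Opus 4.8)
The plan is to upper bound the cumulative Lagrangian value of the play phase by pitting the no-regret guarantee of the dual minimizer $\cRduno$ against a single, carefully chosen \emph{fixed} comparator dual vector, and then to transfer the resulting inequality from the realized functions $f_t,g_t$ to the averaged functions $\bar f,\bar g$ through the concentration bounds bundled into event $\CES$. The comparator I would use is $\vlambda^\circ+\ve_i$, where $\vlambda^\circ$ is a dual-optimal solution of the averaged game and $\ve_i\in\R^m$ is the $i$-th canonical basis vector. The point of adding $\ve_i$ is that its contribution $-\langle\ve_i,g_t(\vx_t)\rangle=-g_{t,i}(\vx_t)$ is exactly the term $-\sum_t g_{t,i}(\vx_t)$ that must appear on the right-hand side of the claim.

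Concretely, I would first invoke Theorem~\ref{thm:duality_restricted}. Under Condition~\ref{ass:strictly stoc} we have $\rho=d_{\bar g}>0$ (so Slater holds for the averaged problem), and strong duality on $\cD_\rho$ yields a dual-optimal $\vlambda^\circ$ with $\|\vlambda^\circ\|_1\le 1/\rho$ and $\sup_{\vxi\in\Xi}\cL_{\bar f,\bar g}(\vxi,\vlambda^\circ)=\OPTLP_{\bar f,\bar g}$; since $\cL$ is affine in the mixture, this gives $\bar f(\vx_t)-\langle\vlambda^\circ,\bar g(\vx_t)\rangle\le\OPTLP_{\bar f,\bar g}$ for every played point $\vx_t$. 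The step I expect to be most delicate is verifying that the comparator is admissible for $\cRduno$, i.e.\ that $\vlambda^\circ+\ve_i\in\cD_{\tilde\rho}$: here $\|\vlambda^\circ+\ve_i\|_1\le 1/\rho+1\le 2/\rho\le 1/\tilde\rho$, where the middle inequality uses $\rho\le 1$ (as $g_t\in[-1,1]^m$) and the last uses $\tilde\rho=\hat\rho/2\le\rho/2$ together with $\hat\rho\le\rho$. This is precisely where the factor-two slack built into $\tilde\rho=\hat\rho/2$ is consumed, so it is the crux of the argument.

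Next I would apply the no-regret property of $\cRduno$ on $\cD_{\tilde\rho}$, whose regret is $\cumd_\tau/\tilde\rho$, against the fixed comparator $\vlambda^\circ+\ve_i$, obtaining $\sum_{t=1}^\tau\langle\vlambda_t,g_t(\vx_t)\rangle\ge\sum_{t=1}^\tau\langle\vlambda^\circ+\ve_i,g_t(\vx_t)\rangle-\frac{1}{\tilde\rho}\cumd_\tau$. Substituting this into $\sum_t\cL_{f_t,g_t}(\vx_t,\vlambda_t)=\sum_t f_t(\vx_t)-\sum_t\langle\vlambda_t,g_t(\vx_t)\rangle$ and peeling off the $\ve_i$-component gives $\sum_t\cL_{f_t,g_t}(\vx_t,\vlambda_t)\le\sum_t f_t(\vx_t)-\sum_t\langle\vlambda^\circ,g_t(\vx_t)\rangle-\sum_t g_{t,i}(\vx_t)+\frac{1}{\tilde\rho}\cumd_\tau$.

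Finally I would pass to the averaged functions using $\CES$: replacing $\sum_t f_t(\vx_t)$ by $\sum_t\bar f(\vx_t)$ costs $\err_{\tau,\eta}$, and replacing $\sum_t\langle\vlambda^\circ,g_t(\vx_t)\rangle$ by $\sum_t\langle\vlambda^\circ,\bar g(\vx_t)\rangle$ costs $\|\vlambda^\circ\|_1\,\err_{\tau,\eta}\le\frac{1}{\rho}\err_{\tau,\eta}$, obtained by applying the coordinatewise bound of Definition~\ref{def:event_e} and weighting by $\vlambda^\circ_i\ge 0$; the term $-\sum_t g_{t,i}(\vx_t)$ I would deliberately leave in terms of the realized $g_t$, as that is how it appears in the claim. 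Using $\sum_t[\bar f(\vx_t)-\langle\vlambda^\circ,\bar g(\vx_t)\rangle]\le\tau\,\OPTLP_{\bar f,\bar g}$ from dual optimality and collecting the errors as $(1+\tfrac{1}{\rho})\err_{\tau,\eta}\le(1+\tfrac{2}{\tilde\rho})\err_{\tau,\eta}$ (since $\tilde\rho\le\rho$) yields exactly the stated inequality. Apart from the comparator choice and its admissibility, everything else is routine bookkeeping with the $\CES$ concentration inequalities.
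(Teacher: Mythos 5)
Your proof is correct, and it takes a genuinely different route from the paper's. The paper compares the dual player against the best response $\vlambda^*\in\argmin_{\vlambda\in\cD_{\tilde\rho}}\sum_{t=1}^\tau\cL_{\bar f,\bar g}(\vxi_t,\vlambda)$, passes to the time-averaged mixture $\bar\vxi=\frac{1}{\tau}\sum_{t=1}^\tau\vxi_t$, shows that $\bar\vxi$ lies in the level set $\Xi_{\epsilon,\bar g}$ of mixtures violating some constraint by at least $\epsilon=\frac{1}{\tau}\left(\max_{i}\sum_{t=1}^\tau g_{t,i}(\vx_t)-\err_{\tau,\eta}\right)$, and then invokes the auxiliary Lemma~\ref{lm:FarFromOpt}, which caps the value of the restricted Lagrangian game on $\Xi_{\epsilon,\bar g}$ (with duals in $\cD_{\rho/2}$) at $\OPT_{\bar f,\bar g}-\epsilon/\rho$; the violation term in the statement is then recovered from $-\tau\epsilon/\rho$. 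You instead play the dual no-regret guarantee against the single fixed comparator $\vlambda^\circ+\ve_i$, letting its $\ve_i$ coordinate extract $-\sum_{t=1}^\tau g_{t,i}(\vx_t)$ directly, while strong duality (Theorem~\ref{thm:duality_restricted}) bounds the $\vlambda^\circ$ part by $\tau\,\OPT_{\bar f,\bar g}$. Notably, both arguments spend the factor-two slack in $\tilde\rho=\hat\rho/2\le\rho/2$ at their crux: the paper uses it to pass from $\cD_{\tilde\rho}$ to $\cD_{\rho/2}$ so that Lemma~\ref{lm:FarFromOpt} applies, whereas you use it to certify admissibility of the augmented comparator, $\|\vlambda^\circ+\ve_i\|_1\le 1/\rho+1\le 2/\rho\le 1/\tilde\rho$. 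Your route is more elementary and self-contained: it needs neither Lemma~\ref{lm:FarFromOpt} nor the set $\Xi_{\epsilon,\bar g}$, and it is indifferent to the sign of the cumulative violation, whereas the paper's chain implicitly uses $\max_i\sum_{t=1}^\tau g_{t,i}(\vx_t)\ge 0$ at the step replacing $-\frac{1}{\rho}\max_i\sum_t g_{t,i}(\vx_t)$ by $-\max_i\sum_t g_{t,i}(\vx_t)$. What the paper's argument retains is the zero-sum-game reading (the dual best response to the empirical play, matching the equilibrium intuition stated before Theorem~\ref{thm:sto}); as a proof of this particular lemma, yours is shorter and avoids edge cases. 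The only point worth making explicit is the existence of the dual optimizer $\vlambda^\circ$: it follows because $\cD_\rho$ is compact and $\vlambda\mapsto\sup_{\vxi\in\Xi}\cL_{\bar f,\bar g}(\vxi,\vlambda)$ is a supremum of affine functions, hence convex and lower semicontinuous; alternatively, an approximate minimizer suffices, exactly as in the paper's treatment of $\vxi^*$ and $\vxi^\circ$.
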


\begin{proof}
	In the following, let $\vlambda^*\in \argmin_{\vlambda \in \cD_{\tilde \rho}} \sum_{t=1}^\tau  \cL_{\bar f, \bar g}(\vxi_t,\vlambda) $, $\epsilon \coloneqq \frac{\max_{i \in [m]}\sum_{t =1}^\tau g_{t,i}(\vx_t)- \err_{\tau,\eta}}{\tau}$, and $\bar \vxi \coloneqq \frac{1}{\tau}\sum_{t = 1}^{\tau} \vxi_t$, where $\vxi_t \in \Xi$ denotes the strategy mixture that plays deterministically $\vx_t$. 
	Moreover, let us define the set $\Xi_{\epsilon,\bar g}\coloneqq\{\vxi \in \Xi: \max_{i \in [m]} \bar{g}_i(\vxi)\ge \epsilon\}$.
	
	As a first step, we prove that $\bar \vxi \in \Xi_{\epsilon,\bar g}$. In particular, since the event $\CES$ holds, we have that
	\begin{align*}
	\max_{i \in [m]} \bar{g}_i(\bar \vxi)\ge \frac{\sum_{t=1}^\tau \max_{i \in [m]} {g}_i(\bar \vxi)- \err_{\tau,\eta}}{\tau}=\epsilon
	\end{align*}
	For every $\tau \in [T_1]$, we have:
	{\allowdisplaybreaks\begin{subequations}
	\begin{align}
	\sum_{t=1}^\tau  \cL_{f_t,g_t}(\vx_t,\vlambda_t) & \le \sum_{t=1}^\tau \cL_{f_t,g_t}(\vx_t,\vlambda^*) + \frac{1}{ \tilde \rho} \cumd_\tau \label{long:one}\\
	& \hspace{-1cm} \le \sum_{t=1}^\tau \cL_{\bar f,\bar g}(\vx_t,\vlambda^*) +\frac{1}{\tilde \rho} \cumd_\tau+\mleft(1+\frac{1}{\tilde \rho}\mright) \err_{\tau,\eta}\label{long:two} \\
	& \hspace{-1cm} \le  \inf_{\vlambda \in \cD_{ \tilde \rho}} \sum_{t=1}^\tau \cL_{\bar f, \bar g}(\vx_t,\vlambda) +\frac{1}{\tilde \rho} \cumd_\tau+\mleft(1+\frac{1}{\tilde \rho}\mright)  \err_{\tau,\eta} \label{long:three}\\
	& \hspace{-1cm} =  \tau \inf_{\vlambda \in \cD_{ \tilde \rho}}  \cL_{\bar f,\bar g}(\bar \vxi,\vlambda) +\frac{1}{\tilde \rho} \cumd_\tau+\mleft(1+\frac{1}{\tilde\rho}\mright)  \err_{\tau,\eta}\label{long:four}\\
	& \hspace{-1cm} =  \tau \inf_{\vlambda \in \cD_{ \rho/2}}  \cL_{\bar f,\bar g}(\bar \vxi,\vlambda) +\frac{1}{\tilde \rho} \cumd_\tau+\mleft(1+\frac{1}{\tilde\rho}\mright)  \err_{\tau,\eta}\label{long:boooo}\\
	& \hspace{-1cm} \le  \tau \sup_{\vxi \in \Xi_{\epsilon,\bar g}}\inf_{\lambda \in \cD_{ \rho/2}} \cL_{\bar f, \bar g}(\vxi,\vlambda) +\frac{1}{\tilde \rho} \cumd_\tau+\mleft(1+\frac{1}{\tilde\rho}\mright) \err_{\tau,\eta} \label{long:six} \\
	& \hspace{-1cm}\le \tau \sup_{\vxi \in \Xi}\inf_{\vlambda \in \cD_{ \rho}} \mleft(\cL_{\bar f, \bar g}(\vxi,\vlambda) - \frac{\epsilon}{ \rho}\mright) +\frac{1}{\tilde \rho} \cumd_\tau+\mleft(1+\frac{1}{\tilde \rho}\mright)  \err_{\tau,\eta}  \label{long:seven} \\
	& \hspace{-1cm}= \tau\mleft( \OPT_{\bar f, \bar g} -\frac{\epsilon}{  \rho}\mright) +\frac{1}{\tilde \rho} \cumd_\tau+\mleft(1+\frac{1}{\tilde \rho}\mright)  \err_{\tau,\eta} \label{long:nine}   \\
	& \hspace{-1cm}= \tau\OPT_{\bar f, \bar g} -\tau \frac{\max_{i' \in [m]}\sum_{t =1}^\tau g_{t,i'}(\vx_t)- \err_{\tau,\eta}}{\tau  \rho} +\frac{1}{\tilde \rho} \cumd_\tau+\mleft(1+\frac{1}{ \tilde \rho}\mright)  \err_{\tau,\eta} \label{long:ten} \\
	& \hspace{-1cm}\leq \tau\OPT_{\bar f, \bar g} +\frac{1}{\tilde \rho} \cumd_\tau+\mleft(1+\frac{2}{\tilde \rho}\mright)  \err_{\tau,\eta} - \max_{i' \in [m]}\frac{\sum_{t =1}^\tau g_{t,i'}(2\vx_t)}{  \rho} \label{long:eleven} \\
	& \hspace{-1cm}\leq \tau\OPT_{\bar f, \bar g} +\frac{1}{\tilde \rho} \cumd_\tau+\mleft(1+\frac{2}{\tilde \rho}\mright)  \err_{\tau,\eta} - \max_{i' \in [m]} \sum_{t =1}^\tau g_{t,i'}(\vx_t) \label{long:twelve}\\
	& \hspace{-1cm} \leq \tau\OPT_{\bar f, \bar g} +\frac{1}{\tilde \rho} \cumd_\tau+\mleft(1+\frac{2}{\tilde \rho}\mright)  \err_{\tau,\eta} - \sum_{t =1}^\tau g_{t,i}(\vx_t) \qquad \qquad \qquad \quad \forall i \in [m],\label{long:thirteen}
	\end{align}
	\end{subequations}}%
	where Equation~\eqref{long:one} is given by the no-regret property of the dual regret minimizer, and Equation~\eqref{long:two} by the definition of the event $\bar \CE$, which holds by assumption.
	Moreover, Equation~\eqref{long:four} follows from the fact that $\bar f$ and $\bar g$ are independent from $t$, Equation~\eqref{long:boooo} follows from $\tilde \rho =\hat \rho /2\le  \rho /2 $, and Equation~\eqref{long:six} from $\bar \vxi \in \Xi_{\epsilon,\bar g}$.
	Finally, Equation~\eqref{long:seven} follows from Lemma~\ref{lm:FarFromOpt}, Equation~\eqref{long:ten} by definition of $\epsilon$, and Equation~\eqref{long:eleven} by $\tilde \rho \le \rho$.
\end{proof}

\theoremStoc*

\begin{proof}
	We prove the statement of the theorem by considering two cases.
	
	\paragraph{Case ``Condition \ref{ass:strictly stoc} holds''.}
	By Lemma \ref{lm:ce}, event $\CE$ holds with probability at least $1 - \delta$. 
	In the rest of the proof, we assume that the event $\CE$ holds, and we provide a bound that holds with probability at least $1-\delta$.
	For every $\tau \in [T_1]$, we have: 
	\begin{align*}
		\sum_{t = 1}^{\tau} g_t(\vx_t) &\le   \tau \OPT_{\bar f, \bar g} -  \sum_{t = 1}^{\tau} \cL_{f_t, g_t}(\vx_t,\vlambda_t) +\frac{1}{\tilde \rho} \cumd_\tau+\mleft(1+\frac{2}{\tilde \rho}\mright) \err_{\tau,\eta}  \\
		&   \le \mleft(1+\frac{2}{\tilde \rho}\mright)\cump_{\tau,\eta}+\mleft(1+\frac{1}{\tilde\rho}\mright) \err_{\tau,\eta} +\frac{1}{\tilde \rho} \cumd_\tau+\mleft(1+\frac{2}{\tilde\rho}\mright) \err_{\tau,\eta}\\
		&= \mleft(2+\frac{3}{\tilde \rho}\mright) \err_{\tau,\eta} + \mleft(1+\frac{2}{\tilde \rho}\mright)\cump_{\tau,\eta} +\frac{1}{\tilde \rho} \cumd_\tau \\
		&\le \frac{2}{\tilde \rho} \sqrt{T} -1 + \mleft(2+\frac{3}{\tilde \rho}\mright) \err_{\tau,\eta} + \mleft(1+\frac{2}{\tilde \rho}\mright)\cump_{\tau,\eta} +\frac{1}{\tilde \rho} \cumd_\tau \\
		&= M_{\tilde \rho} - 1,
	\end{align*}

where the first inequality follows from Lemma \ref{lm:stocstocuno}, the second one from Lemma \ref{lm:stocstocdue}, the third one from the fact that $\frac{2}{\tilde \rho} \sqrt{T} -1 \geq 0$, being $\tilde \rho \leq 1$, and the last equation follows from the definition of $M_{\tilde \rho}$.
This implies that the algorithm never enters the recovery phase when Condition \ref{ass:strictly stoc} holds.

\paragraph{Case ``Condition \ref{ass:strictly stoc} does \emph{not} hold''.}
By Lemma \ref{lm:ce}, event $\CE$ holds with probability at least $1 - \delta$. 
In the rest of the proof, we assume that the event $\CE$ holds, and we provide a bound that holds with probability at least $1-\delta$.
	Suppose by contradiction that $T_1 <T$.
	This implies that a constraint $i \in [m]$ is violated by at least $M_{T^{-1/4}} - 1$.
	Let $i^\star \in \argmax_{i \in [m]} \sum_{t = 1}^{T_1} g_{t,i}(\vx_t)$ be one of the most violated constraints during the play phase.
	Then, we have:
	{\allowdisplaybreaks\begin{align*}
		\sum_{t=1}^{T_1}  \cL_{f_t,g_t}(\vx_t,\vlambda_t)& = \sum_{t=1}^{T_1} \Big( f(\vx_t)- \langle \vlambda_t , g_t(\vx_t)\rangle \Big)\\
		&\le T_1 - \sum_{t=1}^{T_1} \langle \vlambda_t, g_t(\vx_t) \rangle \\
		&\le T_1 - \sum_{t=1}^{T_1} \frac{1}{T^{-1/4}} g_{t,i^\star}(\vx_t)+ T^{1/4} \cumd_{T_1}  \\
		& \le T_1- T^{1/4} (M_{T^{-1/4}}-1) + T^{1/4} \cumd_{\tau_1}  \\
		&< - \mleft(1+\frac{2}{T^{-1/4}}\mright)\cump_{\tau,\eta}- \frac{1}{T^{-1/4}}\err_{\tau,\eta},
	\end{align*}}%
	where the second inequality follows from the no-regret property of the dual regret minimizer and the fact that, when Condition \ref{ass:strictly stoc} does \emph{not} hold, $\tilde \rho = T^{-1/4}$. 
	The last inequality follows from the definition of $M_{T^{-1/4}}$.
	Then, the result above allows us to reach the desired contradiction when compared with the following one.
	In particular, for every $\tau \in [T_1]$, we have:
	\begin{align*}
		\sum_{t=1}^\tau  \cL_{f_t,g_t}(\vx_t,\vlambda_t) & \ge \sum_{t=1}^\tau  \cL_{f_t,g_t}(\vxi^\circ,\vlambda_t)- \mleft(1+\frac{2}{T^{-1/4}}\mright) \cump_{\tau,\eta} \\
		&\ge \sum_{t=1}^\tau  \cL_{f_t,\bar g}(\vxi^\circ,\vlambda_t)- \frac{1}{T^{-1/4}}\err_{\tau,\eta} - \mleft(1+\frac{2}{T^{-1/4}}\mright) \cump_{\tau,\eta}\\
		&\ge - \frac{1}{T^{-1/4}}\err_{\tau,\eta} - \mleft(1+\frac{2}{T^{-1/4}}\mright) \cump_{\tau,\eta} ,
	\end{align*}
	where the first inequality follows from the no-regret property of the primal regret minimizer, the second one follows from the fact that event $\CE$ holds, and the third one from the feasibility of $\vxi^\circ$.
\end{proof}

\subsection{ Proofs omitted from Section~\ref{sec:adv}}

As a first step, we provide a lower bound for the cumulative reward achieved during the play phase.
In particular, we show that it achieves at least a ${\rho}/{(1+\rho)}$ fraction of the value obtained by an optimal solution in the first $T_1$ rounds. 

\begin{restatable}{lemma}{lemmaSmallRegretAdv}\label{lm:smallRegAdv}
	If Condition $\ref{ass:strictly stoc}$ is satisfied, then, with probability at least $1-\eta$, at round $T_1$ of Algorithm~\ref{alg:meta alg known} it holds that: 
	\begin{align*}
		\sum_{t = 1}^{T_1} f_t(\vx_t) \ge \frac{\rho}{1+\rho} \sum_{t = 1}^{T_1}  f_t(\vxi^*) +\mleft(T-T_1\mright) - \mleft(1+\frac{2}{\tilde\rho}\mright)\cump_{T_1,\eta}- \frac{1}{\tilde \rho}\cumd_{\tau_1}.  
	\end{align*}
\end{restatable}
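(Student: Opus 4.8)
The plan is to follow the same primal--dual bookkeeping as in the proof of Lemma~\ref{lm:smallReg}, but to compare the primal iterates against a \emph{mixed} comparator rather than against $\vxi^*$ alone. The reason is that, adversarially, $\vxi^*$ is only feasible for the \emph{average} constraint $\bar g$ and only over all $T$ rounds, so there is no way to control $\sum_{t=1}^{T_1}\langle\vlambda_t,g_t(\vxi^*)\rangle$ as was done in the stochastic case (where $g_t(\vxi^*)$ was replaced by $\bar g(\vxi^*)\le\mathbf 0$ up to an Azuma--Hoeffding term). Instead I would introduce
\[
\hat\vxi\defeq \frac{\rho}{1+\rho}\,\vxi^*+\frac{1}{1+\rho}\,\vxi^\circ\in\Xi ,
\]
and first check that $\hat\vxi$ is feasible \emph{round by round}: since $g_{t,i}(\vxi^*)\le 1$ and, by the adversarial definition of $\rho$, $g_{t,i}(\vxi^\circ)\le-\rho$ for all $t\in[T]$ and $i\in[m]$ (Condition~\ref{ass:strictly stoc} guarantees $\rho\ge\hat\rho>0$, so $\vxi^\circ$ is genuinely strictly feasible), the two contributions cancel exactly:
\[
g_{t,i}(\hat\vxi)\le\frac{\rho}{1+\rho}\cdot 1+\frac{1}{1+\rho}\cdot(-\rho)=0\qquad\text{for all }t\in[T],\ i\in[m].
\]

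Next I would apply the no-regret guarantee of $\cRpuno$, whose play-phase utility is $\vx\mapsto f_t(\vx)-\langle\vlambda_t,g_t(\vx)\rangle$ with range of width $1+2/\tilde\rho$, against the comparator $\hat\vxi$ (the bound against a fixed point of $\cX$ transfers to any mixture of $\Xi$ by taking expectations), giving
\[
\sum_{t=1}^{T_1}\mleft(f_t(\vx_t)-\langle\vlambda_t,g_t(\vx_t)\rangle\mright)\ge\sum_{t=1}^{T_1}\mleft(f_t(\hat\vxi)-\langle\vlambda_t,g_t(\hat\vxi)\rangle\mright)-\mleft(1+\frac{2}{\tilde\rho}\mright)\cump_{T_1,\eta}.
\]
I would then simplify the right-hand side using the two properties just established: $\langle\vlambda_t,g_t(\hat\vxi)\rangle\le 0$ because $\vlambda_t\ge\mathbf 0$ and $g_t(\hat\vxi)\le\mathbf 0$, and $f_t(\hat\vxi)\ge\frac{\rho}{1+\rho}f_t(\vxi^*)$ because $f_t\ge 0$. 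Rearranging yields
\[
\sum_{t=1}^{T_1}f_t(\vx_t)\ge\frac{\rho}{1+\rho}\sum_{t=1}^{T_1}f_t(\vxi^*)+\sum_{t=1}^{T_1}\langle\vlambda_t,g_t(\vx_t)\rangle-\mleft(1+\frac{2}{\tilde\rho}\mright)\cump_{T_1,\eta}.
\]
To finish, I would import verbatim the lower bound $\sum_{t=1}^{T_1}\langle\vlambda_t,g_t(\vx_t)\rangle\ge(T-T_1)-\frac{1}{\tilde\rho}\cumd_{T_1}$ of Equation~\eqref{eq:gain0}; its derivation uses only the dual no-regret property and the exit condition of the \textsf{while} loop of Line~\ref{line:while}, and is indifferent to whether the $g_t$ are stochastic or adversarial. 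Substituting gives exactly the claimed inequality. The failure probability $\eta$ comes solely from the primal regret bound of $\cRpuno$; the dual bound is deterministic and, crucially, no concentration of the (adversarial) constraints is invoked.

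The only genuinely new step, and the one deserving the most care, is the choice of $\hat\vxi$ and the verification of round-by-round feasibility; everything else is a transcription of the stochastic argument. The conceptual point is that adversarial constraints force us to ``pay'' for per-round feasibility by blending a $\tfrac{1}{1+\rho}$ fraction of the strictly feasible $\vxi^\circ$ into the comparator, and it is precisely this blending that shrinks the reward comparison from the full $\sum_t f_t(\vxi^*)$ (available in the stochastic regime) down to the $\tfrac{\rho}{1+\rho}$ fraction stated here. I would double-check that the mixture weights are exactly $\tfrac{\rho}{1+\rho}$ and $\tfrac{1}{1+\rho}$, since these are the unique weights making the two constraint terms cancel given the worst-case values $g_{t,i}(\vxi^*)=1$ and $g_{t,i}(\vxi^\circ)=-\rho$.
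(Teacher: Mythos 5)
Your proposal is correct and follows essentially the same argument as the paper's proof: the paper also introduces the mixture $\bar\vxi$ with weights $\tfrac{1}{1+\rho}$ on $\vxi^\circ$ and $\tfrac{\rho}{1+\rho}$ on $\vxi^*$, verifies round-by-round feasibility $g_t(\bar\vxi)\le\mathbf 0$, applies the primal no-regret bound against this comparator, and combines it with the same dual lower bound $\sum_{t=1}^{T_1}\langle\vlambda_t,g_t(\vx_t)\rangle\ge(T-T_1)-\tfrac{1}{\tilde\rho}\cumd_{T_1}$ (which the paper re-derives in place rather than citing Equation~\eqref{eq:gain0}, but by the identical two-case argument). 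Your observation that no concentration of the adversarial constraints is needed, and that the $\eta$ failure probability comes only from the primal regret bound, also matches the paper exactly.
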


\begin{proof}
	Let $\bar \vxi \in \Xi$ be a strategy mixture obtained by playing with probability $1/(1+\rho)$ the mixture $\vxi^\circ$ and with the remaining probability $\rho/(1+\rho)$ an optimal mixture $\vxi^*$.
	Notice that the probabilities are well defined, since $\rho \ge0$. Then, for every $t \in [T]$ and $i \in [m]$, it holds:
	\begin{align*}
	\frac{1}{1+\rho}g_{t,i}(\vxi^\circ) + \frac{\rho}{1+\rho}g_{t,i}(\vxi^*) \leq -\frac{\rho}{1+\rho} + \frac{\rho}{1+\rho} = 0
	\end{align*}
	where the inequality follows from the fact that $g_{t,i}(\vxi^\circ)\le -\rho$ and $g_{t,i}(\vxi^*)\le 1$.
	Therefore, for every $t \in [T]$ and $i \in [m]$, it holds that $g_t(\bar \vxi)\le 0$.
	Assume that the regret bounds of the regret minimizers hold. Notice that this happens with probability at least $1-\eta$. Then, by the no-regret property of the primal regret minimizer, we have that 
	\begin{align} \label{eq:reg prim adv}
	\sum_{t = 1}^{T_1} \cL_{f_t,g_t}(\vx_t,\vlambda_t) \ge \sum_{t = 1}^{T_1}\cL_{f_t,g_t}(\bar \vxi,\vlambda_t)-\mleft(1+\frac{2}{\tilde\rho}\mright)\cump_{T_1,\eta}.
	\end{align}
	
	Let $i^\star \in \argmax_{i \in [m]} \sum_{t =1}^{T_1} g_{t,i}(\vx_t)$ be one of the most violated constraints during the play phase.
	Next, we prove that 
	\begin{align*}%
	\sum_{t =1}^{T_1} \langle \vlambda_t, g_t(\vx_t) \rangle \ge (T-T_1) - \frac{1}{\tilde \rho} \cumd_{T_1}.
	\end{align*}
	We consider two cases. If $T_1=T$, then 
	\begin{align*}
	\sum_{t = 1}^{T_1} \langle \vlambda_t, g_t(\vx_t) \rangle \ge \sum_{t = 1}^{T_1}  \langle \mathbf{0}, g_{t}(\bar \vxi)\rangle - \frac{1}{\tilde \rho} \cumd_{T_1} = - \frac{1}{\tilde \rho} \cumd_{\tau_2} =  (T-T_1) - \frac{1}{\tilde \rho} \cumd_{T_1}.
	\end{align*}
	Otherwise, we have that $ \sum_{t = 1}^{T_1} g_{t,i^\star}(\vx_t) \ge \tilde \rho(T-T_1)$ and 
	\begin{align} \label{eq:dual guarantees}
	\sum_{t = 1}^{T_1} \langle \vlambda_t, g_{t}( \vx_t) \rangle \ge \mleft( \sum_{t = 1}^{T_1} \frac{1}{\tilde \rho} g_{t,i^\star}(\vx_t) \mright) - \frac{1}{\tilde \rho} \cumd_{T_1}\ge (T-T_1) - \frac{1}{\tilde \rho} \cumd_{T_1}.
	\end{align}
	Thus,
	\begin{align*}
	\sum_{t = 1}^{T_1} f_t(\vx_t) &\ge \sum_{t = 1}^{T_1} \Big( f_t(\bar \vxi) - \langle \vlambda_t, g_t(\bar \vxi) \rangle + \langle \vlambda_t, g_t(\vx_t) \rangle \Big) -\mleft(1+\frac{2}{\tilde\rho}\mright)\cump_{T_1,\eta}\\
	&\ge \sum_{t = 1}^{T_1} \Big( f_t(\bar \vxi) - \langle \vlambda_t, g_t(\bar \vxi)\rangle \Big) +(T-T_1) -\mleft(1+\frac{2}{\tilde\rho}\mright)\cump_{T_1,\eta}-  \frac{1}{\tilde \rho}\cumd_{\tau_2}\\
	&\ge \sum_{t = 1}^{T_1}  f_t(\bar \vxi)  +(T-T_1) -\mleft(1+\frac{2}{\tilde\rho}\mright)\cump_{T_1,\eta}-  \frac{1}{\tilde \rho}\cumd_{\tau_2}\\
	&\ge \sum_{t = 1}^{T_1} \mleft(  \frac{1}{1+\rho}f_t(\vxi^\circ) +  \frac{\rho}{1+\rho}f_t(\vxi^*) \mright) +(T-T_1) -\mleft(1+\frac{2}{\tilde\rho}\mright)\cump_{T_1,\eta}-  \frac{1}{\tilde \rho}\cumd_{\tau_2}\\
	&\ge \frac{\rho}{1+\rho} \sum_{t = 1}^{T_1}   f_t(\vxi^*)  +(T-T_1) -\mleft(1+\frac{2}{\tilde\rho}\mright)\cump_{T_1,\eta}- \frac{1}{\tilde \rho}\cumd_{T_1},
	\end{align*}
	where the first inequality follows from Equation~\eqref{eq:reg prim adv}, the second one from Equation~\ref{eq:dual guarantees}, the third one from the fact that for each $t \in [T]$ it holds $g_t(\bar \vxi)\le 0$, while the fourth inequality follows from the definition of $\bar \vxi$.
	This concludes the proof.
\end{proof}
Notice that, for a small $T_1$, we have a large lower bound on the cumulative reward. Intuitively, this means that when the play phase is short, the primal regret minimizer accumulated so much regret in the play phase that the recovery phase can be addressed without worrying about the reward.

As a second step, we provide an upper bound on the cumulative constraints violation during the recovery phase. 
In particular, we show that the constraints are satisfied by at least $\rho$ at each round up to a term related to the regret of $\cRp$ and $\cRd$.
\begin{restatable}{lemma}{lemmaSmallViolationAdv}\label{lm:smallViolationAdv}
	With probability at least $1-\eta$, when Algorithm~\ref{alg:meta alg known} halts it holds that for each $i \in [m]$:
	\begin{align*}
		\sum_{t = T_1+1}^T g_{t,i}(\vx_t)\le -(T-T_1) \rho + \cumd_{ T-T_1} + 2 \cump_{T-T_1, \eta}.
	\end{align*}
\end{restatable}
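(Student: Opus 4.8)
The plan is to mirror the argument of Lemma~\ref{lm:smallViolation}, but to exploit the \emph{adversarial} definition of $\rho$ in order to dispense with the Azuma--Hoeffding terms: here the constraint functions $g_t$ are observed exactly and the feasibility margin is required to hold round by round, so no concentration step is needed. First I would fix one of the most violated constraints over the recovery phase, $i^\star \in \argmax_{i \in [m]} \sum_{t = T_1+1}^T g_{t,i}(\vx_t)$, so that proving the claimed bound for $i^\star$ immediately yields it for every $i \in [m]$.

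The starting point is the adversarial definition of $\rho$, which furnishes a strategy mixture $\vxi^\circ \in \Xi$ with $g_{t,i}(\vxi^\circ) \le -\rho$ \emph{simultaneously} for all $t \in [T]$ and all $i \in [m]$. Since in the recovery phase the dual iterates $\vlambda_t$ produced by $\cRddue$ lie in the simplex $\Delta_m$, this gives $\langle \vlambda_t, g_t(\vxi^\circ)\rangle \le -\rho$ for every $t$, and hence $(T-T_1)\rho \le -\sum_{t = T_1+1}^T \langle \vlambda_t, g_t(\vxi^\circ)\rangle$. It is precisely this per-round feasibility of $\vxi^\circ$ against each $g_t$ that removes the $\err$ term appearing in Lemma~\ref{lm:smallViolation}, where the margin holds only for the average $\bar g$ and must be transferred to each $g_t$.

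Next I would chain two no-regret arguments. By the no-regret property of $\cRpdue$, which in the recovery phase ($v=0$) maximizes $\vx \mapsto -\langle \vlambda_t, g_t(\vx)\rangle$ over a utility range of width $2$, I can replace $\vxi^\circ$ by the played $\vx_t$ at the cost of $2\cump_{T-T_1,\eta}$, obtaining $-\sum_{t=T_1+1}^T \langle \vlambda_t, g_t(\vxi^\circ)\rangle \le -\sum_{t=T_1+1}^T \langle \vlambda_t, g_t(\vx_t)\rangle + 2\cump_{T-T_1,\eta}$. Then, by the no-regret property of the dual minimizer $\cRddue$ on $\Delta_m$, comparing $\vlambda_t$ against the fixed vertex $e_{i^\star}$ gives $-\sum_{t=T_1+1}^T \langle \vlambda_t, g_t(\vx_t)\rangle \le -\sum_{t=T_1+1}^T g_{t,i^\star}(\vx_t) + \cumd_{T-T_1}$. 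Combining the three inequalities and rearranging yields $\sum_{t=T_1+1}^T g_{t,i^\star}(\vx_t) \le -(T-T_1)\rho + \cumd_{T-T_1} + 2\cump_{T-T_1,\eta}$, which by the choice of $i^\star$ holds for every $i \in [m]$.

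As for the probability of success, the dual RM has deterministic regret guarantees, so the only failure event is the regret bound of the primal RM $\cRpdue$, which is instantiated with failure probability $\eta$; the whole bound therefore holds with probability at least $1-\eta$, as claimed. I do not anticipate a genuine obstacle here: the argument is structurally simpler than its stochastic counterpart, and the only point demanding care is the justification that $\vxi^\circ$ is feasible against \emph{every} $g_t$ with margin $\rho$, which is exactly the content of the adversarial definition of the feasibility parameter.
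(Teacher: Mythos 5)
Your proof is correct and follows essentially the same route as the paper's: the same chain of three inequalities (per-round feasibility of $\vxi^\circ$ with margin $\rho$ paired with $\vlambda_t \in \Delta_m$, the no-regret bound of $\cRpdue$ over the width-$2$ utility range costing $2\cump_{T-T_1,\eta}$, and the no-regret bound of $\cRddue$ against the vertex $e_{i^\star}$ costing $\cumd_{T-T_1}$), concluded by the same observation that only the primal regret bound is probabilistic. Your explicit remark on why the adversarial definition of $\rho$ removes the Azuma--Hoeffding term present in Lemma~\ref{lm:smallViolation} is a point the paper leaves implicit, but the argument is identical.
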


\begin{proof}
	Let $i^\star$ be one of the most violated constraints, \emph{i.e.,} $i^\star \in \argmax_{i\in [m]} \sum_{t = T_1 +1}^{T} g_{t,i}(\vx_t)$. Then, we have that:
	\begin{align*}
	(T-\tau)\rho & \le -\sum_{t = T_1 +1}^{T} \langle \vlambda_t,  g_t(\vxi^\circ) \rangle  \\
	&\le - \sum_{t = T_1 +1}^{T} \langle \vlambda_t ,g_t(\vx_t) \rangle+ 2 \cump_{T-T_1,\eta} \\
	&\le  - \sum_{t = T_1 +1}^{T}  g_{t,i^\star}(\vx_t) +\cumd_{ T-T_1} + 2 \cump_{T-T_1,\eta},
	\end{align*}
	where the first inequality follows from the definition of $\vxi^\circ$ and the fact that it is always feasible of at least $\rho$, the second one follows from the assumption that the primal regret minimizer satisfies the regret bound, and the last inequality from the guarantee on the regret of the dual regret minimizer.
	We conclude the proof by noticing that the regret bound of the primal regret minimizer holds with probability at least $1-\eta$.
\end{proof}

Now, we can provide our bounds for adversarial constraints.

\theoremAdv*
\begin{proof}
		In the following, we assume that both Lemma \ref{lm:smallRegAdv} and Lemma \ref{lm:smallViolationAdv}. By an union bound, this holds with probability $1-2\eta=1-\frac{2}{3} \delta$.
	Then, it  holds 
	\begin{align*}
		\sum_{t= 1}^{T} f_t(\vx_t) & \ge \sum_{t=1}^{T_1} f_t(\vx_t) \\
		& \ge  \sum_{t=1}^{T_1}  \frac{\rho}{1+\rho}f_t(\vxi^\star) +(T-T_1) -\mleft(1+\frac{2}{\tilde\rho}\mright)\cump_{T_1,\eta}- \frac{1}{\tilde \rho}\cumd_{T_1} \\
		&\ge \frac{\rho}{1+\rho} \sum_{t=1}^{T} f_t(\vxi^\star)   -\mleft(1+\frac{2}{\tilde\rho}\mright)\cump_{T_1, \eta}- \frac{1}{\tilde \rho}\cumd_{T_1}\\
		&\ge \frac{\rho}{1+\rho} \sum_{t=1}^{T} f_t(\vxi^\star)   -\mleft(1+\frac{2}{\tilde\rho}\mright)\cump_{T, \eta}- \frac{1}{\tilde \rho}\cumd_{T},
	\end{align*}
where the second inequality comes from Lemma \ref{lm:smallRegAdv}. This proves the bound on the regret.
	
	By Lemma \ref{lm:smallViolationAdv}, for each $i \in [m]$, 
	\begin{align*}
		\sum_{t =1}^{T} g_{t,i}(\vx_t) &= 
		\sum_{t =1}^{T_1} g_{t,i}(\vx_t) +\sum_{t = T_1+1}^{T} g_{t,i}(\vx_t) \\
		&\le (T-T_1) \tilde \rho + M_{\tilde \rho} -(T-T_1) \rho + \cumd_{ T-T_1} + 2 \cump_{T-T_1,\eta}\\
		&\le M_{\tilde \rho}+  \cumd_{ T-T_1} + 2 \cump_{T-T_1,\eta}\\
		&\le M_{\tilde \rho}+  \cumd_{ T} + 2 \cump_{T,\eta},
	\end{align*}
	where the second inequality comes from $ \tilde \rho\le \rho$.
	
\end{proof}

\corollaryAdv*

\begin{proof}
	It is easy to see that Theorem \ref{thm:advAdv} can be extended to consider the definition of $\vxi^\star$ for stochastic rewards.
	formally, it holds $\sum_{t= 1}^{T} f_t(x_t)\ge \frac{\rho}{1+\rho} \sum_{t=1}^{T} f_t(\vxi^\star)   -\mleft(1+\frac{2}{\tilde\rho}\mright)\cump_{T, \eta}- \frac{1}{\tilde \rho}\cumd_{T}$.
	Consider the two martingale difference sequences $\sum_{t=1}^T f_t(\vx_t)- \bar f(\vx_t)$ and $\sum_{t} f_t(\vxi^\star)- \bar f(\vxi^\star)$. We can apply Azuma-Hoeffding inequality to prove that, with probability at least $1-\eta$, it holds $\sum_{t} |f_t(\vx_t)- \bar f(\vx_t)|\le \err_{T,\eta}$ and $\sum_{t} |f_t(\vxi^\star)- \bar f(\vxi^\star)|\le \err_{T,\eta}$.
	Then, 
	\begin{align*}
		\sum_{t = 1}^{T} \bar f(\vx_t) &\ge \sum_{t = 1}^{T}  f_t(\vx_t) -\err_{T,\eta} \\
		&\ge  \frac{\rho}{1+\rho} \sum_{t = 1}^{T} f_t(\vxi^\star)  -\mleft(1+\frac{2}{\tilde \rho}\mright)\cump_{T,\eta}- \frac{1}{\tilde \rho} \cumd_T - \err_{T,\eta}\\
		&\ge \frac{\rho}{1+\rho} \sum_{t = 1}^{T} \bar f(\vxi^\star)  -\mleft(1+\frac{2}{\tilde \rho}\mright)\cump_{T,\eta}- \frac{1}{\tilde \rho} \cumd_T - 2 \err_{T,\eta}, 
	\end{align*}
	proving the statement.
\end{proof}

\subsection{Proofs omitted from \Cref{sec:unknown}}

\lemmaGoodEstimator*
\begin{proof}
	By Azuma-Hoeffding inequality, we have that with probability at least $1-\delta$, for each $i \in [m]$ it holds $\big |\sum_{t = {1}}^{T_0}  g_{t,i}(\vx_t)-\bar g_i(\vx_t)\big|$.
	Hence, 
	\begin{align*}
		- \max_{i \in [m]}\sum_{t = 1}^{T_0}  g_t(\vx_t) \le - \max_{i \in [m]}\sum_{t = 1}^{T_0}   \bar g(\vx_t) + \err_{T_0,\delta}\le T_0  \bar g(\vxi^\circ) + \err_{T_0,\delta}=T_0\rho + \err_{T_0,\delta},
	\end{align*}
	where the second and third inequality follow from the definition of $\vxi^\circ$.
	Then, 
	\begin{align*}
		\hat  \rho &= - \frac{1}{ T_0} \left( \max_{i \in [m]}\sum_{t=1}^{T_0}g_{t,i}(\vx_t) + \err_{ T_0,\delta} \right)\\
		&\le \frac{1}{ T_0} \left( T_0 \rho + \err_{ T_0,\delta}- \err_{ T_0,\delta} \right) \\
		&= \rho.
	\end{align*}
	This concludes the proof.
\end{proof}

\lemmaSuperGoodEstimator*

\begin{proof}
	First, notice that with probability $1-\delta$, the primal regret minimizer has regret bounded by $\cump_{T_0,\delta}$. Moreover, by the Azuma-Hoeffding inequality, it holds $\left|\sum_{t=1}^{T_0} \vlambda_t g_t(\vxi^\circ) - \vlambda_t \bar g(\vxi^\circ)\right| \le \err_{T_0,\delta}$ with probability $1-\delta$.
	Consider the case in which both the conditions hold. This happens with probability at least $1-2 \delta$ by a union bound.
	
	Then,  \begin{align*}
		- \max_{i\in [m]} \sum_{t=1}^{T_0} g_t(\vx_t) &\ge - \sum_{t=1}^{T_0} \langle \vlambda_t, g_t(\vx_t) \rangle - \cumd_{T_0}\\
		&\ge - \sum_{t=1}^{T_0} \langle \vlambda_t, g_t(\vxi^\circ) \rangle - \cumd_{T_0}- 2\cump_{T_0,\delta} \\
		&\ge -\sum_{t=1}^{T_0} \langle \vlambda_t , \bar g(\vxi^\circ) \rangle - \cumd_{T_0}- 2\cump_{T_0,\delta}- \err_{T_0,\delta}\\
		& \ge  T_0 \rho - \cumd_{T_0}- 2\cump_{T_0,\delta}- \err_{T_0,\delta}.
	\end{align*}

	Hence, \begin{align*}
		\hat \rho &= - \frac{1}{ T_0} \left( \max_{i \in [m]}\sum_{t=1}^{T_0}g_{t,i}(\vx_t) + \err_{ T_0,\delta} \right)  \\
		&\ge  \frac{1}{ T_0} \left( T_0 \rho- \cumd_{T_0}- 2\cump_{T_0,\delta}- \err_{T_0,\delta}- \err_{ T_0,\delta} \right)\\
		& \ge \rho/2 + \frac{1}{ T_0} \left( T_0 \rho/2- \cumd_{T_0}- 2\cump_{T_0,\delta}-  2 \err_{T_0,\delta}\right)\\ &\ge  \rho/ 2,
	\end{align*}
	where the last inequality comes from Condition~\ref{ass:strictly}. This concludes the proof.
\end{proof}

\end{document}